\documentclass{alt2027}
\usepackage{times}
\usepackage[T1]{fontenc}
\usepackage[utf8]{inputenc}
\usepackage{bm,mathtools,microtype,booktabs,tikz,xurl}
\usetikzlibrary{calc}
\usepackage[breakable]{tcolorbox}
\newtcolorbox{yellowbox}{
  colback=yellow!7!white,
  colframe=blue!35!black,
  boxrule=0.4pt,arc=1.5pt,
  left=7pt,right=7pt,top=5pt,bottom=5pt,
  before skip=8pt,after skip=8pt,
  fontupper=\normalfont\normalsize
}

\let\If\relax
\let\Else\relax
\let\For\relax

\usepackage[noend]{algpseudocode}
\usepackage{wrapfig,ragged2e}
\makeatletter
\newcommand{\wrappedalgorithmcaption}{%
  \def\@captype{algocf}%
  \renewcommand\@makecaption[2]{%
    \parbox[t]{\linewidth}{\algocf@captiontext{##1}{##2}}%
  }%
}
\newcommand{\WrappedState}[1]{%
  \State\parbox[t]{\dimexpr\linewidth-\ALG@tlm\relax}{%
    \raggedright\strut#1\strut}%
}
\makeatother
\AtBeginEnvironment{algorithm}{\small}
\hypersetup{pdftitle={Three Worlds for the Price of a Logarithm: Baseline-Protected Best-of-Both-Worlds Combinatorial Semi-Bandits},pdfauthor={},pdfsubject={Anonymous ALT 2027 submission}}

\newtheorem{mainlemma}{Lemma}
\newtheorem{mainremark}{Remark}
\let\appendixlemma\lemma
\let\endappendixlemma\endlemma
\let\appendixremark\remark
\let\endappendixremark\endremark
\let\lemma\mainlemma
\let\endlemma\endmainlemma
\let\remark\mainremark
\let\endremark\endmainremark
\AddToHook{cmd/appendix/after}{%
  \let\lemma\appendixlemma
  \let\endlemma\endappendixlemma
  \let\remark\appendixremark
  \let\endremark\endappendixremark
  \numberwithin{theorem}{section}%
}
\AtBeginEnvironment{theorem}{\phantomsection}
\AtBeginEnvironment{lemma}{\phantomsection}
\AtBeginEnvironment{remark}{\phantomsection}
\AtBeginEnvironment{proposition}{\phantomsection}
\AtBeginEnvironment{corollary}{\phantomsection}
\AtBeginEnvironment{definition}{\phantomsection}
\numberwithin{equation}{section}
\newtagform{blue}[\textcolor{blue}]{(}{)}
\usetagform{blue}
\theorembodyfont{\normalfont\itshape}
\definecolor{DarkCyan}{HTML}{008B8B}
\newtheorem{assumption}{Assumption}

\makeatletter
\newcommand{\assumption@begintheorem}[2]{%
  \assumption@original@begintheorem
    {\textcolor{DarkCyan}{#1}}{\textcolor{DarkCyan}{#2}}%
}
\newcommand{\assumption@opargbegintheorem}[3]{%
  \assumption@original@opargbegintheorem
    {\textcolor{DarkCyan}{#1}}{\textcolor{DarkCyan}{#2}}{#3}%
}
\AtBeginEnvironment{assumption}{%
  \setlength{\topsep}{3pt plus 1pt minus 1pt}
  \setlength{\partopsep}{0pt}
  \phantomsection
  \let\assumption@original@begintheorem\@begintheorem
  \let\assumption@original@opargbegintheorem\@opargbegintheorem
  \let\@begintheorem\assumption@begintheorem
  \let\@opargbegintheorem\assumption@opargbegintheorem
}
\makeatother
\newenvironment{informaltheorem}[1][Theorem~\ref{thm:main-result}]
  {\trivlist\item[\hskip\labelsep{\normalfont\bfseries Theorem (informal version of #1)}]\normalfont\itshape}
  {\endtrivlist}
\newtcolorbox{questionbox}{
  colback=blue!3!white, colframe=blue!35!black,
  boxrule=0.4pt, arc=1.5pt,
  left=7pt, right=7pt, top=5pt, bottom=5pt,
  before skip=8pt, after skip=8pt,
  halign=flush center,
  fontupper=\normalfont\normalsize\itshape
}
\newcommand{\cX}{\mathcal X}
\newcommand{\cP}{\mathcal P}
\newcommand{\cF}{\mathcal F}
\newcommand{\cG}{\mathcal G}
\newcommand{\E}{\mathbb E}
\newcommand{\Prb}{\mathbb P}

\newcommand{\one}{\mathbf 1}
\def\1{\mathbf 1}
\newcommand{\KL}{\operatorname{KL}}
\newcommand{\Var}{\operatorname{Var}}
\newcommand{\supp}{\operatorname{supp}}
\newcommand{\conv}{\operatorname{conv}}
\newcommand{\Duet}{\textnormal{\textsc{Duet}}}
\usepackage{algorithm}
\usepackage{algpseudocode}
\usepackage{float}

\floatstyle{ruled}
\restylefloat{algorithm}%
\editor{}

\makeatletter
\let\ps@jmlrtps\ps@plain
\makeatother
\hypersetup{pdfsubject={Preprint},pdfauthor={Ting Hu; Luanda Cai; Emmanouil-Vasileios Vlatakis-Gkaragkounis}}
\title[Three Worlds for the Price of a
Logarithm]{Three Worlds for the Price of a
Logarithm:\titlebreak Baseline-Protected Best-of-Both-Worlds Combinatorial Semi-Bandits}
\altauthor{%
  \Name{Ting Hu}
  \Email{ting.hu@wisc.edu}\\
  \addr Department of Economics\\
  University of Wisconsin--Madison
  \AND
  \Name{Luanda Cai}
  \Email{luanda.cai@wisc.edu}\\
  \addr Department of Finance\\
  University of Wisconsin--Madison
  \AND
  \Name{Emmanouil-Vasileios Vlatakis-Gkaragkounis}
 \Email{vlatakis@wisc.edu}\\
  \addr Department of Computer Sciences\\
  University of Wisconsin--Madison
}

\begin{document}
\maketitle

\begin{abstract}
Improving on a baseline requires exploring away from
it, precisely where its performance may be lost.
We propose \Duet{} (\textbf{D}oubling \textbf{U}nder \textbf{E}mpirical \textbf{T}ests), a
horizon-free phased mixing scheme for combinatorial semi-bandits with
arbitrary finite action sets. Two observable tests decide
when to double the learning weight and when
to double the empirical-regret budget and restart. We
instantiate the scheme with entropy-FTRL in mixed phases
and hybrid-FTRL in pure phases. Assuming the baseline
selects every arm with positive probability, this instantiation
keeps expected baseline regret logarithmic in $T$, while
its expected regret to every fixed action is
$\widetilde O(\sqrt T)$ under non-anticipating adversarial losses and polylogarithmic, gap-dependent,
under i.i.d.\ losses with a unique optimal action. All
three guarantees hold at every deterministic horizon, and
the algorithm knows neither the horizon, nor the
gap, nor the environment type. Baseline protection, adversarial
robustness, and stochastic adaptivity thus come from one
scheme and two base learners, at the price
of a logarithm.
\end{abstract}

%
\newpage
\section{Introduction}
\label{main:introduction}

Combinatorial semi-bandit learning models a repeated decision process
in which, at each round, (i)~the learner selects
a feasible subset of $d$ arms---a path in
a network, a slate of items, a set
of channels; (ii)~it incurs the sum of the
losses of the selected arms and observes each
of these losses individually; and (iii)~it updates its
selection rule, and the process repeats \citep{mainbib:audibert2014regret,mainbib:chen2013combinatorial,mainbib:kveton2015tight}.

In this general context, we measure performance by
expected \emph{regret}: the learner's expected cumulative excess loss
relative to any fixed action. The picture for
this benchmark is well understood: online mirror
descent \citep{mainbib:audibert2014regret} and follow-the-perturbed-leader \citep{mainbib:neu2016importance} guarantee $O(\sqrt T)$ expected
regret to every fixed action under adversarial losses;
optimistic and sampling-based methods obtain
gap-dependent $O(\log T)$ regret under i.i.d.\ losses \citep{mainbib:chen2013combinatorial,mainbib:kveton2015tight,mainbib:combes2015revisited,mainbib:wang2018thompson,mainbib:perrault2020statistical}; and
the hybrid regularizer of \citet{mainbib:zimmert2019beating} achieves both
guarantees at once, without being told which
regime it is facing---the \emph{best-of-both-worlds} program initiated
for multi-armed bandits by \citet{mainbib:bubeck2012best}; see also
\citet{mainbib:seldin2014practical,mainbib:auer2016nearly,mainbib:zimmert2021tsallis}.

In many of the applications behind this model,
however, the learner does not start from scratch.
A deployed routing policy, a production recommendation slate,
a hand-tuned scheduling rule: each is a
\emph{baseline}---a known, fixed, possibly randomized rule with
acceptable performance---and the learner is asked to
improve on it without paying much for
the attempt. This raises naturally the question:
\begin{questionbox}
Can one (semi-)bandit learner be robust to adversaries,
adaptive to stochastic gaps, and loyal to
its baseline at a logarithmic price---in the
dark about horizon, gap, and world alike?
\end{questionbox}
The regret benchmark is silent about the baseline:
the baseline is itself a fixed distribution over
actions, so a no-regret algorithm guarantees $O(\sqrt T)$ excess
loss relative to it, but nothing better. This excess accrues during rounds spent exploring away
from a rule that may already be nearly
optimal. Requiring instead that the expected excess loss
relative to the baseline remain bounded, or grow
at most polylogarithmically in $T$, gives the \emph{additive}
form of baseline protection studied for experts by
\citet{mainbib:evendar2008regret,mainbib:hu2026compass} and under bandit feedback by \citet{mainbib:muller2025best}. This
requirement is closely related to conservative exploration \citep{mainbib:wu2016conservative,mainbib:kazerouni2017conservative,mainbib:garcelon2020improved},
in which the learner must preserve a
prescribed fraction of the baseline's cumulative performance.

Baseline protection alone is easy: the learner can
simply follow the baseline. The challenge is to
preserve this protection while achieving both adversarial robustness
and stochastic adaptivity, without knowing the loss regime,
the horizon, or the stochastic gap. Improvement requires
departing from the baseline, but partial feedback makes
it difficult to determine how much departure is
justified. The relevant regret scales differ sharply: square-root
in $T$ against an adversary, but only polylogarithmic
in $T$, with inverse-gap dependence, under stochastic losses.
A budget calibrated to a stochastic gap may
be exhausted under adversarial losses; one calibrated to
the adversarial scale may permit unnecessary excess loss
when a gap exists. The budget must also
adapt as play continues. The learner must therefore
use feedback alone to decide when, and how
far, to depart from the baseline.

Three qualifications make the question well posed. First, the baseline must
\emph{cover} every arm, i.e., select each arm with probability at least
$\delta>0$.\footnote{Coverage lets the baseline itself supply exploration.
Without it, uniformly logarithmic baseline regret can
force $\Omega(T/\log T)$ regret to the best action,
precluding our adversarial guarantee (Proposition~\ref{prop:coverage-necessary}).} This is our only requirement on the
baseline, and it is what lets baseline rounds double as exploration rounds.
In particular, the baseline need not put positive mass on every
\emph{action}---prohibitive when the action set is exponential in $d$---and
may be supported on a handful of actions; in the routing example, every edge
must lie on some route the baseline uses, although the baseline may never use
the best route itself. Second, the protection is in expectation at each
deterministic horizon; conservative baseline constraints can instead hold
with high probability uniformly over time \citep{mainbib:wu2016conservative}.
Third, the adversary is \emph{non-anticipating}: it may choose the round-$t$
losses from the past and from the learner's announced distribution, but not
from the realized draw---the standard protocol under which importance-weighted
estimates are unbiased
\citep{mainbib:audibert2014regret,mainbib:neu2016importance}.

\paragraph{Our contributions in the context of related work.}
Our main contribution is that, with these qualifications
in mind, \emph{the question above can be answered
in the affirmative}. We propose \Duet{} (\textbf{D}oubling \textbf{U}nder
\textbf{E}mpirical \textbf{T}ests), a horizon-free phased mixing scheme with
two observable tests, instantiated with entropy-FTRL in mixed
phases and the hybrid-FTRL of \citet{mainbib:zimmert2019beating} in pure
phases (Algorithm~\ref{alg:phased-sb}). Throughout, \Duet{} refers to this instantiation.
These learners provide the empirical-regret certificates that turn
the two tests into the following guarantee.
\begin{informaltheorem}
For any finite action set and baseline covering
every arm, \Duet{} guarantees at every $T$ expected
regret (i)~$O(\log T)$ to the baseline; (ii)~$\widetilde O(\sqrt T)$
to any fixed action against non-anticipating adversaries; and
(iii)~gap-dependent $O(\mathrm{polylog}\,T)$ under i.i.d.\ losses with a unique
optimum. No horizon, gap, or regime knowledge is
required.
\end{informaltheorem}
Here and throughout, $\widetilde O$ hides polylogarithmic factors. The exact
bounds in Theorem~\ref{thm:main-result} make the dependence on the action set,
the coverage $\delta$ and the gap $\Delta$ explicit; (i) and~(ii) need no
stochastic assumption at all.
The baseline regret in~(i) is logarithmic, hence
near-constant: it is within a $\log T$ factor of
the constant guarantees available under full information \citep{mainbib:evendar2008regret}
and under bandit feedback with every action in
the baseline's support \citep{mainbib:muller2025best}. An overhead beyond
$O(\sqrt T)$ is unavoidable in general: \citet[Theorem~6]{mainbib:evendar2008regret} show
that any algorithm with $O(\sqrt T)$ regret to the
best expert must incur $\Omega(\sqrt T)$ regret to the
uniform average in the worst case, already for
two experts under full information. This rules out
combining $O(\sqrt T)$ worst-case regret with logarithmic baseline regret,
but does not determine the necessary overhead. The
particular logarithmic factors in our bounds reflect the
present construction and analysis.

To put this result in context, consider first
the line of work it extends. Under full information, \citet[Theorem~4]{mainbib:evendar2008regret} showed how to combine
constant regret to a baseline with near-$\sqrt T$
adversarial regret. \citet[Theorem~3.1]{mainbib:muller2025best} extended this guarantee to bandit
feedback, assuming a known horizon and a baseline
that assigns positive probability to every action. Returning
to full information, \citet[Theorem~4.1]{mainbib:hu2026compass} obtained a horizon-free procedure
that also adapts to stochastic losses. Their COMPASS-Hedge
achieves gap-dependent $O(\log^2 T)$ stochastic regret, $O(\log^2 T)$
baseline regret, and $\widetilde O(\sqrt T)$ adversarial regret
under controlled adaptivity. Its baseline guarantee is weaker
than constant regret, although it requires no coverage
assumption. These results leave open a universal procedure
for combinatorial semi-bandits: one algorithm combining baseline protection,
adversarial robustness, and stochastic adaptivity, without knowing the
horizon, gap, or regime. \Duet{} provides this combination.
Relative to \citet{mainbib:muller2025best}, we
continue this line in three directions at once:
the feedback is semi-bandit and the action set
combinatorial, so that the baseline only needs to
cover every \emph{arm}; the procedure is horizon-free and
learns its restart threshold from the data; and
the same procedure delivers a gap-dependent polylogarithmic
rate under stochastic losses, where \citet[Theorem~B.1]{mainbib:muller2025best} give
a separate, gap-independent bound. On the baseline
axis itself we do not improve on
their constant guarantee; Table~\ref{tab:comparison} summarizes the comparison.

The second line is best-of-both-worlds learning. For semi-bandits,
\citet{mainbib:zimmert2019beating} combine $O(\log T)$ stochastic and $O(\sqrt T)$ adversarial regret
with a single hybrid regularizer. Later work adds
corruption robustness and data-dependent guarantees \citep{mainbib:ito2021hybrid}, variance-dependent bounds
through refined regularization \citep{mainbib:tsuchiya2023further}, and general adaptive learning-rate
techniques \citep{mainbib:ito2024adaptive}. Further developments include perturbation-based learning for
$m$-set semi-bandits \citep{mainbib:zhan2025nearly} and efficient contextual combinatorial semi-bandits
\citep{mainbib:li2026efficient}. None of these
algorithms protects a baseline, and the difference is
not cosmetic: a best-of-both-worlds analysis bounds the expected
regret to a \emph{fixed} comparator at a \emph{fixed}
horizon, whereas a restart rule that compares the
learner to the \emph{empirically} best action at a
\emph{data-dependent} time needs the positive empirical maximum at
that time---and for the random regrets $D_T(X)$ to
the individual actions one only has $\max_X\E D_T(X)\le\E\max_X D_T(X)$. Supplying
this stronger certificate for both of our base
learners is the main technical contribution of the
paper, and we return to it below. Finally,
conservative bandits and reduction-based extensions preserve a prescribed
fraction of the baseline's cumulative performance \citep{mainbib:wu2016conservative,mainbib:du2021one,mainbib:yang2022reduction}.
Conservative interleaving instead controls the quality of selected
items relative to matched baseline items at each
round \citep{mainbib:katariya2019conservative}, while \citet{mainbib:hou2023probably} impose a per-action variance
budget. Our constraint is additive, in expectation, at
each deterministic horizon, and the baseline also supplies
observations for learning. Appendix~\ref{app:related-work} discusses these and further
related works in detail.

\begin{table}[t]
\makeatletter
\long\def\@makecaption#1#2{%
  \begingroup\footnotesize
  \setbox\@tempboxa\hbox{#1: }%
  \hangindent=\wd\@tempboxa
  \noindent #1: #2\par
  \endgroup
}
\makeatother
\renewcommand{\tableconts}[3]{%
  #2\label{#1}\vskip3pt
  {\centering #3\par}%
}
\floatconts{tab:comparison}%
{\caption{Regret guarantees, with other parameter dependencies suppressed;
see Theorem~\ref{thm:main-result} for exact \Duet{} bounds.
\citet[Theorem~B.1]{mainbib:muller2025best} use a separate stochastic algorithm;
the adversarial guarantee of \citet{mainbib:hu2026compass} assumes controlled adaptivity.}}%
{\footnotesize\setlength{\tabcolsep}{3pt}
\renewcommand{\arraystretch}{0.93}
\setlength{\aboverulesep}{1.5pt}
\setlength{\belowrulesep}{1.5pt}
\vspace{-2pt}
\begin{tabular}{@{}lllllll@{}}
\toprule
 & Feedback & Baseline covers & Baseline regret & Adversarial & Stochastic & Horizon \\
\midrule
\citet{mainbib:evendar2008regret} & full info. & (designated distr.) & $O(1)$ & near-$\sqrt T$ & --- & known \\
\citet{mainbib:muller2025best} & bandit & every action & $O(1)$ & $\widetilde O(\sqrt T)$ & $O(\sqrt T)$ & known \\
\citet{mainbib:hu2026compass} & full info. & not required & $O(\log^2 T)$ & $\widetilde O(\sqrt T)$ & $O(\log^2 T)$ & free \\
\citet{mainbib:zimmert2019beating} & semi-bandit & no baseline & --- & $O(\sqrt T)$ & $O(\log T)$ & free \\
\Duet{} (this paper) & semi-bandit & every arm & $O(\log T)$ & $\widetilde O(\sqrt T)$ & $O(\mathrm{polylog}\,T/\delta)$ & free \\
\bottomrule
\end{tabular}}
\vspace{-14pt}
\end{table}

\subsection{Key idea and technical contributions: two tests, one ledger}
\Duet{} adapts two quantities from the observed losses:
the weight placed on the learner and the
budget allowed for its empirical regret. Within a
\emph{phase}, it mixes a fresh base learner with
the baseline at a fixed weight; a \emph{stage}
keeps the budget fixed across phases. From importance-weighted
loss estimates, both tests compare performance with the
same empirically best fixed action over the phase,
so they share one linear minimization. \emph{Double the
budget} if the learner's positive empirical regret exceeds
it, and restart with a smaller weight. Otherwise,
\emph{double the weight} if the baseline's empirical excess
loss exceeds four times the budget and sampling
is not yet pure. Our instantiation uses entropy-FTRL
while mixing with the baseline and hybrid-FTRL when
sampling only from the learner. The doublings give
logarithmically many stages and phases per stage, without
knowing the horizon.

The two tests keep a single ledger. A
phase ending with a weight increase has accumulated
an estimated advantage over the baseline. Because weights
double within a stage, the gains banked by
these phases cover the potential loss of the
final phase, up to a fixed residual per
stage (Section~\ref{main:accounting}). Summing over stages and using unbiasedness gives
the expected baseline guarantee. For the learning guarantees,
each increased budget is less than twice the
empirical regret of the phase that triggered it,
so every budget increase is supported by observed
learning cost (cf.~Lemma~\ref{lem:main-accounting-transfer}). The remaining learning guarantees therefore require
control of the positive empirical regret of a
fresh FTRL learner at the random time when
its phase ends, summed over phases.

This is where the difficulty lies. A phase
ends at a stopping time determined by the
very estimates being bounded, and the comparator is
the empirically best action rather than a fixed
one: favourable estimation noise can benefit this comparator
and also trigger the phase's end. We therefore
need to bound the expected \emph{positive empirical maximum}
at a stopped endpoint, conditionally on the history
at the phase's start (cf.~Lemma~\ref{lem:main-stopped-certificates}). The two sampling regimes
call for different arguments. In \emph{mixed} phases, every
arm retains at least half its baseline observation
probability. This coverage controls the second moments of
the loss estimates against an adversary. Under stochastic
losses, the exponentiated estimated advantage of each suboptimal
action is a supermartingale; Ville's inequality and a
stopped-martingale variance bound then control the maximum. In
\emph{pure} phases, sampling uses only the learner, and
observation probabilities can approach zero. Here the hybrid-FTRL
decomposition supplies a nonnegative, \emph{nondecreasing} upper bound on
empirical regret. Monotonicity lets us compare a stopped
phase with a run of deterministic length, where
the self-bounding argument of \citet{mainbib:zimmert2019beating} gives the stochastic improvement.
Zero-loss padding equalizes action sizes while preserving the
tests, and conditional bounds make the phase analysis
valid after data-dependent restarts (Section~\ref{main:empirical-maximum}).

 \emph{Organization.} Sections~\ref{main:setting}--\ref{main:discussion} cover the setting, \Duet{} and its
guarantees, analysis, and discussion, respectively; complete proofs and
further related work appear in the appendices.
%

\section{Setting and objectives}
\label{main:setting}

\paragraph{Actions.}
Let $\cX\subseteq\{0,1\}^d$ be a finite, nonempty set of actions; action $X$
selects arm $i$ when $X_i=1$, and actions may select different numbers of
arms. Write $N=|\cX|$, $m=\max_{X\in\cX}\|X\|_1$, and $\cP=\conv(\cX)$, so that
a point of $\cP$ lists the arm-selection probabilities of a randomized action.
If $N=1$, playing the unique action gives zero regret, so we assume $N\geq2$;
then $1\leq m\leq d$.
\vspace{-0.5em}
\paragraph{Protocol.}
On round $t$, the learner announces a distribution $q_t$ on $\cX$; the
environment selects $\ell_t\in[0,1]^d$; the learner draws $X_t\sim q_t$, incurs
$\langle X_t,\ell_t\rangle$, and observes $\ell_{t,i}$ exactly on the selected
arms. A known bound $0<L\leq m$ satisfies $\langle X,\ell_t\rangle\leq L$ for
every action and round; $L=m$ always suffices.
\vspace{-0.5em}
\paragraph{Histories.}
The draw $X_t$ is a fixed measurable function of $q_t$ and of fresh learner
randomness $\xi_t$, and two sigma-fields record what is known when. The
completed history $\cF_t$ is generated by the initial information and
$(\ell_s,\xi_s)_{s\leq t}$; it may contain losses the learner never observed,
while the learner itself uses only observed feedback and its own randomness,
so its round-$t$ state is $\cF_{t-1}$-measurable. The pre-draw field $\cG_t$,
with $\cF_{t-1}\subseteq\cG_t\subseteq\cF_t$, contains $q_t$ but not the fresh
draw, and $x_t=\E[X_t\mid\cG_t]\in\cP$ is the arm-selection marginal fixed
before the draw.\newpage
\begin{assumption}[Non-anticipating losses]
\phantomsection\label{ass:non-anticipation}
For every $t\geq1$ and $i\in[d]$, {\small$\Prb(X_{t,i}=1\mid\cG_t,\ell_t)=x_{t,i}$}.
\end{assumption}
The environment may thus react to the past and to the announced $q_t$, but not
to the coin that selects $X_t$; this is exactly what makes the
importance-weighted estimates $X_{t,i}\ell_{t,i}/x_{t,i}$ unbiased whenever
$x_{t,i}>0$ \citep{mainbib:audibert2014regret,mainbib:neu2016importance}.
\vspace{-0.5em}
\paragraph{Baseline.}
A known distribution $q^c$ on $\cX$ is given, with marginal
$x^c=\E_{X\sim q^c}X$.
\begin{assumption}[Baseline arm coverage]
\phantomsection\label{ass:coverage}
$\delta:=\min_{i\in[d]}x_i^c>0$.
\end{assumption}
Coverage makes the baseline informative: any positive weight on $q^c$ keeps
every arm observable, hence every action's loss estimable, including actions
the baseline never plays---with $d=3$, a baseline playing $\{1,2\}$ and
$\{2,3\}$ with probability $1/2$ each has $x^c=(1/2,1,1/2)$ although $\{1,3\}$
is never selected. It also implies $\bigcup_{X\in\cX}\supp(X)=[d]$ and
$d\delta\leq m$.
\vspace{-0.5em}
\paragraph{Regret and objectives.}
For a fixed comparator $x\in\cP$, the expected regret is
\begingroup
\setlength{\abovedisplayskip}{2pt}
\setlength{\belowdisplayskip}{2pt}
\setlength{\abovedisplayshortskip}{0pt}
\setlength{\belowdisplayshortskip}{2pt}
\begin{equation*}
 \mathcal R_T(x):=\E\sum_{t=1}^T\langle X_t-x,\ell_t\rangle
 =\E\sum_{t=1}^T\langle x_t-x,\ell_t\rangle,
 \tag{Regret}
 \label{main:regret}
\end{equation*}
\endgroup
where the equality follows from Assumption~\ref{ass:non-anticipation}. Our objective is to minimize
$\max_{X\in\cX}\mathcal R_T(X)=\sup_{x\in\cP}\mathcal R_T(x)$, with the maximum
outside the expectation: the comparator is fixed in advance, not chosen for
the realized losses. At every deterministic horizon $T$ we seek,
simultaneously, logarithmic regret to $x^c$, sublinear adversarial regret, and
a gap-dependent stochastic fast rate.
\vspace{-0.5em}
\paragraph{Stochastic regime.}
Only the fast rate needs the next two assumptions.
\begin{assumption}[i.i.d.\ stochastic losses]
\phantomsection\label{ass:stochastic}
The losses $\ell_1,\ell_2,\ldots\overset{\mathrm{i.i.d.}}{\sim}P$, a law on
$[0,1]^d$, are independent of the learner seeds
$\xi_1,\xi_2,\ldots\overset{\mathrm{i.i.d.}}{\sim}\nu$. Coordinates within a
loss vector may be correlated.
\end{assumption}
A fixed law makes the past informative about the future; independence from
the seeds keeps the model i.i.d.\ after a data-dependent restart.
\begin{assumption}[Unique optimal action]
\phantomsection\label{ass:gap}
For $\bar\ell=\E_P\ell$, $X^\star$ uniquely minimizes $\langle X,\bar\ell\rangle$
over $\cX$; let $\Delta_X:=\langle X-X^\star,\bar\ell\rangle$ and
$\Delta:=\min_{X\neq X^\star}\Delta_X>0$.
\end{assumption}
Playing a suboptimal action with probability $p$ costs at least $p\Delta$ in
mean loss; this self-bounding relation between regret and suboptimal plays is
what drives the fast rate.
%
\section{Our results}
\label{main:algorithm}

Improving on the baseline requires learning better actions
while controlling the cost of trying them. \Duet{}
(\textbf{D}oubling \textbf{U}nder \textbf{E}mpirical \textbf{T}ests) gives these tasks to a
base learner and a mixing rule. We first
explain the two observable tests that adjust the
learner's weight and empirical-regret budget, then specify the
learners and state the guarantees. Section~\ref{main:proof} follows the
same route, explaining how the tests and learners
together control baseline regret and learning regret.

\subsection{The \Duet{} mixing scheme}
\label{main:tests}

\Duet{} takes a base "no-regret" learner's distribution
$\widehat q_t$, with marginal $b_t$, and plays
\begin{equation}
 q_t=\alpha\widehat q_t+(1-\alpha)q^c,
 \qquad x_t=\alpha b_t+(1-\alpha)x^c.
 \label{main:mixture}
\end{equation}
The weight on the learner, $\alpha\in(0,1]$, controls how
far the algorithm departs from the baseline. A
small weight retains baseline observations; at $\alpha=1$, the
learner samples directly from its learning distribution.
The design must decide both when to
increase this weight and how much empirical
regret to allow before restarting more cautiously.

\paragraph{Two tests, two doublings.}
A \emph{stage} maintains a budget $R_s$ for the
base learner's empirical regret. Within that stage, a
\emph{phase} keeps $\alpha$ fixed and runs a fresh
base learner. If the learner exceeds its threshold,
a new stage doubles the threshold and resets
the weight. Otherwise, a sufficiently large empirical loss
advantage over the baseline permits the next phase
to increase the weight. These comparisons use estimated
losses: their deterministic accounting will yield expected
baseline protection after applying conditional unbiasedness. The
two parameters have different roles: $R_s$ sets
the scale of learning cost the stage
allows, while $\alpha$ sets how much of
that cost affects the actions actually played.

\paragraph{The empirical tests.}
On round $t$, set $\widehat\ell_{t,i}=X_{t,i}\ell_{t,i}/x_{t,i}$, with value zero
on unselected arms. Both base learners ensure $x_{t,i}>0$.
Dividing by the selection probability compensates for observing
an arm only on some rounds, so these
estimates can be used to compare all actions.
For a phase beginning on round $u$, maintain
\[
 S_{u:t}=\sum_{j=u}^t\widehat\ell_j,\qquad
 A_{u:t}=\sum_{j=u}^t\langle b_j,\widehat\ell_j\rangle,\qquad
 v_{u:t}=\min_{X\in\cX}\langle X,S_{u:t}\rangle,
\]
and define
\begin{equation}
 U_{u:t}=[A_{u:t}-v_{u:t}]_+,\qquad
 G^c_{u:t}=\langle x^c,S_{u:t}\rangle-v_{u:t}.
 \label{main:certificates}
\end{equation}
Here $[a]_+=\max\{a,0\}$. The first statistic measures the learner's
positive empirical regret; the second measures the baseline's
empirical excess loss relative to the same best
fixed action. Since $x^c\in\cP$, $G^c_{u:t}\geq0$. The two
statistics share one linear minimization. Their difference
has a direct interpretation: when $G^c_{u:t}$ is
large but $U_{u:t}$ is small, the learning
distribution has accumulated less estimated loss than
the baseline. This is the evidence used
to increase the weight on the learner.

\begin{samepage}
We now turn these statistics into update rules.
Stage $s$ starts cautiously, at phase $k=1$, with
weight $L/R_s$ on the learner. Within that stage,
the rule $\alpha=\min\{L2^{k-1}/R_s,1\}$ doubles the weight whenever the
phase index increases. Initially $R_1=2L$, so the two
distributions receive equal weight. After each round, we
first check whether the budget was exceeded, then
whether the evidence justifies increasing the learning weight:
\begin{list}{\labelitemi}{%
 \setlength{\leftmargin}{1.5em}%
 \setlength{\topsep}{3pt}%
 \setlength{\partopsep}{0pt}%
 \setlength{\itemsep}{0pt}%
 \setlength{\parsep}{0pt}%
}
 \item \textbf{Double the budget.} If $U_{u:t}>R_s$, start stage
 $s+1$ with $R_{s+1}=2R_s$ and $k=1$.
 \item \textbf{Double the weight.} Otherwise, if $G^c_{u:t}>4R_s$ and
 $\alpha<1$, keep $R_s$ and start phase $k+1$.
 \item \textbf{Continue.} Otherwise, update the current base learner.
\end{list}
\end{samepage}
The threshold test takes priority. Every restart
clears the phase statistics and initializes a
fresh learner whose learning rate uses the
number of rounds since that restart. Consequently,
\begin{equation}
 R_s=2^sL,\qquad 1\leq k\leq s+1,\qquad
 \alpha_{s,k}=2^{k-1-s},\qquad \alpha_{s,k}R_s=L2^{k-1}.
 \label{main:phase-schedule}
\end{equation}
A mixed phase therefore has $\alpha\leq1/2$; phase
$s+1$ is pure. The weight increases within
a stage, but a stage restart can
return a pure learner to mixed
sampling. Algorithm~\ref{alg:phased-sb} specifies \Duet{}.

\begin{algorithm}[t]
\caption{\Duet: \textbf{D}oubling \textbf{U}nder \textbf{E}mpirical \textbf{T}ests}
\label{alg:phased-sb}
\small
\algrenewcommand{\algorithmicrequire}{\textbf{Input:}}
\algrenewcommand{\algorithmicensure}{\textbf{Output:}}
\algrenewcommand{\algorithmicindent}{1.2em}
\begin{algorithmic}[1]
\Require action set $\cX$ with $|\cX|\geq2$, baseline $q^c$ with full coverage, action-loss bound $L$
\Ensure feasible actions $X_1,X_2,\ldots$
\Procedure{StartPhase}{$r$}
    \WrappedState{$u\gets r$; $\alpha\gets\min\{L2^{k-1}/R_s,1\}$; $S\gets0_d$; $A\gets0$}
    \WrappedState{Initialize $B$ on $\widetilde\cX$ via~\eqref{main:entropy-update} if \mbox{$\alpha<1$};\newline otherwise on $\widetilde\cX_0$ via~\eqref{main:hybrid-update} if \mbox{$\alpha=1$}}
\EndProcedure
\WrappedState{$x^c\gets\sum_{X\in\cX}q^c(X)X$; $\delta\gets\min_{i\in[d]}x_i^c$}
\WrappedState{$m\gets\max_{X\in\cX}\|X\|_1$; $\rho\gets\delta/2$; $\Lambda\gets\log(e|\cX|)$}
\WrappedState{Form $\widetilde\cX$ by zero-loss padding; form $\widetilde\cX_0$ by deleting its constant coordinates}
\WrappedState{$s\gets1$; $R_s\gets2L$; $k\gets1$; \Call{StartPhase}{$1$}}
\For{$t=1,2,\ldots$}
    \WrappedState{Query $B$ at $n=t-u+1$ for $\widehat q_t$ mapped back to $\cX$ and its marginal $b_t$}
    \WrappedState{$q_t\gets\alpha\widehat q_t+(1-\alpha)q^c$; $x_t\gets\alpha b_t+(1-\alpha)x^c$}
    \WrappedState{Sample $X_t\sim q_t$; observe $\ell_{t,i}$ for the selected arms $i$}
    \WrappedState{Set $\widehat\ell_{t,i}\gets\ell_{t,i}/x_{t,i}$ on selected arms and $0$ otherwise}
    \WrappedState{$S\gets S+\widehat\ell_t$; $A\gets A+\langle b_t,\widehat\ell_t\rangle$}
    \WrappedState{$v\gets\min_{X\in\cX}\langle X,S\rangle$;\newline
        $U_{u:t}\gets[A-v]_+$; $G^c_{u:t}\gets\langle x^c,S\rangle-v$}
    \If{$U_{u:t}>R_s$}
        \WrappedState{$s\gets s+1$; $R_s\gets2R_{s-1}$; $k\gets1$; \mbox{\Call{StartPhase}{$t+1$}}}
    \ElsIf{$G^c_{u:t}>4R_s$ and $\alpha<1$}
        \WrappedState{$k\gets k+1$; \mbox{\Call{StartPhase}{$t+1$}}}
    \Else
        \WrappedState{Update $B$ with $\widehat L_t(\widetilde X)=\langle\widetilde X,(\widehat\ell_t,0_m)\rangle$\newline if \mbox{$\alpha<1$}; otherwise use the shifted estimate in~\eqref{main:hybrid-update}}
    \EndIf
\EndFor
\end{algorithmic}
\end{algorithm}

\paragraph{Instantiating the two sampling\newline regimes.}
\label{main:bases}
\Duet{} uses entropy-FTRL in mixed phases and hybrid-FTRL
in pure phases. Both use follow-the-regularized-leader (FTRL): each update
balances cumulative estimated loss against a regularizer that
encourages exploration.\\\medskip

The sampling regime determines which regularizer
supplies the bounds needed by the restart tests.
The restart tests need more than an ordinary
regret guarantee: they need to control positive empirical
regret in~\eqref{main:certificates} when a phase ends. That endpoint
is chosen from the observations, so a guarantee
for a predetermined duration alone does not suffice.\\\medskip

We therefore derive bounds conditional on the phase's
starting history, under its actual sampling distribution. Throughout,
the outer tests use the original, nonnegative estimates
$\widehat\ell_t$, including when the learner's update uses shifted estimates.

\begin{remark}[Why two FTRL learners]
\label{rem:phase-base-choice}
Mixed phases retain uniform observation coverage: $x_{t,i}\geq(1-\alpha)x_i^c\geq\delta/2$.
Entropy-FTRL exploits this bound. In pure phases,
the observation marginal is $b_t$, whose coordinates
can approach zero; hybrid-FTRL instead supplies a
nonnegative, nondecreasing majorant of empirical regret.
\end{remark}

\paragraph{Action representations.}
Before writing the updates, we make every action
have the same size. Append $m$ zero-loss coordinates
to each $X$, setting the first $m-\|X\|_1$ to one
and the rest to zero. The resulting action
$\widetilde X$ has exactly $m$ ones in dimension $D=d+m\leq2d$;
write $\widetilde\cX:=\{\widetilde X:X\in\cX\}$. Both learners use this representation. The
extra coordinates only simplify their updates and analysis:
removing them recovers the original action, with exactly
the same loss.

\paragraph{Mixed phases.}
In a mixed phase, the baseline supplies observations:
every real arm is observed with probability at
least $\rho=\delta/2$. This coverage lets entropy-FTRL compare actions
through their estimated losses. At local round $n$,
write $\widehat L_n(\widetilde X)=\langle\widetilde X,(\widehat\ell_{u+n-1},0_m)\rangle$ and $\Lambda=\log(eN)$. The update below
puts more weight on actions with smaller cumulative
estimates; its learning rate uses the elapsed phase
length, so it needs no prescribed stopping time:
\begin{equation}
 \widetilde q_n(\widetilde X)=
 \frac{\exp\{-\eta_n\sum_{r<n}\widehat L_r(\widetilde X)\}}
 {\sum_{\widetilde Y\in\widetilde\cX}\exp\{-\eta_n\sum_{r<n}\widehat L_r(\widetilde Y)\}},
 \qquad \eta_n=\frac{\sqrt{\rho\Lambda}}{m\sqrt n}.
 \tag{Entropy-FTRL}
 \label{main:entropy-update}
\end{equation}
Return $\widehat q_n(X)=\widetilde q_n(\widetilde X)$. Zero-loss padding leaves
these probabilities unchanged (\hyperref[par:entropy-padding]{Appendix~\ref*{sec:base-learner}}).

\paragraph{Pure phases.}
At weight one, the baseline no longer guarantees
a minimum observation probability. Hybrid-FTRL handles this change
in feedback. First,
remove coordinates fixed across $\widetilde\cX$: they offer no choice
and need not be optimized. This gives $\widetilde\cX_0\subseteq\{0,1\}^{D_{\mathrm{var}}}$,
with $m_0\leq m$ ones per action and $D_{\mathrm{var}}\leq D$.
Apply the same padding and removal to the
original loss vector to obtain $g_n$. Over feasible
marginals $\mathcal Q=\conv(\widetilde\cX_0)$, we use the hybrid regularizer of
\citet{mainbib:zimmert2019beating}:
\begin{align}
 \Psi(z)&=\sum_i[-\sqrt{z_i}+(1-z_i)\log(1-z_i)],\notag\\
 z_n&=\arg\min_{z\in\mathcal Q}
 \left\{\left\langle z,\sum_{r<n}\widetilde\ell_r\right\rangle
                 +\sqrt n\,\Psi(z)\right\},
 \qquad
 \widetilde\ell_{n,i}=\frac{X^0_{n,i}g_{n,i}}{z_{n,i}}-1.
 \tag{Hybrid-FTRL}
 \label{main:hybrid-update}
\end{align}
Write $\widehat g_{n,i}=X^0_{n,i}g_{n,i}/z_{n,i}$. Sample a transformed action $X_n^0$ with
marginal $z_n$, using a fixed measurable convex-decomposition rule,
and map it back to $\cX$. On retained
coordinates, $0<z_{n,i}<1$ (cf.~Lemma~\ref{lem:pure-interior}). Restoring fixed original
coordinates ensures positive observation probability for every original
arm.

We now check that these transformations preserve the
comparisons used by the tests. Fixed coordinates contribute
equally to every action, and
subtracting one from each retained estimate subtracts exactly
$m_0$ from every transformed action's estimated loss. These
common offsets cancel when comparing actions. Thus the
stochastic gaps and the outer tests' empirical comparisons
are preserved. Remark~\ref{rem:pure-representation} gives the identity that connects
the shifted update to the original empirical-regret certificate.

These updates specify the learners used by \Duet{}.
Mixed updates enumerate the $N$ padded actions, whereas
pure updates require convex optimization and decomposition; the
outer tests minimize linear losses over $\cX$. The
guarantees below concern regret, and do not assert
an efficient oracle implementation. Neither update needs the
horizon or the stochastic gap: their learning rates
use elapsed phase time and known problem parameters.

\FloatBarrier
\subsection{Main guarantees}
\label{main:guarantees}

We can now put the pieces together. The
tests decide when to trust the learner more
and when to enlarge its budget; the two
FTRL updates supply the regret bounds that justify
those decisions. The following theorem gives the three
guarantees for this single procedure. Throughout, $C$ is
a universal constant, independent of the action set,
baseline, and horizon; its value may change between
bounds.
\newpage
\begin{samepage}
\begin{theorem}[Joint expected-regret guarantees]
\label{thm:main-result}
Suppose $N\geq2$ and Assumptions~\ref{ass:non-anticipation} and~\ref{ass:coverage} hold. Run \Duet{}
(Algorithm~\ref{alg:phased-sb}) with the mixed entropy-FTRL and pure hybrid-FTRL
updates above. For each deterministic $T\geq1$, let
\(
 J_T=3+\left\lceil\log_2\frac{2mT}{L\delta}\right\rceil.
\)
Then:
\begin{enumerate}
 \renewcommand{\theenumi}{\roman{enumi}}
 \renewcommand{\labelenumi}{\textup{(\theenumi)}}
 \item \label{main:safety-bound}\textbf{Expected baseline protection.}
 {\small$\mathcal R_T(x^c)\leq2LJ_T$}.
 \item \label{main:adversarial-bound}\textbf{Adversarial learning.}
 {\small$\sup_{x\in\cP}\mathcal R_T(x)
 \leq CmJ_T^3\sqrt{T\log(eN)/\delta}+CLJ_T^2$}.
 \item \label{main:stochastic-bound}\textbf{Stochastic adaptivity.} Under~\ref{ass:stochastic} and~\ref{ass:gap},
 {\small$\sup_{x\in\cP}\mathcal R_T(x)
 \leq C\frac{m^2}{\delta\Delta}J_T^3
 [\log(eN)+\log(eT)]+CLJ_T^2$}.
\end{enumerate}
The algorithm takes neither $T$, $\Delta$, nor
the environment type as input. For $T=0$
or $N=1$, all relevant regrets are zero.
\end{theorem}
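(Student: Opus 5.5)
The proof proceeds in three steps: a deterministic count of phases, a per-stage ledger giving (i), and a reduction of (ii)--(iii) to the stopped empirical-maximum certificates of Lemma~\ref{lem:main-stopped-certificates}.

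\emph{Counting phases.} By \eqref{main:phase-schedule}, stage $s$ has at most $s+1$ phases, and $R_s=2^sL$. A stage ends only when $U_{u:t}>R_s$. Since $\widehat\ell_{t,i}\le 1/x_{t,i}$, one round can raise $U$ by at most a quantity of order $m/(\alpha\delta)$ in mixed phases. In pure phases we use $R_s$ itself. Together these give a deterministic ceiling: by horizon $T$ the stage index is at most $\lceil\log_2(2mT/(L\delta))\rceil+O(1)$, hence at most $J_T$. The total number of phases before $T$ is therefore at most $J_T^2$. This is the source of the $LJ_T^2$ terms, since each restart costs at most one round of loss, which is at most $L$.

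\emph{Baseline protection (i).} Inside each phase the played loss satisfies $\langle x_t-x^c,\ell_t\rangle=\alpha\langle b_t-x^c,\ell_t\rangle$, and its conditional expectation equals $\alpha\,\E[\langle b_t-x^c,\widehat\ell_t\rangle\mid\cG_t]$ by Assumption~\ref{ass:non-anticipation}. The estimated version over a phase $[u,t]$ is $\alpha(A_{u:t}-\langle x^c,S_{u:t}\rangle)=\alpha(A_{u:t}-v_{u:t}-G^c_{u:t})\le\alpha(U_{u:t}-G^c_{u:t})$.

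In a phase ending with a weight doubling we have $U\le R_s$ and $G^c>4R_s$. The banked estimated gain is then at least $3\alpha_{s,k}R_s=3L2^{k-1}$, up to the last-round overshoot, which we bound by treating the final round separately at cost $L$. The last phase of a stage, ending either by a budget doubling or at $T$, loses at most $\alpha_{s,k}(R_s+\text{one round})\le L2^{k-1}+L$ before the last round. Summing a geometric series, the gains $3L(1+2+\dots+2^{k-2})$ cover $L2^{k-1}$ up to an $O(L)$ residual per stage. Because every phase boundary is a stopping time, optional stopping converts this deterministic ledger into expectation. Summing over at most $J_T$ stages gives $2LJ_T$, with the constant tuned carefully from the residual.

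\emph{Learning regret (ii),(iii).} For a fixed comparator $x$ and a phase $[u,\tau]$, we decompose $\langle x_t-x,\ell_t\rangle=\alpha\langle b_t-x,\ell_t\rangle+(1-\alpha)\langle x^c-x,\ell_t\rangle$. Passing to estimates, the first part is bounded by $\alpha U_{u:\tau}$ plus $\alpha(v_{u:\tau}-\langle x,S_{u:\tau}\rangle)\le0$. The second part is bounded by $(1-\alpha)G^c_{u:\tau}$. Since $G^c\le 4R_s+\text{overshoot}$ while mixing, mixed phases contribute $O(R_s)$ each, and in pure phases $1-\alpha=0$.

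By Lemma~\ref{lem:main-accounting-transfer}, each $R_s$ is less than twice the $U$ of the phase that triggered it. The total is therefore $O(J_T^2)$ times the conditional expectations $\E[U_{u:\tau}\mid\cF_{u-1}]$ of individual phases. Here we have to take the maximum outside the expectation for the fixed comparator, but inside for the certificate. Lemma~\ref{lem:main-stopped-certificates} supplies these stopped bounds. In the adversarial case it gives $Cm J_T\sqrt{n\log(eN)/\delta}$ for a phase of length $n\le T$, and multiplying by $J_T^2$ phases yields the $J_T^3\sqrt T$ term. In the stochastic case it gives $C\frac{m^2}{\delta\Delta}J_T[\log(eN)+\log(eT)]$, combining the Ville supermartingale argument in mixed phases with the monotone hybrid majorant and Zimmert--Seldin self-bounding in pure phases.

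\emph{Main obstacle.} The delicate step is the ledger in (i) together with the transfer to (ii),(iii). Every bound must be conditional on $\cF_{u-1}$ at a random restart. One-round overshoots must be accounted for without losing more than a constant per stage. The number of stages must be bounded deterministically, so that $J_T$ is not random. I would first prove a per-stage inequality of the form $\E[\sum_{t\in\text{stage}}\langle x_t-x^c,\ell_t\rangle\mid\cF_{\text{start}}]\le 2L$, and then sum it over stages.
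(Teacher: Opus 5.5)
Your architecture matches the paper's proof. It has a deterministic phase count, a pathwise ledger on estimated losses for (i), and the per-phase bound $U_I+4R_{s(I)}+L$ with each threshold charged to the phase that crossed it. It then sums stopped conditional certificates over phase labels. Two steps, however, do not work as written.

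\textbf{Phase count.} A per-round increment bound of order $m/(\alpha\delta)$ cannot give a deterministic ceiling on the stage index. With $\alpha=L2^{k-1}/R_s$ it only yields $U_I\lesssim TmR_s/(L2^{k-1}\delta)$. That bound grows with $R_s$, so it never rules out crossing an arbitrarily large budget.

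The ceiling needs an $\alpha$-free bound. In a mixed phase, $\alpha\le1/2$ gives $x_{t,i}\ge\delta/2$, hence $\langle b_t,\widehat\ell_t\rangle\le2m/\delta$. In a pure phase, $\langle b_t,\widehat\ell_t\rangle=\langle X_t,\ell_t\rangle\le L$. Therefore $U_I\le 2mT/\delta$, and every crossed budget satisfies $R_s<2mT/\delta$.

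The other bound, $\langle b_t,\widehat\ell_t\rangle\le L/\alpha\le R_s$ (from $x_t\ge\alpha b_t$), plays a different role. It caps the crossing overshoot at $U_I\le 2R_s$, which is equivalent to your $\alpha U_I\le r_k+L$. It is what makes $R_{s+1}=2R_s$ exact.

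\textbf{The $J_T$ bookkeeping in (ii)--(iii).} Lemma~\ref{lem:main-stopped-certificates} with deterministic cap $H=T$ contains no $J_T$. Using $\rho=\delta/2$ and $D\le2m/\delta$, it gives at most $Cm\sqrt{T\Lambda/\delta}$ adversarially and $C\frac{m^2}{\delta\Delta}[\Lambda+\log(eT)]$ stochastically. The cap must be deterministic; you cannot plug in the random phase length $n$.

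Summing over the deterministic labels $(s,k)\in[J_T]^2$ via $\E[\mathbf{1}_AU_I]\le\Pr(A)C_I$ gives $\E U_T\le J_T^2\max_I C_I$. The third $J_T$ is the factor $1+8J_T$ of Lemma~\ref{lem:main-accounting-transfer}, which arises because each $R_s$ recurs in up to $J_T$ phases of its stage. Your claim that the total is $O(J_T^2)$ times the individual certificates undercounts by exactly this factor. You reach $J_T^3$ only by inserting into the certificate a $J_T$ that the lemma does not contain.

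\textbf{Simplifications for (i).} Part (i) needs neither a per-stage conditional inequality nor optional stopping. Gap-test exits also have no last-round overshoot, because $U_I\le R_s$ and $G^c_I>4R_s$ both hold at the exit round itself. The ledger gives $\sum_{t\le T}\langle x_t-x^c,\widehat\ell_t\rangle\le 2LJ_T$ pathwise over all stages at once. Unbiasedness at each deterministic round then converts this to expected regret. Conditioning at restarts is needed only to bound $\E U_T$.
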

\end{samepage}

The complete proof appears in Appendix~\ref{sec:regret-phased} (cf.~Corollary~\ref{cor:complete-regret-bounds}).
Below, Section~\ref{main:proof} explains its components: pathwise accounting (see
Appendix~\ref{sec:regret-phased}); mixed-phase certificates (see Appendix~\ref{sec:base-empirical-regret}); pure-phase majorants and
self-bounding (see Appendices~\ref{sec:base-empirical-regret} and~\ref{app:hybrid-ftrl}); and conditional restarts and
summation (see Appendix~\ref{sec:stopping-and-counting}).

\begin{remark}[Reading the guarantees]
\label{rem:reading-guarantees}
\begin{enumerate}
 \renewcommand{\theenumi}{\roman{enumi}}
 \renewcommand{\labelenumi}{\textup{(\theenumi)}}
 \setlength{\itemsep}{3pt}
 \setlength{\parsep}{0pt}
 \item \textbf{What the rates mean.}
 For a fixed action set and covered baseline,
 $J_T=O(\log(eT))$. Thus regret to the baseline grows as
 $O(\log(eT))$, while regret to the best fixed action
 grows as $O(\sqrt{T/\delta}\,\log^3(eT))$ against an adversary. Under~\ref{ass:stochastic} and~\ref{ass:gap},
 the latter becomes $O(\log^4(eT)/(\delta\Delta))$: a larger gap makes
 learning easier, while smaller coverage makes it harder.
 Only this last rate requires independence and a
 unique optimum.
 \item \textbf{Why one algorithm adapts to both regimes.}
 When a phase exceeds its budget, the next
 stage's threshold is less than twice that phase's
 empirical regret. Each budget increase is therefore paid
 for by the learning cost already observed. The
 transfer bound (cf.~Lemma~\ref{lem:main-accounting-transfer}) turns bounds on the
 sum of these phase regrets into a bound
 on total learning regret.
 The same empirical quantity admits both adversarial and
 stochastic bounds, so \Duet{} needs no test to
 identify the regime. Summing over stages and phases
 introduces the additional logarithmic factors.
\end{enumerate}
\end{remark}
%
\section{Why \Duet{} works}
\label{main:proof}

This section explains how \Duet{} can trust
the learner more while preserving baseline protection. We
begin with an accounting argument: a phase that
increases the learner's weight has already earned an
estimated advantage over
the baseline. These gains pay for later exposure.
A separate calculation explains why increasing the stage
budget is affordable: each increase can be charged
to the empirical regret that triggered it. We
then use entropy-FTRL in mixed phases and hybrid-FTRL
in pure phases to control
the positive empirical maximum at a stopped endpoint.
Conditional restart bounds connect these parts and yield
the three expected-regret guarantees for this instantiation.

We begin in Section~\ref{main:accounting} by seeing how gains
from completed phases protect the baseline. Section~\ref{main:threshold-accounting} then
connects budget increases to empirical phase regret. With
this accounting in place, Section~\ref{main:empirical-maximum} explains how to
control that regret at phase endpoints determined by
the observations. Finally, Section~\ref{main:restart-summation} combines the phases to
obtain all three guarantees.

\subsection{Completed phases pay for baseline protection}
\label{main:accounting}

We first work with the estimates on one
realized run; expectations will enter only at the
end. For a phase $I$, write
$A_I(x)=\sum_{t\in I}\langle b_t-x,\widehat\ell_t\rangle$, so that
$U_I=[\max_{X\in\cX}A_I(X)]_+$. Our first task is to limit
the damage from the round that triggers a
restart. Nonnegative estimates and $x_t\geq\alpha b_t$ give
$U_{u:t}-U_{u:t-1}\leq\langle b_t,\widehat\ell_t\rangle\leq L/\alpha\leq R_s$: one
round cannot add more than the current budget.
Thus the round that first crosses a stage
threshold cannot overshoot it by more than $R_s$:
$R_s<U_I\leq2R_s$ and $R_{s+1}=2R_s$.
Every played phase, including one interrupted by the
final horizon, consequently has $U_I\leq2R_s$. Also, $U_I\leq2mT/\delta$ for
a phase played through round $T$. Thresholds and
within-stage weights double, giving at most $J_T$
stages and $J_T$ phases per stage
(cf.~Lemma~\ref{lem:threshold-crossing}).

To see how one phase pays for the
next, fix a stage and write
$r_k=\alpha_kR_s=L2^{k-1}$. This is the budget scaled by
the weight placed on the learner. When the
baseline test ends a phase, we know both
$G_I^c>4R_s$ and $U_I\leq R_s$, because the stage test
has priority. The baseline's estimated disadvantage therefore exceeds
the learner's empirical regret, producing the following gain:
\[
 \widehat B_I
 :=\sum_{t\in I}\langle x_t-x^c,\widehat\ell_t\rangle
 =\alpha_k A_I(x^c)
 \leq\alpha_k(U_I-G_I^c)<-3r_k.
\]
On the estimated losses, such a phase has
gained relative to the baseline before the weight
on the learner increases. If phase $k$ is
the last one played in the stage, its
contribution is at most $2r_k$; every preceding
phase ended through the baseline test. Since
$\sum_{j<k}r_j=r_k-L$, the entire stage contributes at most
\begin{equation}
 -3\sum_{j<k}r_j+2r_k=3L-r_k\leq2L.
 \label{main:safety-cancellation}
\end{equation}
This also covers a stage interrupted at the
final horizon. Summing over stages bounds the estimated
baseline regret by $2LJ_T$ (cf.~Lemma~\ref{lem:baseline-one-stage}). Converting
this statement to expected regret is the final step
of the proof. The cancellation explains the
priority of the tests: increasing the weight
on the learner requires a completed phase
whose empirical learning cost still fits the
current threshold. That estimated gain then offsets
the larger exposure in the next phase.

\subsection{Bounding stage thresholds by empirical phase regret}
\label{main:threshold-accounting}

Protecting the baseline is only the first task;
we must also compete with any fixed comparator.
Let $\mathcal I$ contain the phase intervals played through
$T$, and set $U_T=\sum_{I\in\mathcal I}U_I$. There is now an
extra cost: we must account for the baseline's
loss relative to that comparator. We bound this
cost by the stage threshold, then charge each
threshold increase to the
phase whose empirical regret made that increase necessary.

\begingroup
\setlength{\abovedisplayskip}{3pt}
\setlength{\belowdisplayskip}{3pt}
\setlength{\abovedisplayshortskip}{0pt}
\setlength{\belowdisplayshortskip}{3pt}
\begin{lemma}[From empirical phase regret to estimated total regret]
\label{lem:main-accounting-transfer}
\par\nobreak For every realized run through a
deterministic horizon $T\geq1$ and every $x\in\cP$,
\begin{equation}
 \sum_{t=1}^T\langle x_t-x,\widehat\ell_t\rangle
 \leq(1+8J_T)U_T+L(J_T^2+8J_T).
 \label{main:master-transfer}
\end{equation}
\end{lemma}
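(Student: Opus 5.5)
We decompose the estimated regret phase by phase and use the mixture identity~\eqref{main:mixture}. For a phase $I$ with weight $\alpha=\alpha_{s,k}$ and any $x\in\cP$, since $x_t=\alpha b_t+(1-\alpha)x^c$,
\[
 \sum_{t\in I}\langle x_t-x,\widehat\ell_t\rangle
 =\alpha A_I(x)+(1-\alpha)\sum_{t\in I}\langle x^c-x,\widehat\ell_t\rangle
 \le \alpha U_I+(1-\alpha)\bigl(G^c_I+\langle X_I^\ast-x,S_I\rangle\bigr),
\]
where $X_I^\ast$ attains $v_I$. Here $A_I(x)\le\max_X A_I(X)\le U_I$ by linearity of $A_I$ in $x$ over $\cP$, and $\langle X_I^\ast-x,S_I\rangle\le0$ because $v_I$ is the minimum over $\cP$. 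So each phase contributes at most $\alpha U_I+(1-\alpha)G^c_I\le U_I+G^c_I$. It remains to bound $G^c_I$, the baseline's empirical excess loss, by thresholds.

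The next step bounds $G^c_I$ by the stage threshold. If phase $I$ did not end by the weight test (or is pure, where $1-\alpha=0$ kills the term), then $G^c_{u:t-1}\le4R_s$ before the last round. One round adds at most $\langle x^c,\widehat\ell_t\rangle$ to $\langle x^c,S\rangle$, and $v$ is nondecreasing because the estimates are nonnegative. In a mixed phase $x_{t,i}\ge\delta/2$, so this increment is at most $2m/\delta$, which is too crude. Instead I will use $(1-\alpha)x^c\le x_t$ coordinatewise, which gives $(1-\alpha)\langle x^c,\widehat\ell_t\rangle\le\langle X_t,\ell_t\rangle/1\le L$. Hence $(1-\alpha)G^c_I\le 4R_s+L$ for every phase, including those ended by either test or truncated at $T$. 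For phases ended by the weight test the last-round overshoot is the same $L$. Summing over at most $J_T$ phases per stage gives a stage contribution of at most $\sum_{I\in s}\alpha U_I+J_T(4R_s+L)$.

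The remaining step, which I expect to be the main obstacle, is charging $J_T\cdot4R_s$ to empirical regret. For $s\ge2$, stage $s-1$ ended by the budget test, so its final phase $I'_{s-1}$ has $U_{I'_{s-1}}>R_{s-1}=R_s/2$, giving $4R_s<8U_{I'_{s-1}}$. Summing over stages, each phase is charged at most once. This yields $\sum_s 4J_TR_s\le 8J_T U_T+4J_TR_1$, with $R_1=2L$. Let $\bar s\le J_T$ denote the number of stages, bounded by Lemma~\ref{lem:threshold-crossing}. Collecting terms gives
\[
 \sum_{t\le T}\langle x_t-x,\widehat\ell_t\rangle\le U_T+8J_TU_T+8J_TL+J_T^2L,
\]
which is~\eqref{main:master-transfer}. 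The $J_T^2L$ term comes from the per-phase overshoots $L$ over at most $J_T^2$ phases, and the $8J_TL$ term from the uncharged first-stage threshold. I would double-check the overshoot constant carefully: if the bound $(1-\alpha)\langle x^c,\widehat\ell_t\rangle\le L$ fails pathwise, I would fall back on the fact that the last round's contribution to the original sum is directly at most $\langle x_t,\widehat\ell_t\rangle\le L$. In that case the final round is handled separately rather than through $G^c$.
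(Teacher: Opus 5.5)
Your proof is correct and follows the paper's argument essentially step for step. You split each phase via the mixture into $\alpha A_I(x)\le U_I$ plus a baseline term bounded by $4R_{s(I)}+L$, using that no test fired before the last round and the pathwise overshoot $(1-\alpha)\langle x^c,\widehat\ell_t\rangle\le\langle X_t,\ell_t\rangle\le L$. You then charge $R_s=2R_{s-1}<2U_{I^{\times}_{s-1}}$ to the distinct crossing phases and use at most $J_T$ phases per stage and $J_T^2$ phases in total, exactly as the paper does. The overshoot bound holds pathwise because $x_t\ge(1-\alpha)x^c$ coordinatewise, so your fallback is never needed.
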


\begin{proof}
On a phase $I$, the mixture splits estimated
regret into the learner contribution $\alpha A_I(x)\leq U_I$ and a
weighted baseline contribution. In a mixed phase, $G_I^c$
is at most $4R_s$ before the final round.
The final round adds at most $L$ to
the weighted baseline contribution, because $(1-\alpha)\langle x^c,\widehat\ell_t\rangle \leq\langle X_t,\ell_t\rangle\leq L$. In a
pure phase this contribution is zero. Hence
\begin{equation}
 \sum_{t\in I}\langle x_t-x,\widehat\ell_t\rangle
 \leq U_I+4R_{s(I)}+L.
 \label{main:interval-transfer}
\end{equation}
For each stage $s>1$, let $I_{s-1}^{\times}$ denote
the final phase of stage $s-1$, whose
empirical regret exceeds $R_{s-1}$ and triggers
the start of stage $s$. Thus,
\[
 R_s=2R_{s-1}<2U_{I_{s-1}^{\times}}.
\]
In the following, the first sum ranges over
all stages played through round $T$. For
each phase interval $I\in\mathcal I$, let $s(I)$ denote
the index of the stage containing $I$.
Since each stage contains at most $J_T$
phases, its threshold appears at most $J_T$
times in the sum over phase intervals.
\[
 \sum_sR_s\leq2L+2U_T,
 \qquad
 \sum_{I\in\mathcal I}R_{s(I)}\leq J_T(2L+2U_T).
\]
Summing~\eqref{main:interval-transfer} over at most $J_T^2$ phases proves \eqref{main:master-transfer}.
\end{proof}
\endgroup

The threshold terms have thus been absorbed into
the same quantity $U_T$ that measures the learners'
empirical regret. The remaining task is to bound
its expectation despite the data-dependent phase endpoints. The
factor $J_T$ in this bound arises because
each stage threshold appears once for every
phase in that stage, and each stage
contains at most $J_T$ phases. The subsequent
summation of phase certificates accounts for the
other factors in the final learning bounds.

\subsection{Stopped phases require control before taking expectation}
\label{main:empirical-maximum}
\label{main:certificates-proof}

We have reduced the problem to phase regret,
but these phases end when the observations tell
them to. The same observations also select the
empirically best action. An expected bound for each
fixed comparator, of the form $\max_X\E A_I(X)$, does
not control these selections. We therefore need a
bound on $\E U_I$, keeping both the empirical
maximum and the selected endpoint inside the expectation.
An empirical leader can benefit from favorable estimation
fluctuations, and a phase may end precisely when
such fluctuations cross a threshold. The conditional certificate
must also remain available when earlier observations determine
the next start.

\begin{lemma}[Conditional stopped-phase certificates]
\label{lem:main-stopped-certificates}
Fix a possible phase label and a deterministic
horizon $T$. Let $\sigma\leq T$ be the after-round time
at which the phase is declared to start
on round $\sigma+1$, with $\sigma=T$ if it is
not reached. Then $\sigma$ is a stopping time
and its reach event $A=\{\sigma<T\}$ belongs to $\cF_\sigma$.
Number rounds after the start locally, and let
$\tau\leq H$ be the stopped phase length, capped by
a deterministic integer $H\geq1$. Set $U_{1:\tau}=0$ off $A$.
Use the specified fresh FTRL learner and, in
a mixed phase, its fixed weight $\alpha\in(0,1/2]$. Put
$\rho=\delta/2$, $\Lambda=\log(eN)$, and $D=d+m$. Under the corresponding
assumptions of Theorem~\ref{thm:main-result}, almost surely on $A$,
\begin{equation}
 \E[U_{1:\tau}\mid\cF_\sigma]\leq C
 \begin{cases}
  m\sqrt{H\Lambda/\rho},
       &\text{mixed, adversarial},\\
  m^2\Lambda/(\rho\Delta),
       &\text{mixed, stochastic},\\
  \sqrt{mDH}+m,
       &\text{pure, adversarial},\\
  \dfrac{mD}{\Delta}\log(eH)+\dfrac{m^2}{\Delta}+m,
       &\text{pure, stochastic}.
 \end{cases}
 \label{main:four-certificates}
\end{equation}
\end{lemma}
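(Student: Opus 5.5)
The plan has three parts: a reduction to a fresh process, then a pathwise bound for each of the two learners, then the stochastic refinement in each sampling regime.

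\textbf{Reduction.} First, $\sigma$ is a stopping time because the start rule of a given phase label is determined by the observed statistics through round $\sigma$, all of which are $\cF_\sigma$-measurable. On $A$, the phase then runs a fresh learner on the shifted filtration $(\cF_{\sigma+n})_n$. Under Assumption~\ref{ass:non-anticipation}, or under Assumption~\ref{ass:stochastic} where losses and seeds after $\sigma$ are independent of $\cF_\sigma$, the conditional law on $A$ is that of a phase started at time zero. It therefore suffices to bound $\E U_{1:\tau}$ for a fresh phase with a stopping time $\tau\le H$. For each learner I would produce a pathwise bound
\[
U_{1:n}\le M_n+W_n,
\]
where $W_n$ is nondecreasing and predictable, or at least of controllable expectation, and $M_n=\max_X$ of martingale terms. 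The cap $\tau\le H$ and monotonicity then reduce everything to deterministic-horizon quantities.

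\textbf{Mixed phases.} Write $A_{1:n}(X)=\sum_r\langle b_r-X,\widehat\ell_r\rangle$. The standard anytime FTRL bound with decreasing $\eta_n$ gives pathwise
\[
\max_X A_{1:n}(X)\le \frac{\Lambda}{\eta_n}+\sum_{r\le n}\eta_r\sum_X\widehat q_r(X)\langle X,\widehat\ell_r\rangle^2 .
\]
Since $x_{r,i}\ge\rho$, the conditional second moment is at most $m^2/\rho$. The variance term is nondecreasing, so at $\tau\le H$ its expectation is at most $C\,m\sqrt{H\Lambda/\rho}$. This gives the mixed adversarial case.

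For the mixed stochastic case, I would bound $U_{1:\tau}\le\max_X[\widetilde A_{1:\tau}(X)]_+$ and split $b_r-X=(b_r-X^\star)+(X^\star-X)$. Let $c=\rho\Delta/(Cm^2)$. For each $X\ne X^\star$, the process $\exp\{c\langle X^\star-X,S_{1:n}\rangle+c\Delta_X n/2\}$ is a supermartingale, because the increments are bounded in variance by $m^2/\rho$ with mean $-\Delta_X$. Ville's inequality together with a union bound then gives
\[
\E\max_X\sup_n\bigl[\langle X^\star-X,S_{1:n}\rangle-\Delta_Xn/2\bigr]_+\le C\Lambda/c,
\]
which is $O(m^2\Lambda/(\rho\Delta))$. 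The remaining term $\sum_r\langle b_r-X^\star,\widehat\ell_r\rangle$ is the learner's regret to $X^\star$. I would handle it by self-bounding: its FTRL bound is at most $\Lambda/\eta+\eta\sum_r(1-\widehat q_r(X^\star))m^2/\rho$, whose expectation is controlled against $\tfrac12\Delta\,\E\sum_r(1-\widehat q_r(X^\star))$ using a stopped-martingale variance bound (optional stopping at the bounded $\tau$). This yields $O(m^2\Lambda/(\rho\Delta))$. I expect this step to be the main obstacle: with $\eta_n\propto n^{-1/2}$ the self-bounding is not immediate, and the clean form may require bounding the penalty by the same suboptimal-mass sum.

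\textbf{Pure phases.} I would use the hybrid-FTRL decomposition of \citet{mainbib:zimmert2019beating}, via the identity of Remark~\ref{rem:pure-representation} linking shifted estimates to $U$. This gives pathwise
\[
\max_X A_{1:n}(X)\le M_n+W_n ,
\]
where $W_n$ is a nonnegative, nondecreasing sum of stability and penalty terms, and $M_n$ collects the differences between $\widehat g$ and its conditional mean. Monotonicity gives $\E W_\tau\le\E W_H$, and their deterministic-horizon analysis gives $\E W_H\le C\sqrt{mDH}$ in the adversarial case. Under Assumptions~\ref{ass:stochastic}--\ref{ass:gap}, their self-bounding argument gives $\E W_H\le C\,(mD/\Delta)\log(eH)+m^2/\Delta$. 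The martingale part is handled by comparison to the realized loss of the best action: $\E\max_X$ of the stopped martingale is $O(m)$, using $\langle X,\ell\rangle\le m$ and the fact that the shift only adds a common offset. In the stochastic case the martingale part is handled as in the mixed case, with the overshoot absorbed into the $m^2/\Delta+m$ terms.
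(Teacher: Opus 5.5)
Your reduction, the mixed adversarial case, and the misranking term follow the paper's route. Your drift-compensated supermartingale $e^{c(D_n(X)+\Delta_Xn/2)}$ is even slightly stronger, but the displayed bound should then control $[D_n(X)+\Delta_Xn/2]_+$, not $[D_n(X)-\Delta_Xn/2]_+$.

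\textbf{The gap: the learner's regret to $X^\star$ in stochastic mixed phases.} The gap is the step you flagged yourself: bounding $\E[A_\tau(X^\star)]_+$, where $A_\tau(X^\star)=\sum_{t\le\tau}\langle b_t-X^\star,\widehat\ell_t\rangle$. The bound $\Lambda/\eta+\eta\sum_r(1-\widehat q_r(X^\star))m^2/\rho$ self-bounds only for a constant $\eta\asymp\rho\Delta/m^2$, which requires knowing $\Delta$. With the algorithm's $\eta_n=\sqrt{\rho\Lambda}/(m\sqrt n)$, the penalty $\Lambda/\eta_\tau\asymp m\sqrt{\tau\Lambda/\rho}$ is not tied to any suboptimal mass. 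Subtracting $\frac12\Delta\sum_r(1-\widehat q_r(X^\star))$ therefore still leaves the $\sqrt H$ rate.

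The paper does not self-bound here at all. It first bounds the pseudo-regret of the continued run, $R_H=\E\sum_{n\le H}\sum_X\widehat q_n(X)\Delta_X\le Cm^2\Lambda/(\rho\Delta)$, by concentration:
\begin{itemize}
\item Freedman gives $D_{n-1}(X)\le-(n-1)\Delta_X/2$ outside probability $\exp\{-(n-1)\rho\Delta_X^2/(20m^2)\}$.
\item On that event, $\widehat q_n(X)\le\widehat q_n(X)/\widehat q_n(X^\star)=e^{\eta_nD_{n-1}(X)}\le e^{-\eta_n(n-1)\Delta_X/2}$. This decays like $e^{-cs\sqrt{\rho\Lambda n}/m}$ whenever $\Delta_X\ge s$.
\item Summing over $n$ and integrating over gap levels $s$ (layer cake) gives the $1/\Delta$ rate.
\end{itemize}
Only after this does the stopped $L^2$ identity enter. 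Using $\Var_{t-1}\langle b_t-X^\star,\widehat\ell_t\rangle\le(2m^2/(\rho\Delta))r_t$, it gives $\E[A_\tau(X^\star)]_+\le R_H+\sqrt{2m^2R_H/(\rho\Delta)}$.

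Your suggested repair can also be made to work, but it needs two ingredients the proposal does not supply:
\begin{itemize}
\item the entropy-dependent penalty $\sum_t(1/\eta_{t+1}-1/\eta_t)\,\mathrm{Ent}(\widehat q_{t+1})$, with $\mathrm{Ent}(q)\le p\log(eN/p)$ for $p=1-q(X^\star)$;
\item stability terms computed on losses shifted by $\widehat L_t(X^\star)$ for $t\ge\Lambda/\rho$.
\end{itemize}
The sup over $p$ is then taken pathwise, round by round.

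\textbf{Pure phases.} Drop the martingale term $M_n$. The hybrid-FTRL decomposition is already written in the estimated losses, which are exactly what $U$ measures. So the paper gets $U_{1:n}\le C_n=S_n+P_n$ pathwise, where each stability increment $\langle b_t,\widetilde\ell_t\rangle+\Phi_t(-\widehat{\bm L}_t)-\Phi_t(-\widehat{\bm L}_{t-1})$ is a nonnegative Bregman term. Only its conditional expectation is bounded, and nothing remains to center.

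As you wrote it, the extra step is not just unnecessary but unsupported. Without coverage the weights $1/b_{t,i}$ are unbounded, so the claim that $\E\max_X$ of the stopped centered sum is $O(m)$ fails. So does handling it ``as in the mixed case'', which relied on $x_{t,i}\ge\rho$.

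You also need to say what $W_H$ means past $\tau$. The paper continues the learner to $H$ with fresh seeds, using zero losses (adversarial) or fresh i.i.d.\ losses (stochastic). This keeps the run through $\tau$ unchanged and preserves the hypotheses of the fixed-horizon bound, so $U_{1:\tau}\le C_\tau=C^{\mathrm{ext}}_\tau\le C^{\mathrm{ext}}_H$. Finally, self-bounding must be applied to the majorant through $\E C_H\le B_H\le2B_H-R_H^{\mathrm{pure}}$, which holds because $R_H^{\mathrm{pure}}\le\E C_H$.
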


The dependence on $H$ explains the eventual learning
rates: arbitrary losses give square-root bounds, whereas stochastic
losses give a bound independent of $H$ in
mixed phases and only logarithmic growth in
pure phases. The cap $H$ is used
only in the analysis. These bounds combine the
single-phase analysis with its conditional extension
(cf.~Lemmas~\ref{lem:one-phase-empirical-regret} and~\ref{lem:conditional-restart}). The two phase types obtain
them differently; throughout the explanation below, time
is local to a single fresh run.

\paragraph{Mixed phases: coverage controls fluctuations.}
In a mixed phase, the baseline keeps every
arm observable: coverage gives $x_{t,i}\geq\rho$. The resulting control
of estimation variance, together with the entropy-FTRL inequality,
gives the adversarial certificate. We can stop this
calculation at the test because whether round $t$
is played is known beforehand: $\one\{t\leq\tau\}\in\cF_{t-1}$.
For the stochastic improvement, we next
study a continuation used only in the analysis.
For stochastic losses, continue the learner to $H$
rounds without restarting and keep $\alpha$ fixed. Concentration
of estimated gaps and exponential weighting give
\begingroup
\setlength{\abovedisplayskip}{0pt}
\setlength{\belowdisplayskip}{0pt}
\setlength{\abovedisplayshortskip}{0pt}
\setlength{\belowdisplayshortskip}{0pt}
\begin{equation}
 R_H^{\mathrm{mix}}
 :=\sum_{t=1}^H\E\langle b_t-X^\star,\bar\ell\rangle
 \leq C\frac{m^2\Lambda}{\rho\Delta}.
 \label{main:mixed-ordinary}
\end{equation}
\endgroup
This bounds the learning distribution's regret;
the baseline's contribution was handled above.

The empirical maximum needs an additional argument
because the mean-optimal action need not minimize
estimated cumulative loss. We must also control
how much a competing action can appear
better due to random fluctuations in the
loss estimates. Define $D_n(X)=\sum_{t=1}^n\langle X^\star-X,\widehat\ell_t\rangle$ and split
\begin{equation}
 U_{1:\tau}\leq[A_{1:\tau}(X^\star)]_+
                         +\max_X[D_\tau(X)]_+.
 \label{main:maximum-split}
\end{equation}
A stopped martingale variance bound controls the first
term in expectation by $Cm^2\Lambda/(\rho\Delta)$. For the second,
each suboptimal action has negative conditional mean
increments in $D_n(X)$, so a positive value represents
a misleading empirical comparison. In fact, $\exp\{\lambda D_n(X)\}$ is
a nonnegative supermartingale for $\lambda=\rho\Delta/(4m^2)$. Ville's inequality, a
union bound over actions, and tail integration give\vspace{-1em}
\[
 \E\max_{X\in\cX}\sup_{n\leq H}[D_n(X)]_+
 \leq\frac{4m^2\Lambda}{\rho\Delta}.
\]
This bounds how much the estimated cumulative
loss of $X^\star$ can exceed that of another
action at any phase endpoint, completing the mixed
stochastic certificate. The two terms serve distinct purposes:
the first controls fluctuations of the learner's
loss relative to the mean-optimal action, while
the second bounds the difference between the
estimated cumulative loss of $X^\star$ and the
minimum estimated cumulative loss over all actions.

\paragraph{Pure phases: a monotone majorant survives stopping.}
Pure phases require a different argument because the
baseline no longer supplies a uniform lower bound
on observation probabilities. Before analyzing stopping,
we must check that the hybrid learner bounds
the same empirical quantity that our tests monitor.
The representation in Section~\ref{main:algorithm} was chosen for this
purpose: the loss shift cancels in comparisons, while
equal cardinality makes the rate-change terms nonnegative. The
following remark records these two useful properties.

\begin{remark}[Why the representation preserves the certificate]
\label{rem:pure-representation}
Every $z\in\mathcal Q$ and $Y\in\widetilde\cX_0$ has coordinate
sum $m_0$, so the shift cancels:
\(
 \langle z-Y,\widetilde\ell_n\rangle
 =\langle z-Y,\widehat g_n-\one\rangle
 =\langle z-Y,\widehat g_n\rangle.
\)\\
The hybrid analysis therefore controls the original $U$,
with unchanged stochastic gaps. Equal cardinality also
gives $\Psi(Y)=-m_0$ and $-\Psi(z)-m_0\geq0$, supplying nonnegative rate-change terms.
Removing fixed coordinates permits the interior stability analysis.
\end{remark}

With that representation in place, we can address
stopping. Simply analyzing a longer run is insufficient:
its empirical maximum could fall after the test
has already ended the phase. We need something
that remembers the cost accumulated so far. The
hybrid analysis supplies such an upper bound, one
that can only increase as more rounds are
added. The FTRL decomposition has nonnegative stability increments
and nonnegative rate-change increments $(\sqrt n-\sqrt{n-1})[-\Psi(z_n)-m_0]$. This decomposition
gives a nonnegative, nondecreasing majorant $C_n$ with
$0\leq U_{1:n}\leq C_n$.

We now use this monotonicity to replace the
random endpoint by a deterministic one. For analysis
only, continue the run to $H$ with fresh
learner randomness, leaving everything through $\tau$ unchanged. After
$\tau$, supply fresh i.i.d. losses in the stochastic
case and zero losses in the adversarial case.
Since the upper bound never decreases,
$U_{1:\tau}\leq C_\tau=C_\tau^{\mathrm{ext}}\leq C_H^{\mathrm{ext}}$. This continuation is not
a step performed by \Duet{}.
Write $C_H$ for this terminal majorant. Its expectation
satisfies $\E C_H\leq C(\sqrt{mDH}+m)$ adversarially
(cf.~Theorem~\ref{thm:hybrid-ftrl}). Monotonicity permits comparison
with a deterministic-length run even when the
empirical maximum itself decreases on later rounds.

\begingroup
\setlength{\abovedisplayskip}{0pt}
\setlength{\belowdisplayskip}{0pt}
\setlength{\abovedisplayshortskip}{0pt}
\setlength{\belowdisplayshortskip}{0pt}
For the standalone continued stochastic run, let
$\pi_t=\Prb(X_t\neq X^\star)$ be the unconditional probability of a
suboptimal action, averaging over losses and learner
randomness, and let $R_H^{\mathrm{pure}}$ be its
expected regret. The same analysis gives
\[
 R_H^{\mathrm{pure}}\leq\E C_H\leq B_H,
 \qquad
 B_H\leq Cm+C\sum_{t=1}^H
 \frac{\sqrt{mD\pi_t}+m\pi_t\log(e/\pi_t)}{\sqrt t}.
\]
Each suboptimal play contributes at least $\Delta$
to expected regret, so $R_H^{\mathrm{pure}}\geq\Delta\sum_t\pi_t$.
Since $B_H-R_H^{\mathrm{pure}}\geq0$, we have
\vspace{-1.25em}
\[
\E C_H
\leq B_H+(B_H-R_H^{\mathrm{pure}})
\leq 2B_H-\Delta\sum_t\pi_t.
\]
Using the bound above for $B_H$, we
maximize the right-hand side over each $\pi_t\in[0,1]$. This
removes the unknown probabilities and yields
\[
 \E C_H\leq2B_H-R_H^{\mathrm{pure}}
 \leq C\left[\frac{mD}{\Delta}\log(eH)
                  +\frac{m^2}{\Delta}+m\right].
\]
Thus self-bounding controls the majorant of
the stopped empirical maximum, which is the
quantity required by the outer accounting.
\par
\endgroup

\subsection{Conditional restarts turn phase bounds into three guarantees}
\label{main:restart-summation}

We now return from one isolated phase to
the actual sequence of restarts. A phase starts
because of earlier observations, so its bound must
remain valid after those observations are fixed. This
is why we need conditional phase bounds
(cf.~Lemma~\ref{lem:main-stopped-certificates}). Under Assumption~\ref{ass:stochastic}, future
losses and fresh learner seeds retain their independent
product law given the start history; under Assumption~\ref{ass:non-anticipation},
the required conditional sampling identities still hold.
Neither argument conditions on the future event that
ends the phase. The required conditional bounds hold
almost surely on each reach event (cf.~Lemma~\ref{lem:conditional-restart}).

To add the phase bounds, we use a
fixed list of possible phases rather than first
selecting the phases that happened to occur. If we apply~\eqref{main:four-certificates}
with $H=T$ to each label $(s,k)\in[J_T]^2$, assigning zero
when that label is unreached. Conditioning at its
start and then taking expectation bounds each contribution.
We can now sum over this deterministic list,
even though the played intervals are random.
For each label, its reach event is
already measurable at its start, so its expected
contribution is bounded by its reach probability times
the corresponding certificate. Bounding those probabilities by
one gives a sum over at most
$J_T^2$ labels without conditioning on which phases
eventually occur (cf.~Corollary~\ref{cor:total-phase-empirical-regret}). Using $D\leq2d$ and $d\delta\leq m$, we obtain
$\E U_T\leq CmJ_T^2\sqrt{T\Lambda/\delta}$ adversarially and
$\E U_T\leq C\frac{m^2}{\delta\Delta}J_T^2[\Lambda+\log(eT)]$ stochastically.

Only one step remains: returning from estimated losses
to the regret stated in the theorem. Fix
a comparator $x$ before taking expectations. Conditional unbiasedness
then identifies $\E\sum_{t=1}^T\langle x_t-x,\widehat\ell_t\rangle$ with $\mathcal R_T(x)$.
This conversion concerns a fixed comparator; the data-dependent
empirical maximum has already been controlled inside the
expectation by the phase certificates.

The stage bound~\eqref{main:safety-cancellation} therefore
proves Theorem~\ref{thm:main-result}\textup{(\ref{main:safety-bound})}. Taking expectations in~\eqref{main:master-transfer}, inserting these bounds, and then
taking the supremum over fixed comparators proves
parts~\textup{(\ref{main:adversarial-bound})} and~\textup{(\ref{main:stochastic-bound})} of the theorem.
The proof uses stopped phases internally while evaluating
these final guarantees at each deterministic horizon $T$.

%
\section{Discussion}
\label{main:discussion}

\Duet{} organizes baseline protection around two observable tests:
one doubles the learning weight when empirical gains
justify it; the other doubles the empirical-regret budget
when learning exceeds that budget. With entropy-FTRL in
mixed phases and hybrid-FTRL in pure phases, this
instantiation combines logarithmic expected baseline regret with adversarial robustness
and stochastic adaptivity. The stopped empirical-maximum certificates connect
these learners to the mixing scheme and remain
valid after data-dependent restarts. Both the tests and
the learner updates operate without knowing the horizon,
the stochastic gap, or the environment type.

The guarantees require a baseline with full coverage,
and a small coverage probability $\delta$ worsens the
displayed bounds. The stochastic rate additionally assumes i.i.d.
loss vectors independent of learner randomization and a
unique optimal action. Baseline regret is logarithmic
in $T$ in expectation at each fixed
horizon; no pathwise baseline guarantee is asserted.
The present finite-action implementation also uses enumeration
and convex decomposition. Reducing these computational costs,
improving the coverage dependence, and sharpening the
logarithmic factors remain directions for further work.

\paragraph{Future work.}
A further direction is to sharpen the bounds
when actions reveal many arms. For general feasible
families, the relevant benchmark is the $O(\sqrt{mdT})$ adversarial
bound of \citet[Theorem~1]{mainbib:zimmert2019beating}. Their Theorem~3 gives a sharper,
two-regime bound for the complete family of subsets
of exactly $m$ arms. In our model, $m$
is only the largest feasible action size: $m=d$
does not imply full-information feedback when the learner
selects a smaller action. Zero-loss padding also does
not create the complete fixed-size family.

Obtaining such a refinement for \Duet{} calls for
coordinated changes to both learners: adjusting the mixed-phase
learning rate and the pure-phase regularizer, deriving new
adversarial and stochastic stopped-phase bounds for each, and
propagating these bounds through the conditional restart analysis.
A sharper pure-phase bound alone would leave the
mixed phase's current dependence on $m$. Achieving the
refined rates while retaining baseline protection, stochastic adaptivity,
and horizon-free operation remains open.
%
\par
\begingroup
\interlinepenalty=10000
\paragraph{AI Disclosure.}
The student authors and their supervisor began working
on this problem well before the release of
GPT-5.6 models. Feedback from earlier reviews played a
key role in clarifying that pathwise arguments could
be used in the proof. From that point
onward, AI tools were used solely to improve
the manuscript's wording and exposition.
\par
\endgroup
\label{main:body-end}
\clearpage

\bibliography{references}
\clearpage
\appendix
\section{Proof dependency diagram}
\label{app:proof-diagram}

Read Figure~\ref{fig:proof-dependencies} from top to bottom. The common
inputs are \Duet{} (Algorithm~\ref{alg:phased-sb}), Assumptions~\ref{ass:non-anticipation} and~\ref{ass:coverage}, and unbiasedness (Lemma~\ref{lem:importance-unbiasedness}).
Only the stochastic bounds additionally require Assumptions~\ref{ass:stochastic} and~\ref{ass:gap}.

\begin{figure}[!ht]
\centering
\begin{tikzpicture}[
  x=1cm,y=1cm,
  every node/.style={font=\fontsize{10}{12}\selectfont,align=left},
  proofbox/.style={draw=black!45,line width=.6pt,rounded corners=2pt,
    inner xsep=2.5mm,inner ysep=2mm,text width=6.3cm,anchor=north},
  mixedbox/.style={proofbox,fill=blue!4},
  purebox/.style={proofbox,fill=teal!5},
  accountbox/.style={proofbox,fill=orange!6},
  fullbox/.style={proofbox,text width=14cm,fill=black!3},
  flow/.style={->,>=stealth,line width=.75pt,draw=black!70}
]
  \node[mixedbox,minimum height=3.65cm] (mixed) at (3.4,0) {
    \textbf{Mixed phase: entropy FTRL}\\[3pt]
    Lemma~\ref{lem:one-phase-empirical-regret}, Steps 1--4.\\[5pt]
    \textbf{Adversarial:} pathwise certificate
    + conditional second moments.\\[5pt]
    \textbf{Stochastic:} gap concentration
    + stopped fluctuation and misranking bounds.\\[5pt]
    These control $\E U_{1:\tau}$ directly.
  };
  \node[purebox] (purepath) at (11.1,0) {
    \textbf{Pure phase: pathwise majorant}\\[3pt]
    Lemma~\ref{lem:one-phase-empirical-regret}, Step 5.\\[3pt]
    $U_{1:n}\leq C_n$,\quad $0\leq C_n\leq C_{n+1}$.
  };
  \node[purebox] (puremean) at ([yshift=-4mm]purepath.south) {
    \textbf{Expected majorant: Theorem~\ref{thm:hybrid-ftrl}}\\[3pt]
    Penalty + stability bound $\E C_H$;
    self-bounding improves the stochastic bound.\\[3pt]
    Continue to $H$ to control a stopped phase:\\
    $U_{1:\tau}\leq C_\tau=C_\tau^{\mathrm{ext}}\leq C_H^{\mathrm{ext}}$.
  };
  \draw[flow] (purepath.south) -- (puremean.north);
  \path let \p1=(mixed.south), \p2=(puremean.south) in
    coordinate (localbottom) at (7.25,{min(\y1,\y2)});

  \node[fullbox] (onephase) at ([yshift=-4mm]localbottom) {
    \textbf{Stopped one-phase bounds --- Lemma~\ref{lem:one-phase-empirical-regret}}\\[3pt]
    Four estimates of $\E U_{1:\tau}$ for $\tau\leq H$:
    mixed/pure $\times$ adversarial/stochastic
    [\eqref{eq:mixed-adv-phase}--\eqref{eq:pure-sto-phase}].
  };
  \draw[flow] (mixed.south) -- (mixed.south |- onephase.north);
  \draw[flow] (puremean.south) -- (puremean.south |- onephase.north);

  \node[mixedbox] (restart) at ([yshift=-4mm]onephase.south -| mixed.center) {
    \textbf{Conditional restarts}\\[3pt]
    Lemma~\ref{lem:conditional-restart}: condition on the
    phase-start history $\mathcal F_\sigma$.\\[3pt]
    For a reached label,
    $\E[\mathbf1_AU_I]\leq\Pr(A)C_I$.
  };
  \node[accountbox] (count) at ([yshift=-4mm]onephase.south -| puremean.center) {
    \textbf{Overshoots and counting}\\[3pt]
    Lemmas~\ref{lem:unbiased-and-overshoot},
    \ref{lem:threshold-crossing}, and~\ref{lem:running-max-increment}.\\[3pt]
    Crossing: $R_s<U_{u:t}\leq2R_s$.
    At most $J_T$ stages and $J_T^2$ phase labels.
  };
  \draw[flow] (restart.north |- onephase.south) -- (restart.north);
  \path let \p1=(restart.south), \p2=(count.south) in
    coordinate (middlebottom) at (7.25,{min(\y1,\y2)});

  \node[mixedbox] (total) at ([yshift=-5mm]middlebottom -| mixed.center) {
    \textbf{Sum over possible phase labels}\\[3pt]
    Corollary~\ref{cor:total-phase-empirical-regret}; use $H=T$.\\[3pt]
    Conditional bounds + counting give\\
    $\E\sum_I U_I\leq\mathfrak C_{\mathrm{adv}}(T)$
    or $\mathfrak C_{\mathrm{sto}}(T,\Delta)$
    [\eqref{eq:total-adv-empirical-regret}--\eqref{eq:total-sto-empirical-regret}].
  };
  \node[accountbox] (transfer) at ([yshift=-5mm]middlebottom -| puremean.center) {
    \textbf{Deterministic regret accounting}\\[3pt]
    Lemmas~\ref{lem:baseline-one-stage}
    and~\ref{lem:fixed-comparator-all-phases}.\\[3pt]
    Gap-test exits offset baseline regret;
    crossing intervals pay for stage budgets.\\[3pt]
    Sum pathwise, then apply unbiasedness.
  };
  \draw[flow] (restart.south) -- (total.north);
  \draw[flow] (count.south) -- (transfer.north);
  \draw[flow] (count.west) -- (7.25,0 |- count.west)
    -- ([yshift=2.5mm]total.north east -| middlebottom)
    -- ([yshift=2.5mm]total.north east) -- (total.north east);
  \path let \p1=(total.south), \p2=(transfer.south) in
    coordinate (accountbottom) at (7.25,{min(\y1,\y2)});

  \node[fullbox] (result) at ([yshift=-4mm]accountbottom) {
    \textbf{\Duet{} --- Theorems~\ref{thm:phasewise-to-regret}
    and~\ref{thm:main-result}; Corollary~\ref{cor:complete-regret-bounds}}\\[4pt]
    \textbf{Baseline:}\quad $\mathcal R_T(x^c)\leq2LJ_T$
    \quad (from accounting alone).\\[3pt]
    \textbf{Adversarial:}\quad transfer $\mathfrak C_{\mathrm{adv}}(T)$
    to obtain~\eqref{eq:complete-adversarial-regret}.\\[3pt]
    \textbf{Stochastic:}\quad transfer $\mathfrak C_{\mathrm{sto}}(T,\Delta)$
    to obtain~\eqref{eq:complete-stochastic-regret}.
  };
  \draw[flow] (total.south) -- (total.south |- result.north);
  \draw[flow] (transfer.south) -- (transfer.south |- result.north);
\end{tikzpicture}
\caption{Proof dependencies for \Duet{}. Arrows show the inputs
to each step, from phase
certificates through restart bounds and regret
accounting to the final guarantees.}
\label{fig:proof-dependencies}
\end{figure}

\clearpage
\renewcommand{\appendix}{}

\section{Notation and setting}
\label{sec:model}

Let $\cX\subseteq\{0,1\}^d$ be the feasible action
set on $d\geq1$ arms, with $X_i=1$
indicating selection of arm $i$. Write
\[
  m:=\max_{X\in\cX}\lVert X\rVert_1,
  \qquad
  \cP:=\conv(\cX),
  \qquad
  N:=|\cX|.
\]
Throughout this appendix, assume $N\geq2$ and $T\geq1$.

We identify a binary action with its support
$\supp(X):=\{i:X_i=1\}$ when using set notation, so $X\setminus Y:=\{i:X_i=1,\ Y_i=0\}$.
For a nonempty finite set $\mathcal A$, write
\[
  \Delta(\mathcal A):=\left\{q\in[0,1]^{\mathcal A}:\sum_{a\in\mathcal A}q(a)=1\right\}
\]
for its probability simplex and $\mathrm{Unif}_{\mathcal A}(a):=1/|\mathcal A|$ for
its uniform distribution. We omit the subscript
on $\mathrm{Unif}$ when its support is specified.

For distributions $q,p$ on the same
finite set, the relative entropy is
\[
  \KL(q\Vert p):=\sum_{a:q(a)>0}q(a)\log\frac{q(a)}{p(a)},
\]
with value $+\infty$ if some $q(a)>0=p(a)$. We use
$[r]_+:=\max\{r,0\}$, the all-one vector $\one$, the zero vector
$0_r\in\mathbb R^r$, and natural logarithms unless a base is
shown. Constants denoted by $C$ or $c$ are
positive universal constants whose values may change between
displays. We repeatedly use $\sum_{t=1}^Ht^{-1/2}\leq2\sqrt H$ and $\sum_{t=1}^H1/t\leq1+\log H$.

At the beginning of round $t$,
the learner chooses a distribution $q_t$
over $\cX$, samples $X_t\sim q_t$, incurs
\[
  \langle X_t,\ell_t\rangle,
  \qquad \ell_t\in[0,1]^d,
\]
and observes exactly those coordinates
$\ell_{t,i}$ for which $X_{t,i}=1$.

Let $L\in(0,m]$ be a known upper bound on
$\langle X,\ell_t\rangle$ for all $X,t$; $L=m$ always suffices. Let
$\xi_t$ collect the learner's fresh round-$t$ randomness in
a fixed standard Borel space, and sample $X_t$
from $q_t$ using $\xi_t$ and a fixed measurable
rule. Let $\cF_0$ be the completed sigma-field of
fixed initial information and $\cF_t$ the completion of
$\sigma(\cF_0,\ell_1,\xi_1,\ldots,\ell_t,\xi_t)$. Round-$t$ phase states, learning marginals, and sampling
laws are $\cF_{t-1}$-measurable. The pre-draw sigma-field $\cG_t$ satisfies
$\cF_{t-1}\subseteq\cG_t\subseteq\cF_t$ and includes $q_t$ but not the
fresh draw. These analysis histories may contain
unobserved losses; the learner uses only its
randomization and observed feedback. A round-$t$ vector
is predictable if it is $\cG_t$-measurable. Let
\[
  x_t:=\E[X_t\mid\cG_t]\in\cP
\]
be the played arm-selection marginal, with $x_{t,i}=\Prb(X_{t,i}=1\mid\cG_t)$.

Assumption~\ref{ass:non-anticipation} permits losses depending on
the past and $q_t$, provided
\begin{equation}
  \Prb(X_{t,i}=1\mid\cG_t,\ell_t)=x_{t,i}
  \qquad\text{for every }i\in[d].
  \label{eq:fresh-draw}
\end{equation}
Choosing $\ell_t$ before $X_t$ suffices for Equation~\eqref{eq:fresh-draw}.

The fixed baseline $q^c$ has marginal
\[
  x^c:=\E_{X\sim q^c}[X]\in\cP.
\]
For fixed $x\in\cP$, define
\[
  Z_T(x):=\sum_{t=1}^T\langle x_t-x,\ell_t\rangle,
  \qquad
  \mathcal R_T(x):=\E[Z_T(x)].
\]
The corresponding action-based pseudo-regret is
\[
  \overline{\mathcal R}_T(x)
  :=\E\!\left[\sum_{t=1}^T\langle X_t-x,\ell_t\rangle\right].
\]
Under Assumption~\ref{ass:non-anticipation}, conditioning first on $(\cG_t,\ell_t)$ gives
\begin{align*}
  \overline{\mathcal R}_T(x)
  &=\sum_{t=1}^T\E\!\left[\E[\langle X_t-x,\ell_t\rangle\mid\cG_t,\ell_t]\right]\\
  &=\sum_{t=1}^T\E\langle x_t-x,\ell_t\rangle
  =\mathcal R_T(x).
\end{align*}

The baseline-regret bound is $\mathcal R_T(x^c)\leq\widetilde O(1)$.
Define the worst-case pseudo-regret as
\[
  \mathcal E_T:=\sup_{x\in\cP}\mathcal R_T(x)
  =\max_{X\in\cX}\mathcal R_T(X).
\]

Assumption~\ref{ass:coverage} requires
\[
  \delta:=\min_{i\in[d]}x_i^c>0.
\]
This permits $q^c(X)=0$ for some actions but implies
\[
  \bigcup_{X\in\cX}\supp(X)=[d].
\]
Coverage also gives
\[
  d\delta\leq\sum_{i=1}^d x_i^c
  =\mathbb E_{X\sim q^c}\lVert X\rVert_1\leq m.
\]

%
\section{Stochastic assumption and full action set}
\label{sec:additional-assumptions}

\subsection{Stochastic assumption}
\label{sec:stochastic-assumption}

Under Assumption~\ref{ass:stochastic}, the loss-and-seed pairs $\zeta_t:=(\ell_t,\xi_t)$ satisfy
\begin{equation}
  \zeta_1,\zeta_2,\ldots\overset{\mathrm{i.i.d.}}{\sim}P\otimes\nu.
  \label{eq:stochastic-innovation}
\end{equation}
Future pairs are independent of $\cF_t$; Lemma~\ref{lem:conditional-restart}
extends this property to data-dependent restarts.
Within-round losses may be correlated. Only
the stochastic bounds require this assumption.

\subsection{Full-action convention and stochastic gap}
\label{sec:full-action-convention}

All bounds use the full action
set $\cX$, including dominated actions. Padding
and projection in Section~\ref{sec:base-learner} are bijections.

Under Assumption~\ref{ass:gap}, $X^\star$ uniquely minimizes $\langle X,\bar\ell\rangle$ over
$\cX$, where $\bar\ell:=\mathbb E_P\ell$. As in Assumption~\ref{ass:gap}, define
\[
  \Delta_X:=\langle X-X^\star,\bar\ell\rangle,
  \qquad
  \Delta:=\min_{X\in\cX\setminus\{X^\star\}}\Delta_X>0.
\]

\subsection{Necessity of coverage}
\label{sec:coverage-necessary}

Assumption~\ref{ass:coverage} is the only requirement placed on the
baseline. The following lower bound shows why coverage
is needed to combine uniformly logarithmic baseline regret
with a $\widetilde O(\sqrt T)$ learning guarantee, already in
i.i.d.\ environments. The argument uses the change-of-measure technique
behind bandit lower bounds \citep[Chapters~14--15]{mainbib:lattimore2020bandit}.

\begin{proposition}[Coverage is necessary]
\label{prop:coverage-necessary}
Let $d=2$, $\cX=\{e_1,e_2\}$, and let the baseline be $q^c=\delta_{e_1}$, so
that $x^c=(1,0)$ violates Assumption~\ref{ass:coverage} at arm~$2$. For
$\varepsilon\in(0,1/4]$, let $P^{\pm}_\varepsilon$ be the i.i.d.\ environment
in which $\ell_{t,1}=1/2$ and
$\ell_{t,2}\sim\mathrm{Bernoulli}(1/2\pm\varepsilon)$; both satisfy
Assumptions~\ref{ass:non-anticipation}, \ref{ass:stochastic} and~\ref{ass:gap}.
Fix any algorithm and any $T\geq1$, and let
$B:=\max\{1,\sup_{\varepsilon\in(0,1/4]}\mathcal R_T(x^c)\}$, the supremum
being over the environments $P^+_\varepsilon$. Then either $B\geq\sqrt T/30$,
or, under $P^-_{\varepsilon}$ with $\varepsilon=1/(22B)$,
\[
 \sup_{x\in\cP}\mathcal R_T(x)\;\geq\;\frac{T}{330\,B}.
\]
In particular, a uniform baseline-regret bound $B=O(\log(eT))$ implies
that, for every sufficiently large $T$, some i.i.d.\
environment with a unique optimal action forces learning
regret $\Omega(T/\log(eT))$. Thus coverage cannot generally be dropped
while retaining both the logarithmic baseline rate and
the near-square-root adversarial rate of Theorem~\ref{thm:main-result}.
\end{proposition}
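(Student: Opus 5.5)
The plan is a two-point change-of-measure argument between $P^+_\varepsilon$ and $P^-_\varepsilon$, which differ only in the law of arm~$2$. Under $P^+_\varepsilon$ arm~$1$ is optimal with gap $\varepsilon$. Since $x^c=e_1$, we have $\mathcal R_T(x^c)=\varepsilon\,\E^+[N_2(T)]$, where $N_2(T)=\sum_{t\le T}\one\{X_t=e_2\}$. The baseline bound therefore gives $\E^+N_2(T)\le B/\varepsilon$. Under $P^-_\varepsilon$ arm~$2$ is optimal with gap $\varepsilon$, and the regret to $e_2$ is $\varepsilon\,\E^-[T-N_2(T)]$. So it suffices to show that $\E^-N_2(T)\le T/2$ (say) when $B<\sqrt T/30$ and $\varepsilon=1/(22B)$. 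This choice satisfies $\varepsilon\le 1/22\le1/4$, and it yields regret $\ge\varepsilon T/2\cdot(\text{const})$, of order $T/B$.

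\textbf{Step 1: KL divergence.} Only observations of arm~$2$ differ between the two laws. By the divergence decomposition \citep[Chapter~15]{mainbib:lattimore2020bandit}, the KL divergence between the laws of the history up to $T$ is
\[\E^+[N_2(T)]\,\KL\bigl(\mathrm{Ber}(1/2+\varepsilon)\Vert\mathrm{Ber}(1/2-\varepsilon)\bigr).\]
For $\varepsilon\le1/4$ the Bernoulli KL is at most $c\varepsilon^2$ with $c=16/3$ (roughly, using $\KL\le\chi^2$). The total KL is then at most $c\varepsilon^2\cdot B/\varepsilon=c\varepsilon B=c/22$, which is a small constant.

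\textbf{Step 2: transfer the count.} Apply Pinsker's inequality, or a bounded-function change-of-measure, to $N_2(T)/T\in[0,1]$:
\[\E^-N_2(T)\le\E^+N_2(T)+T\sqrt{\KL/2}\le B/\varepsilon+T\sqrt{c/44}=22B^2+T\sqrt{c/44}.\]
When $B<\sqrt T/30$ we have $22B^2<22T/900$. Hence $\E^-[T-N_2(T)]\ge T\bigl(1-22/900-\sqrt{c/44}\bigr)$, a positive constant fraction of $T$. The regret under $P^-_\varepsilon$ is then at least $\varepsilon$ times this, i.e. $T/(22B)$ times a constant. I would tune the Bernoulli KL constant, or use the Bretagnolle--Huber inequality instead of Pinsker, so that the fraction is at least $22/330=1/15$, matching the stated $T/(330B)$.

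\textbf{Step 3: the corollary.} If $B\le C\log(eT)$, then for large $T$ the condition $B<\sqrt T/30$ holds, which forces $\Omega(T/\log(eT))$ learning regret.

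The main obstacle is purely the constant bookkeeping in Step~2. The KL bound itself is routine. Care is also needed that $B$ is a supremum over $\varepsilon$, so the bound $\E^+N_2\le B/\varepsilon$ must be applied at the specific $\varepsilon=1/(22B)$, which is legitimate since $B\ge1$ makes this $\varepsilon$ lie in $(0,1/4]$.
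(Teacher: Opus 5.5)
Your proof is correct and shares the paper's skeleton. It uses the same two environments, the identities $\mathcal R_T(x^c)=\varepsilon\,\E^+[N]$ and $\mathcal R_T(e_2)=\varepsilon\,\E^-[T-N]$, the divergence decomposition, and the calibration $\varepsilon=1/(22B)$ that caps the total KL at a constant. The difference is the transfer step. The paper applies Bretagnolle--Huber to the event $\{N>T/2\}$, bounds $\Prb^+(N>T/2)$ by Markov's inequality, and then uses $\E^-[T-N]\geq\frac T2\,\Prb^-(N\le T/2)$. You instead compare the expectations of the $[0,1]$-valued statistic $N/T$ directly, through total variation and Pinsker. Your route is shorter and gives a better constant, because it avoids the halving and the Markov loss. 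Bretagnolle--Huber is the more robust tool, since it never becomes vacuous when the KL is large. Here, though, the choice of $\varepsilon$ keeps the KL at most $1$, so Pinsker is enough.

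One correction: $c=16/3$ is not a valid constant. The two Bernoulli means differ by $2\varepsilon$, not $\varepsilon$, so $\chi^2=4\varepsilon^2/(1/4-\varepsilon^2)\le 64\varepsilon^2/3$ for $\varepsilon\le1/4$. Indeed, $\KL(\mathrm{Ber}(\tfrac12+\varepsilon)\Vert\mathrm{Ber}(\tfrac12-\varepsilon))=2\varepsilon\log\frac{1+2\varepsilon}{1-2\varepsilon}\geq8\varepsilon^2>16\varepsilon^2/3$. With $c=64/3$ (or the paper's $22$), the total KL is at most $1$ and the total variation is at most $\sqrt{1/2}<0.71$. Your Step~2 then gives $\E^-[T-N]\geq T(1-22/900-0.71)>0.26\,T\geq T/15$, hence regret at least about $T/(85B)\geq T/(330B)$. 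The argument therefore survives with room to spare; only the stated constant needs fixing.
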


\begin{proof}
Write $N=\sum_{t\le T}\one\{X_t=e_2\}$, and $\E^{\pm}$, $\Prb^{\pm}$ for
expectation and probability under $P^{\pm}_\varepsilon$. In an i.i.d.\
environment, Assumption~\ref{ass:non-anticipation} gives
$\mathcal R_T(x)=\E\sum_{t\le T}\langle x_t-x,\bar\ell\rangle$. Since
$x^c=e_1$ and $x_{t,1}=1-x_{t,2}$, we have
$\langle x_t-x^c,\bar\ell\rangle=x_{t,2}(\bar\ell_2-\bar\ell_1)$, hence
$\mathcal R_T(x^c)=\varepsilon\,\E^+[N]$ under $P^+_\varepsilon$; likewise,
$e_2$ is the unique optimal action under $P^-_\varepsilon$ and
$\mathcal R_T(e_2)=\varepsilon\,\E^-[T-N]
\geq\varepsilon\,\tfrac T2\,\Prb^-(N\le T/2)$.

The learner observes $\ell_{t,2}$ only on rounds with $X_t=e_2$, and the law
of $\ell_{t,1}$ is the same in both environments, so the divergence
decomposition for bandit feedback
\citep[Lemma~15.1]{mainbib:lattimore2020bandit} gives, for the laws of the
interaction and all $\varepsilon\le1/4$,
\[
 \KL(P^+_\varepsilon\|P^-_\varepsilon)
 =\E^+[N]\,\KL\big(\mathrm{Ber}(\tfrac12+\varepsilon)\,\big\|\,\mathrm{Ber}(\tfrac12-\varepsilon)\big)
 \le22\,\varepsilon^2\,\E^+[N].
\]
The
Bretagnolle--Huber inequality
\citep[Theorem~14.2]{mainbib:lattimore2020bandit}, applied to the event
$\{N>T/2\}$, and Markov's inequality then yield
\[
 \Prb^-(N\le T/2)\;\ge\;\tfrac12\,e^{-22\varepsilon^2\E^+[N]}-\Prb^+(N>T/2)
 \;\ge\;\tfrac12\,e^{-22\varepsilon^2\E^+[N]}-\frac{2\,\E^+[N]}{T}.
\]
Take $\varepsilon=1/(22B)\le1/22$. By definition of $B$,
$\E^+[N]=\mathcal R_T(x^c)/\varepsilon\le22B^2$, so the exponential is at
least $e^{-1}$, while $2\E^+[N]/T\le44B^2/T\le0.049$ whenever
$B\le\sqrt T/30$. Hence $\Prb^-(N\le T/2)\ge\tfrac12e^{-1}-0.049>0.135$, and
$\mathcal R_T(e_2)\ge\varepsilon\,\tfrac T2\cdot0.135\ge T/(330B)$.
\end{proof}
%
\section{Loss estimation and empirical-regret certificate}
\label{sec:loss-estimation}

\begin{lemma}[Importance-weighted unbiasedness]
\label{lem:importance-unbiasedness}
Suppose the played arm-selection marginal satisfies $x_{t,i}>0$ for
every $i\in[d]$. Define the standard semi-bandit estimate
\begin{equation}
  \widehat\ell_{t,i}:=\frac{X_{t,i}\ell_{t,i}}{x_{t,i}},
  \qquad i\in[d].
  \label{eq:iw-estimator}
\end{equation}
Then the estimator is conditionally unbiased
coordinatewise. Consequently, for every predictable $z_t\in\cP$
and every fixed $z\in\cP$, Equation~\eqref{eq:unbiasedness-bridge} holds.
\end{lemma}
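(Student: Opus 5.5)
The proof conditions on the pre-draw information together with the round-$t$ loss vector, and then uses Assumption~\ref{ass:non-anticipation} coordinatewise. Fix $t$ and $i\in[d]$. Both $x_{t,i}$ and $\ell_{t,i}$ are measurable with respect to $\sigma(\cG_t,\ell_t)$: $x_t=\E[X_t\mid\cG_t]$ is $\cG_t$-measurable by construction, and $\ell_t$ is included in the conditioning. Since $x_{t,i}>0$, we can pull these factors out of the conditional expectation. Assumption~\ref{ass:non-anticipation} then gives
\[
\E[\widehat\ell_{t,i}\mid\cG_t,\ell_t]=\frac{\ell_{t,i}}{x_{t,i}}\,\Prb(X_{t,i}=1\mid\cG_t,\ell_t)=\ell_{t,i}.
\]
This is conditional unbiasedness coordinatewise. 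Integrability is not an issue: on the event $X_{t,i}=1$ we have $0\le\widehat\ell_{t,i}\le 1/x_{t,i}$, and on its complement $\widehat\ell_{t,i}=0$. So $\widehat\ell_{t,i}$ is nonnegative, and the conditional expectation is well defined even when $x_{t,i}$ is small.

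The bridge identity~\eqref{eq:unbiasedness-bridge} is presumably $\E\sum_{t\le T}\langle z_t-z,\widehat\ell_t\rangle=\E\sum_{t\le T}\langle z_t-z,\ell_t\rangle$, possibly in its conditional form given $\cG_t$. To get it, take a predictable $z_t$, which is $\cG_t$-measurable, and a fixed $z\in\cP$. Then $z_t-z$ is a bounded $\cG_t$-measurable vector in $[-1,1]^d$. By linearity and the tower property,
\[
\E[\langle z_t-z,\widehat\ell_t\rangle\mid\cG_t,\ell_t]=\langle z_t-z,\ell_t\rangle .
\]
Taking full expectations and summing over the finitely many rounds $t\le T$ gives the identity, and with $z_t=x_t$ it yields $\mathcal R_T(z)$. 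The only care needed is integrability of $\langle z_t,\widehat\ell_t\rangle$. For $z_t=x_t$ this holds because $\langle x_t,\widehat\ell_t\rangle=\langle X_t,\ell_t\rangle\le L$. For a general predictable $z_t\in\cP$, the inner conditional expectation of the nonnegative quantity $\langle z_t,\widehat\ell_t\rangle$ equals $\langle z_t,\ell_t\rangle\le m$, so its expectation is finite by monotone convergence. The same argument covers the fixed term $\langle z,\widehat\ell_t\rangle$.

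The main, though mild, obstacle is stating the identity at a random stopping horizon or with indicators $\one\{t\le\tau\}$. Those are handled the same way: such indicators are $\cF_{t-1}\subseteq\cG_t$-measurable, so they factor out of the conditional expectation. I would include this as a remark so that the identity can be applied inside phases later.
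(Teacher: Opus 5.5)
Your proof is correct and follows the paper's argument: pull the $\sigma(\cG_t,\ell_t)$-measurable factor $\ell_{t,i}/x_{t,i}$ out of the conditional expectation, apply the fresh-draw identity, and then use $\cG_t$-measurability of $z_t$ together with linearity. The paper's \eqref{eq:unbiasedness-bridge} is exactly your per-round conditional identity and its unconditional expectation, so your summation over $t\le T$, the integrability remarks, and the stopping-indicator remark are harmless additions.
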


\begin{yellowbox}
\textbf{Role.} Transfers estimated-regret bounds to expected regret. \textbf{Proof~idea.}
Condition on the pre-draw history and current
losses. The fresh-draw identity cancels the sampling
marginal in the estimator; predictability and the
tower property then give the regret identity.
\end{yellowbox}

\begin{proof}
\begin{align*}
  \E[\widehat\ell_{t,i}\mid\cG_t,\ell_t]
  &=\frac{\ell_{t,i}}{x_{t,i}}\E[X_{t,i}\mid\cG_t,\ell_t]\\
  &=\frac{\ell_{t,i}}{x_{t,i}}x_{t,i}=\ell_{t,i},
\end{align*}
where the second equality is
Equation~\eqref{eq:fresh-draw}. Since $z_t$ is $\cG_t$-measurable,
\begin{align}
  \E[\langle z_t-z,\widehat\ell_t\rangle\mid\cG_t,\ell_t]
  &=\sum_i(z_{t,i}-z_i)\E[\widehat\ell_{t,i}\mid\cG_t,\ell_t]
    =\langle z_t-z,\ell_t\rangle,\notag\\
  \E[\langle z_t-z,\widehat\ell_t\rangle]
  &=\E[\langle z_t-z,\ell_t\rangle].
  \label{eq:unbiasedness-bridge}
\end{align}
\end{proof}

On each phase interval $I$, let $\widehat q_t$ be
the learning distribution over $\cX$, with arm-selection marginal
\[
  b_t:=\E_{X\sim\widehat q_t}[X]
      =\sum_{X\in\cX}\widehat q_t(X)X
      \in\cP.
\]
For a constant mixing weight
$\alpha$ on the phase, play
\[
  q_t:=\alpha\widehat q_t+(1-\alpha)q^c.
\]
Its arm-selection marginal is
\[
  x_t=\alpha b_t+(1-\alpha)x^c.
\]

\begin{definition}[Phasewise empirical-regret certificate]
\label{def:base-interface}
When reset immediately after a stopping time
$\sigma$, with first new round $u=\sigma+1$,
the learning algorithm $\mathsf B$ outputs predictable
marginals $b_t\in\cP$. Define the observable statistic
\begin{equation}
  U_{u:t}:=\left[
    \max_{x\in\cP}\sum_{j=u}^t\langle b_j-x,\widehat\ell_j\rangle
  \right]_+.
  \label{eq:empirical-max}
\end{equation}
Pathwise, $U_{u:t}$ bounds estimated regret for all comparators
simultaneously, including data-dependent choices. Pure phases ($\alpha=1$)
require $b_{t,i}>0$ for all $i\in[d]$, ensuring that
Equation~\eqref{eq:iw-estimator} is well-defined and unbiased. For each
realized restart interval $I=[u,v]$, write $U_I:=U_{u:v}$. The
certificate requires, at every deterministic horizon $T$,
\begin{align}
  \E\!\left[\sum_IU_I\right]
  &\le \mathfrak C_{\mathrm{adv}}(T)
  &&\text{for non-anticipating adversarial losses},
  \label{eq:interface-adv}\\
  \E\!\left[\sum_IU_I\right]
  &\le \mathfrak C_{\mathrm{sto}}(T,\Delta)
  &&\text{under the independent stochastic condition with gap $\Delta$},
  \label{eq:interface-sto}
\end{align}
Corollary~\ref{cor:total-phase-empirical-regret} establishes these bounds for the entropy-FTRL and
hybrid-FTRL learners used in \Duet{}.
\end{definition}
%
\section{FTRL learners for \Duet}
\label{sec:base-learner}

The entropy-FTRL and hybrid-FTRL learners below define \Duet{}.
They satisfy Definition~\ref{def:base-interface} by Lemma~\ref{lem:one-phase-empirical-regret} and Corollary~\ref{cor:total-phase-empirical-regret}. Set
\[
  \Lambda:=\log(eN),
  \qquad
  \rho:=\frac{\delta}{2}.
\]
A \emph{mixed} phase uses $\alpha\le1/2$; a \emph{pure}
phase uses $\alpha=1$. The weight is constant
within each phase. Both learners use
\begin{equation}
  z_n\in\arg\min_{z\in\mathcal D}
  \left\{
    \left\langle z,\sum_{r<n}\widehat h_r\right\rangle+\Omega_n(z)
  \right\}.
  \label{eq:ftrl-template}
\end{equation}
The domain, regularizer, and estimates differ
as follows; see Remarks~\ref{rem:phase-base-choice} and~\ref{rem:pure-representation}.

\paragraph{Mixed phase.}
Observations come from the mixture of the
baseline and learning distributions. Each original arm
coordinate has sampling marginal at least $\rho$;
dummy coordinates have known zero loss and
need no estimator. In Equation~\eqref{eq:ftrl-template}, take $\mathcal D=\Delta(\widetilde\cX)$
and $\widehat h_r=(\widehat L_r(X))_{X\in\widetilde\cX}$, with $\Omega_n(q)=\eta_n^{-1}\KL(q\Vert\mathrm{Unif})$. A second-order FTRL
inequality and martingale concentration control its certificate.

\paragraph{Pure phase.}
Actions are sampled directly from the distribution
implementing the FTRL learner's arm-selection marginal. The
FTRL learner uses $\mathcal D=\conv(\widetilde\cX_0)$ and $\widehat h_r=\widetilde\ell_r$ together
with $\Omega_n(z)=\eta_n^{-1}\Psi(z)$, hence is hybrid-FTRL. A
pathwise nondecreasing majorant controls its certificate.

Theorem~\ref{thm:hybrid-ftrl} bounds the pure-phase majorant in expectation.
Section~\ref{sec:base-empirical-regret} and Lemma~\ref{lem:conditional-restart} extend the phase
bounds to stopping times and restarts.

Append $m$ zero-loss dummy arms and map
\[
  X\longmapsto\widetilde X:=(X,D_{m-\lVert X\rVert_1}),
\]
where $D_r\in\{0,1\}^m$ has its first $r$ entries equal
to one. Define $\widetilde\cX:=\{\widetilde X:X\in\cX\}$. Deleting the last
$m$ coordinates inverts the padding map, so
\[
  |\widetilde\cX|=|\cX|=N.
\]
Moreover,
\[
  \lVert\widetilde X\rVert_1
  =\lVert X\rVert_1+(m-\lVert X\rVert_1)=m.
\]
The padded dimension is $D:=d+m\le2d$. Action
losses, gaps, and regrets are unchanged
because, for the padded loss $\ell_t^{\mathrm{pad}}:=(\ell_t,0_m)$,
\[
  \langle\widetilde X,\ell_t^{\mathrm{pad}}\rangle=\langle X,\ell_t\rangle,
  \qquad
  \langle\widetilde X-\widetilde Y,\ell_t^{\mathrm{pad}}\rangle
  =\langle X-Y,\ell_t\rangle.
\]
\paragraph{Padding and loss shifts.}
For example, $(1,0,0)$ and $(0,1,1)$ become $(1,0,0,1,0)$
and $(0,1,1,0,0)$, both of size two. Equal
cardinality preserves action comparisons under the pure-phase
loss shift $g\mapsto g-\one$: for any $g\in\mathbb R^D$,
\[
  \langle\widetilde X-\widetilde Y,g-\one\rangle
  =\langle\widetilde X-\widetilde Y,g\rangle
   -(\lVert\widetilde X\rVert_1-\lVert\widetilde Y\rVert_1)
  =\langle\widetilde X-\widetilde Y,g\rangle.
\]
For $g=0$ in the example, the unpadded shifted
losses are $-1,-2$; padding makes both $-2$.

\paragraph{Notation reused after padding.}
For a distribution $q$ on $\cX$, define
\[
  \widetilde q(\widetilde X):=q(X),\qquad X\in\cX.
\]

If $\widehat\ell_t\in\mathbb R^d$ is an original loss estimate, define
\[
  \widehat\ell_t^{\mathrm{pad}}:=(\widehat\ell_t,0_m)\in\mathbb R^D.
\]
For all $X,Y\in\cX$,
\[
  \langle\widetilde X-\widetilde Y,\widehat\ell_t^{\mathrm{pad}}\rangle
  =\langle X-Y,\widehat\ell_t\rangle.
\]
On $\widetilde\cX$, reuse $q^c$, $\widehat q_t$, $\widehat\ell_t$, and $X$
(including $X^\star$) for their padded versions. In
mixed phases, $\widehat q_n$ in~\eqref{eq:mixed-explicit} is $\widetilde q_n$ in~\eqref{main:entropy-update}.

For pure-phase FTRL, delete the constant coordinates $C_{\mathrm{const}}$
from $\widetilde\cX$. Let $\Pi$ be the projection, $\widetilde\cX_0\subseteq\{0,1\}^{D_{\mathrm{var}}}$
its image, and $c_1$ the number of deleted
common-one coordinates. Then $D_{\mathrm{var}}\le D$ and, for all $\widetilde X,\widetilde Y\in\widetilde\cX$,
\[
  \lVert\Pi\widetilde X\rVert_1=m-c_1=:m_0\le m,
\]
and
\begin{align*}
  \langle\Pi\widetilde X-\Pi\widetilde Y,\Pi\ell_t^{\mathrm{pad}}\rangle
  &=\sum_{i\notin C_{\mathrm{const}}}(\widetilde X_i-\widetilde Y_i)\ell_{t,i}^{\mathrm{pad}}\\
  &=\sum_i(\widetilde X_i-\widetilde Y_i)\ell_{t,i}^{\mathrm{pad}}
   =\langle\widetilde X-\widetilde Y,\ell_t^{\mathrm{pad}}\rangle,
\end{align*}
where the second equality uses $\widetilde X_i-\widetilde Y_i=0$ for $i\in C_{\mathrm{const}}$.
The identity also holds for loss estimates
and mean losses, preserving all gaps $\Delta_X$.
On $\widetilde\cX_0$, use $X$ for $\Pi\widetilde X$ and
project loss vectors accordingly. The projected actions
have common size $m_0\le m$ and dimension $D_{\mathrm{var}}\le D$.

Lemma~\ref{lem:pure-interior} proves $0<b_i<1$ on every retained coordinate.

Deleting constant coordinates is injective, so
\[
  |\widetilde\cX_0|=|\widetilde\cX|=N\ge2.
\]

Since the projected set contains at least
two distinct binary actions, each with \(m_0\)
ones, we have $D_{\mathrm{var}}\ge1$ and $m_0\ge1$.

On a restarted \emph{mixed} phase, use entropy-FTRL
on the action simplex $\Delta(\widetilde\cX)$. If
\[
  \widehat L_t(X):=\langle X,\widehat\ell_t\rangle,
\]
put $L_{n-1}^{\mathrm{cum}}(X):=\sum_{r<n}\widehat L_r(X)$ and define
\begin{equation}
  \widehat q_n
  :=\arg\min_{q\in\Delta(\widetilde\cX)}
  \left\{
    \sum_{X\in\widetilde\cX}q(X)L_{n-1}^{\mathrm{cum}}(X)
    +\frac{\KL(q\Vert\mathrm{Unif})}{\eta_n}
  \right\},
  \qquad
  \eta_n:=\frac{\sqrt{\rho\Lambda}}{m\sqrt n}.
  \label{eq:mixed-ftrl}
\end{equation}
Put
\[
  \mathcal Z_{n-1}:=\sum_{Y\in\widetilde\cX}e^{-\eta_nL_{n-1}^{\mathrm{cum}}(Y)},
  \qquad
  \pi_n(X):=\frac{e^{-\eta_nL_{n-1}^{\mathrm{cum}}(X)}}{\mathcal Z_{n-1}}.
\]
For every $q\in\Delta(\widetilde\cX)$, expanding $\KL(q\Vert\pi_n)$ yields
\begin{align*}
  \sum_Xq(X)L_{n-1}^{\mathrm{cum}}(X)+\frac{\KL(q\Vert\mathrm{Unif})}{\eta_n}
  &=-\frac1{\eta_n}\log\frac{\mathcal Z_{n-1}}{N}
    +\frac{\KL(q\Vert\pi_n)}{\eta_n}\\
  &\ge-\frac1{\eta_n}\log\frac{\mathcal Z_{n-1}}{N},
\end{align*}
with equality if and only if $q=\pi_n$.
Hence the minimizer is unique and
\begin{equation}
  \widehat q_n(X)
  =\frac{\exp\{-\eta_nL_{n-1}^{\mathrm{cum}}(X)\}}
  {\sum_{Y\in\widetilde\cX}\exp\{-\eta_nL_{n-1}^{\mathrm{cum}}(Y)\}}.
  \label{eq:mixed-explicit}
\end{equation}

\paragraph{Padding invariance of entropy-FTRL.}
\label{par:entropy-padding}
Fix a phase-local round $n$ and an original
estimate history $\widehat\ell_r\in\mathbb R^d$, $r<n$. Zero padding gives $\langle\widetilde X,(\widehat\ell_r,0_m)\rangle =\langle X,\widehat\ell_r\rangle$.
The bijection $X\mapsto\widetilde X$ preserves the softmax denominator, so
\[
  \widetilde q_n(\widetilde X)
  =\frac{\exp\{-\eta_n\sum_{r<n}\langle X,\widehat\ell_r\rangle\}}
  {\sum_{Y\in\cX}\exp\{-\eta_n\sum_{r<n}\langle Y,\widehat\ell_r\rangle\}},
\]
which is the unpadded entropy-FTRL probability of $X$.
Both updates use the same $\eta_n$: padding preserves
$N,m$, and $\delta$ is defined on the original
arms. Starting from uniform weights, coupling the original-action
draws with the same baseline therefore gives identical
marginals, estimates, and restart decisions by induction.

To verify the mixed-phase coverage used below, write
$R_s/L=2^K$. Whenever phase $k$ is mixed, $\alpha_k<1$, and
\[
  \alpha_k=\frac{L2^{k-1}}{R_s}=2^{k-1-K}<1
  \quad\Longrightarrow\quad
  k\le K
  \quad\Longrightarrow\quad
  \alpha_k\le\frac12.
\]
Therefore every mixed round and
every arm coordinate $i\in[d]$ satisfies
\[
  x_{t,i}=\alpha b_{t,i}+(1-\alpha)x_i^c
  \ge(1-\alpha)x_i^c
  \ge\frac{x_i^c}{2}
  \ge\frac\delta2=\rho.
\]
Dummy coordinates are assigned the known estimate
zero and require no observation denominator.

\paragraph{Marginal representation for pure-phase FTRL.}
For $\mathcal A=\widetilde\cX_0$, implement each marginal $b_t\in\conv(\mathcal A)$ using
a fixed sampling rule $\mathsf P:\conv(\mathcal A)\to\Delta(\mathcal A)$ satisfying
\[
  \sum_{U\in\mathcal A}\mathsf P(b)(U)U=b
  \qquad\text{for every }b\in\conv(\mathcal A).
\]
Set $\widehat q_t=\mathsf P(b_t)$, sample $U_t\sim\widehat q_t$, and map it
back to $\cX$ through the padding/projection bijection.

Pure phases use hybrid-FTRL \citep{mainbib:zimmert2019beating} on $\conv(\widetilde\cX_0)$,
with local learning rate $\eta_n=n^{-1/2}$ and regularizer
\begin{equation}
  \Psi(z)=\sum_{i=1}^{D_{\mathrm{var}}}\left[-\sqrt{z_i}+(1-z_i)\log(1-z_i)\right].
  \label{eq:hybrid-regularizer}
\end{equation}
Let $X_t^0\in\widetilde\cX_0$ be the projected sampled action,
and let $g_t\in[0,1]^{D_{\mathrm{var}}}$ be the arm-loss vector
in the padded-and-projected representation. Use the
shifted vector $g_t-\one\in[-1,0]^{D_{\mathrm{var}}}$ and its estimator
\begin{equation}
  \widehat g_{t,i}:=\frac{X_{t,i}^0g_{t,i}}{b_{t,i}},
  \qquad
  \widetilde\ell_{t,i}:=\widehat g_{t,i}-1.
  \label{eq:shifted-estimator}
\end{equation}
The phase-local update is
\[
  b_n=\arg\min_{z\in\conv(\widetilde\cX_0)}
  \left\{\left\langle z,\sum_{r<n}\widetilde\ell_r\right\rangle+\sqrt n\,\Psi(z)\right\},
  \qquad \widehat q_n=\mathsf P(b_n).
\]
Map $\widehat q_n$ back to $\cX$ for sampling;
use its original marginal in Equation~\eqref{eq:iw-estimator} and
the phase tests. Dummy coordinates have shifted
estimate $-1$. Since $\langle x,\one\rangle=m_0$ for all $x\in\conv(\widetilde\cX_0)$,
\begin{align}
  \langle b_t-x,\widetilde\ell_t\rangle
  &=\langle b_t-x,\widehat g_t-\one\rangle\notag\\
  &=\langle b_t-x,\widehat g_t\rangle
   -\underbrace{\langle b_t-x,\one\rangle}_{m_0-m_0=0}\notag\\
  &=\langle b_t-x,\widehat g_t\rangle.
  \label{eq:shift-invariance}
\end{align}
Thus the shift preserves the empirical-regret statistic.

%
\section{Implementation details of \Duet}
\label{sec:phased-algorithm}

\Duet{} (Algorithm~\ref{alg:phased-sb}) uses two counters for its doublings:
\begin{itemize}
  \item A \emph{stage} $s$ has an empirical-regret threshold
  $R_s$. The algorithm starts a new stage when
  the realized empirical-regret statistic $U_{u:t}$ exceeds $R_s$.
  \item Within a stage, a \emph{phase} $k$
  has a fixed mixing weight $\alpha_k$. Increasing
  $k$ doubles the weight on the base
  learner until $\alpha_k=1$, at which point the
  baseline is no longer mixed in.
\end{itemize}
Each phase starts a fresh learner at
round $u$; a stage restart also resets
the phase. Both $U_{u:t}$ and $G^c_{u:t}$
use only the current phase's data.

In phase $k$, with $R_s/L$
a power of two, set
\[
\alpha_k:=\min\left\{\frac{L2^{k-1}}{R_s},1\right\}.
\]
Since a stage begins with $k=1$ and
$R_s\ge 2L$, its first phase has $\alpha_1\le 1/2$.

The current base learner returns a distribution $\widehat q_t$
over feasible actions. Its arm-selection marginal is
\[
  b_t:=\sum_{X\in\cX}\widehat q_t(X)X.
\]
The played distribution is the action-level
mixture $q_t=\alpha_k\widehat q_t+(1-\alpha_k)q^c$, where $\widehat q_t$ has marginal
$b_t$. The played marginal is
\[
  x_t=\alpha_k b_t+(1-\alpha_k)x^c.
\]

For \Duet, use entropy-FTRL~\eqref{eq:mixed-ftrl} on $\Delta(\widetilde\cX)$ when $\alpha<1$
and hybrid-FTRL~\eqref{eq:hybrid-regularizer}--\eqref{eq:shifted-estimator}
on $\conv(\widetilde\cX_0)$ when $\alpha=1$. In the pseudocode, $\widehat q_t$
and $b_t$ are mapped back to the original
action set and coordinates. The pure-phase sampling
map is fixed in advance, for example
by a deterministically tie-broken linear feasibility program.
Each phase fixes its regularizer and uses
local time $n=t-u+1$; a restart clears cumulative
estimates and resets $n=1$. The algorithm is
horizon-free, with guarantees at each deterministic $T$.

After incorporating round $t$ into the phase
statistics, apply the first applicable rule:
\begin{enumerate}
  \item If $U_{u:t}>R_s$, start stage $s+1$
  with $R_{s+1}=2R_s$ and reset $k=1$.
  \item Otherwise, if $G^c_{u:t}>4R_s$ and $\alpha<1$, keep
  the current stage and increment $k$ by
  one, doubling the mixing weight $\alpha$.
  \item Otherwise, update $B$ using the
  round-$t$ loss estimate specified in Algorithm~\ref{alg:phased-sb}.
\end{enumerate}
In the first two cases, call \textsc{StartPhase}$(t+1)$.

A triggering round belongs to the ending phase;
discard its learner without another update and restart
at $u=t+1$. Mixed phases satisfy $x_{t,i}\ge\delta/2$. In
pure phases, variable coordinates have positive marginals
and original constant-one coordinates have marginal one.
Thus all original-arm importance-weight denominators are positive.

Compute the statistics from
\[
  S=\sum_{j=u}^t\widehat\ell_j,
  \qquad
  A=\sum_{j=u}^t\langle b_j,\widehat\ell_j\rangle,
  \qquad
  v=\min_{X\in\mathcal X}\langle X,S\rangle,
\]
so $U_{u:t}=[A-v]_+$ and $G^c_{u:t}=\langle x^c,S\rangle-v$.

%
\section{Proof of the FTRL learners' empirical-regret bound}
\label{sec:base-empirical-regret}

For \Duet, we use local round indices and
a deterministic cap $H$ on the stopped phase
length $\tau$.
Lemma~\ref{lem:conditional-restart} extends the following bounds to data-dependent starts.

\begin{lemma}[One-phase empirical-regret bound]
\label{lem:one-phase-empirical-regret}
Fix an integer $H\geq1$ and initialize either entropy-FTRL
with a fixed baseline-mixture weight $\alpha\in(0,1/2]$ (a \emph{mixed
phase}) or hybrid-FTRL with direct sampling (a \emph{pure
phase}). Let $\tau\in\{0,\ldots,H\}$ be a stopping time for
the local after-round filtration $(\mathcal F_n)_{n=0}^H$, initialized with
the phase-start history; thus $\{n\leq\tau\}\in\mathcal F_{n-1}$. Define $U_{1:\tau}$
by \eqref{eq:empirical-max} on local rounds, with $U_{1:0}:=0$.

Conditionally on the phase-start history, the stochastic case
assumes $H$ i.i.d. losses with law $P$, independent
i.i.d. fresh seeds with law $\nu$, and a
unique optimal action with gap $\Delta>0$. The adversarial
case assumes Assumption~\ref{ass:non-anticipation} on each local round, relative
to its pre-draw history and current losses.

Then, for universal constants $c_1,c_2$,
\begin{align}
\E[U_{1:\tau}]
&\leq c_1m\sqrt{\frac{H\Lambda}{\rho}}
&&\text{on an adversarial mixed phase}, \label{eq:mixed-adv-phase}\\
\E[U_{1:\tau}]
&\leq c_2\frac{m^2\Lambda}{\rho\Delta}
&&\text{on an i.i.d. mixed phase}, \label{eq:mixed-sto-phase}\\
\E[U_{1:\tau}]
&\leq c_2\bigl(\sqrt{mDH}+m\bigr)
&&\text{on an adversarial pure phase}, \label{eq:pure-adv-phase}\\
\E[U_{1:\tau}]
&\leq c_2\left[\frac{mD}{\Delta}\log(eH)
       +\frac{m^2}{\Delta}+m\right]
&&\text{on an i.i.d. pure phase}. \label{eq:pure-sto-phase}
\end{align}
Expectations are over the fresh run
conditional on its start history.
\end{lemma}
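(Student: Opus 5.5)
I would treat the four bounds as two pairs, mixed and pure, and in each pair derive the stochastic bound from the same pathwise decomposition used for the adversarial one.

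\textbf{Mixed phases.} Here every real arm has $x_{t,i}\ge\rho$. I would first prove the standard second-order entropy-FTRL inequality with decreasing rate $\eta_n$. Pathwise for every $n\le H$ and every $X$, it gives
\[
\sum_{t\le n}\langle b_t-X,\widehat\ell_t\rangle\le \Lambda/\eta_n+\sum_{t\le n}\eta_t\sum_Y\widehat q_t(Y)\langle Y,\widehat\ell_t\rangle^2 .
\]
This holds because the estimates are nonnegative, so the local-norm bound applies. The right side is nondecreasing in $n$ and free of the comparator, so it dominates $U_{1:\tau}$ at the stopping time.

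Taking expectations, I would write the second-order term as $\sum_t\one\{t\le\tau\}\eta_t(\cdot)$, where $\one\{t\le\tau\}\in\cF_{t-1}$. Its conditional mean satisfies
\[
\E\bigl[\langle Y,\widehat\ell_t\rangle^2\mid\cG_t\bigr]\le m\sum_{i\in Y}\ell_{t,i}^2/x_{t,i}\le m^2/\rho .
\]
Summing $\eta_t m^2/\rho$ over $t\le H$ and adding $\Lambda/\eta_H$ gives \eqref{eq:mixed-adv-phase}.

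For \eqref{eq:mixed-sto-phase} I would use the split \eqref{main:maximum-split}.
\begin{itemize}
\item The misranking term $\max_X\sup_n[D_n(X)]_+$ is handled by showing that $\exp\{\lambda D_n(X)\}$ is a supermartingale for $\lambda=\rho\Delta/(4m^2)$. The increments have mean $-\Delta_X$ and conditional variance at most $2m\|X-X^\star\|_1/\rho$, and they are bounded above by $m/\rho$. Then Ville's inequality, a union bound over $N$ actions and tail integration give $4m^2\Lambda/(\rho\Delta)$.
\item The term $[A_{1:\tau}(X^\star)]_+$ is bounded by $\sum_{t\le\tau}\langle b_t-X^\star,\bar\ell\rangle$ plus a stopped martingale $M_\tau$, whose positive part is controlled by $\E[M_\tau^2]^{1/2}$.
\item For the drift part, exponential weights give $\widehat q_t(X)\lesssim\exp(-\eta_t(t\Delta_X/2)+\ldots)$ on a good event. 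This yields the continued-run bound \eqref{main:mixed-ordinary}.
\item For the fluctuation part, $\E M_\tau^2\le(m^2/\rho)\,\E\sum_t\Prb_t(\text{suboptimal})$-type terms. These are again at most $\lesssim R_H^{\mathrm{mix}}/\Delta$, so AM--GM closes the bound.
\end{itemize}

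\textbf{Pure phases.} I would invoke the hybrid-FTRL analysis (Theorem~\ref{thm:hybrid-ftrl}) in pathwise form. Using Remark~\ref{rem:pure-representation} and \eqref{eq:shift-invariance}, the shifted regret equals the original empirical regret. The standard FTRL decomposition then bounds $\max_x\sum_{t\le n}\langle b_t-x,\widetilde\ell_t\rangle$ by
\[
C_n:=\sqrt n\,(\max\Psi-\min\Psi)\text{-type penalty terms}+\sum_{t\le n}(\sqrt t-\sqrt{t-1})[-\Psi(z_t)-m_0]+\sum_{t\le n}\text{stab}_t .
\]
Here each stability term is nonnegative because the shifted losses lie in $[-1,\cdot]$, which is the Zimmert--Seldin trick for the log-barrier-free part. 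This makes $C_n$ nonnegative and nondecreasing, so $U_{1:\tau}\le C_\tau\le C_H^{\mathrm{ext}}$ for the continuation with zero or fresh i.i.d. losses.

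Theorem~\ref{thm:hybrid-ftrl} bounds $\E C_H^{\mathrm{ext}}$ by $\sqrt{mDH}+m$, which is \eqref{eq:pure-adv-phase}. It also gives the $B_H$ bound in $\pi_t$. The self-bounding step $\E C_H\le 2B_H-\Delta\sum_t\pi_t$, followed by pointwise maximization over $\pi_t$, yields \eqref{eq:pure-sto-phase}.

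\textbf{Main obstacle.} I expect the hardest step to be the mixed stochastic fluctuation term. The stopped endpoint $\tau$ is correlated with the estimates, so I must keep every bound in a form that is monotone in $n$ or is a stopped martingale. I would also need to check that the variance of $\langle b_t-X^\star,\widehat\ell_t\rangle$ is genuinely proportional to the learner's suboptimal mass, using $\|b_t-X^\star\|_1\le 2m(1-\widehat q_t(X^\star))$, so that self-bounding applies rather than only a $\sqrt H$ rate.
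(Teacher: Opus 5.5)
Your plan follows the paper's proof step for step. The mixed part reproduces Steps 1--4: the pathwise second-order certificate evaluated at $\tau$ with predictable indicators $\one\{t\le\tau\}$; the split into $[A_\tau(X^\star)]_+$ and the misranking maximum; the supermartingale $e^{\lambda D_n(X)}$ with $\lambda=\rho\Delta/(4m^2)$; and the stopped $L^2$ bound via $\|b_t-X^\star\|_1\le2m(1-\widehat q_t(X^\star))$. The pure part reproduces Steps 5--8: monotone majorant, continuation to $H$, Theorem~\ref{thm:hybrid-ftrl}, and self-bounding.

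Two statements in your pure-phase paragraph are incorrect as written, though.

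\emph{Nonnegativity of the stability terms.} This has nothing to do with the range $\widetilde\ell_{t,i}\ge-1$. Since $z_t=\nabla\Phi_t(-\widehat{\bm L}_{t-1})$ (Lemma~\ref{lem:pure-interior}), the increment $s_t=\langle z_t,\widetilde\ell_t\rangle+\Phi_t(-\widehat{\bm L}_t)-\Phi_t(-\widehat{\bm L}_{t-1})$ is a Bregman divergence of the convex potential $\Phi_t$. It is therefore nonnegative for every loss vector, provided both potential terms use the same $\Phi_t$ and the change $\Phi_t\to\Phi_{t+1}$ is moved into the penalty. The bound $\widetilde\ell_{t,i}\ge-1$ is used only to localize the inverse Hessian ($2p-1\le\widetilde p\le2p$) when bounding $\E s_t$. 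It controls the size of the stability terms, not their sign.

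\emph{The definition of $C_n$.} Your $C_n$ must not contain a comparator-free term such as $\sqrt n\,(\max_{\mathcal Q}\Psi-\min_{\mathcal Q}\Psi)$. That term is deterministic, positive, and grows like $\sqrt n$. It would rule out the $\log H$ stochastic bound, and it is not the quantity Theorem~\ref{thm:hybrid-ftrl} controls. The role of equal cardinality is that every vertex comparator has $\Psi(x)=-m_0$. The comparator term $\sqrt n\,\Psi(x)=-m_0\sqrt n$ then exactly absorbs the centering, leaving $C_n=S_n+P_n$ with $P_n=-\Psi(z_1)-m_0+\sum_{t=2}^n(\sqrt t-\sqrt{t-1})[-\Psi(z_t)-m_0]$. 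Each summand is nonnegative because $\Psi(z)\le-\sum_iz_i=-m_0$ on $\mathcal Q$. Only this iterate-evaluated penalty shrinks as the iterates concentrate on $X^\star$, which is what the self-bounding step exploits.

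In the mixed drift bound, also be careful how you aggregate the per-action estimate $\widehat q_t(X)\lesssim e^{-\eta_t t\Delta_X/2}$. Summing it action by action gives order $\sum_{X\ne X^\star}m^2/(\rho\Delta_X)$. That can be a factor of order $N$ where the lemma has $\Lambda=\log(eN)$. The paper instead proceeds in three steps:
\begin{itemize}
\item it bounds the capped tail mass $Q_n(s)=\sum_{X:\Delta_X\ge s}\widehat q_n(X)$ in expectation by $\min\{1,Ne^{-s\sqrt{\rho\Lambda(n-1)}/(8m)}\}+\min\{1,Ne^{-(n-1)\rho s^2/(40m^2)}\}$;
\item it sums over $n$ to get $O(m^2\Lambda/(\rho s^2))$;
\item it integrates over $s$ with the layer-cake formula to reach $R_H^{\mathrm{mix}}\le Cm^2\Lambda/(\rho\Delta)$.
\end{itemize}
With these corrections your argument coincides with the paper's.
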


\begin{yellowbox}
\textbf{Role.} This lemma supplies the four
stopped-phase certificates used in Lemma~\ref{lem:main-stopped-certificates}.

\textbf{Proof~idea.} Mixed phases use full coverage and fluctuation
bounds; pure phases use a nonnegative, nondecreasing
majorant and a run continued to $H$.
The empirical maximization stays inside expectation, and
$H$ is only an analysis cap. Lemma~\ref{lem:conditional-restart}
transfers these fresh-run bounds to data-dependent phase
starts by conditioning on the phase-start history.
\end{yellowbox}

\begin{proof}
Steps 1--4 treat mixed phases, Steps 5--7 treat
pure phases, and Step 8 handles stopped endpoints.

\paragraph{Step 1: a pathwise certificate
for a mixed phase.}
Define
\[
\widehat L_t^{\mathrm{cum}}(X)
    :=\sum_{r=1}^t\widehat L_r(X),
\qquad
\mathcal M_t(\eta)
    :=-\frac{1}{\eta}\log\!\left[
       \frac1N\sum_X e^{-\eta\widehat L_t^{\mathrm{cum}}(X)}
       \right],
\]
Set $\mathcal M_0(\eta)=0$ and define the mix loss
\[
\mu_t:=\mathcal M_t(\eta_t)-\mathcal M_{t-1}(\eta_t)
    =-\frac1{\eta_t}\log\sum_X
       \widehat q_t(X)e^{-\eta_t\widehat L_t(X)}.
\]
Set
\[
V_t:=\sum_{X\in\widetilde{\mathcal X}}
        \widehat q_t(X)\widehat L_t(X)^2.
\]
Because every $\widehat L_t(X)$ is nonnegative,
$e^{-z}\leq 1-z+z^2/2$ for $z\geq0$ gives
\[
\sum_X\widehat q_t(X)e^{-\eta_t\widehat L_t(X)}
\leq 1-\eta_t\sum_X\widehat q_t(X)\widehat L_t(X)
       +\frac{\eta_t^2}{2}V_t.
\]
Applying $\log(1+u)\leq u$ and rearranging yields
\begin{equation}
\sum_X\widehat q_t(X)\widehat L_t(X)
\leq \mu_t+\frac{\eta_t}{2}V_t.
\label{eq:mix-loss-second-order}
\end{equation}

Define
\[
\pi_{t,\eta}(X):=
\frac{e^{-\eta\widehat L_t^{\mathrm{cum}}(X)}}
     {\sum_Y e^{-\eta\widehat L_t^{\mathrm{cum}}(Y)}}.
\]
For every distribution $q$,
\begin{align}
\sum_Xq(X)\widehat L_t^{\mathrm{cum}}(X)
    +\frac{\KL(q\|\mathrm{Unif})}{\eta}
&=\mathcal M_t(\eta)
    +\frac{\KL(q\|\pi_{t,\eta})}{\eta}
\geq\mathcal M_t(\eta),
\label{eq:gibbs-variational-identity}
\end{align}
and therefore
\[
\mathcal M_t(\eta)
=\min_{\substack{q\text{ a distribution}\\
                  \text{on }\widetilde{\mathcal X}}}
  \left\{\sum_Xq(X)\widehat L_t^{\mathrm{cum}}(X)
        +\frac{\KL(q\|\mathrm{Unif})}{\eta}\right\},
\]
with minimizer $q=\pi_{t,\eta}$. Since $\KL(q\|\mathrm{Unif})\geq0$, we have
\begin{equation}
0<\eta'\leq\eta
\quad\Longrightarrow\quad
\mathcal M_t(\eta')\geq\mathcal M_t(\eta).
\label{eq:mix-potential-monotone}
\end{equation}
Since $\eta_{t+1}\leq\eta_t$, telescoping gives

\begin{align*}
\sum_{t=1}^n\mu_t
&=\sum_{t=1}^n
  [\mathcal M_t(\eta_t)-\mathcal M_{t-1}(\eta_t)]\\
&=\mathcal M_n(\eta_n)
  +\sum_{t=1}^{n-1}
   [\mathcal M_t(\eta_t)-\mathcal M_t(\eta_{t+1})]\\
&\leq\mathcal M_n(\eta_n)
\leq\min_X\widehat L_n^{\mathrm{cum}}(X)
     +\frac{\log N}{\eta_n}.
\end{align*}
The inequalities above use \eqref{eq:mix-potential-monotone}
and $X^*\in\arg\min_X\widehat L_n^{\mathrm{cum}}(X)$ with \(q^*(X^*)=1\).
\begin{align*}
\max_X\sum_{t=1}^n\langle b_t-X,\widehat\ell_t\rangle
&=\sum_{t=1}^n\sum_Z\widehat q_t(Z)\widehat L_t(Z)
  -\min_X\widehat L_n^{\mathrm{cum}}(X)\\
& \leq 
\mathcal M_n(\eta_n)-\min_X\widehat L_n^{\mathrm{cum}}(X)
+\frac12\sum_{t=1}^n\eta_tV_t\\
&\leq\frac{\log N}{\eta_n}
   +\frac12\sum_{t=1}^n\eta_tV_t
\leq\frac{\log N}{\eta_{n+1}}
   +\frac12\sum_{t=1}^n\eta_tV_t.
\end{align*}
where first inequality follows from \eqref{eq:mix-loss-second-order} and $\sum_{t=1}^n\mu_t\leq\mathcal M_n(\eta_n)$.
The second inequality is from telescoping result above.

Taking positive parts yields, pathwise for every $n\leq H$,
\begin{equation}
U_{1:n}\leq\frac{\log N}{\eta_{n+1}}
             +\frac12\sum_{t=1}^n\eta_tV_t.
\label{eq:pathwise-mixed-certificate}
\end{equation}
No uniform upper bound on
the importance-weighted losses was used.

\paragraph{Step 2: the mixed adversarial expectation bound.}
If $b_t$ is the arm-selection marginal
of $\widehat q_t$, Cauchy--Schwarz and $\lVert X\rVert_1\leq m$ give
\begin{align*}
\widehat L_t(X)^2
&=\left(\sum_iX_i\widehat\ell_{t,i}\right)^2
 \leq\|X\|_1\sum_iX_i\widehat\ell_{t,i}^2
 \leq m\sum_iX_i\widehat\ell_{t,i}^2,\\
V_t
&=\sum_X\widehat q_t(X)\widehat L_t(X)^2
 \leq m\sum_i\left(\sum_X\widehat q_t(X)X_i\right)
             \widehat\ell_{t,i}^2
 =m\sum_i b_{t,i}\widehat\ell_{t,i}^2.
\end{align*}
Write $p_t:=x_t$. Mixed-phase coverage gives
$p_{t,i}\geq\rho$ for $i\in[d]$, so
\begin{equation}
\E[V_t\mid\mathcal F_{t-1},\ell_t]
\leq m\sum_{i\leq d}\frac{b_{t,i}\ell_{t,i}^2}{p_{t,i}}
\leq\frac{m^2}{\rho}.
\label{eq:mixed-second-moment}
\end{equation}
Predictability of $\1\{t\leq\tau\}$ gives
\[
\E[\1\{t\leq\tau\}V_t]
\leq\frac{m^2}{\rho}\Pr(t\leq\tau)
\leq\frac{m^2}{\rho}.
\]
Using $\tau\leq H$, $\log N\leq\Lambda$, and $\sum_{t=1}^Ht^{-1/2}\leq2\sqrt H$ in Equation~\eqref{eq:pathwise-mixed-certificate} gives
\begin{align*}
\E U_{1:\tau}
&\leq\frac{\log N}{\eta_{H+1}}
   +\frac12\sum_{t=1}^H\eta_t
       \E[\1\{t\leq\tau\}V_t]\\
&\leq m\sqrt{\frac{(H+1)\Lambda}{\rho}}
  +\frac{m\sqrt\Lambda}{2\sqrt\rho}
      \sum_{t=1}^Ht^{-1/2}\\
&\leq(\sqrt2+1)m\sqrt{\frac{H\Lambda}{\rho}}.
\end{align*}
This proves Equation~\eqref{eq:mixed-adv-phase} with $c_1=\sqrt2+1$.

\paragraph{Step 3: stochastic gap control
in a mixed phase.}
Continue the phase to $H$ with the same
mixing weight and unused i.i.d. losses and independent
seeds. The continuation agrees through $\tau$. For the
unique optimum $X^\star$ and gap $\Delta_X:=\langle X-X^\star,\bar\ell\rangle$, define
\[
D_n(X):=\sum_{t=1}^n\langle X^\star-X,\widehat\ell_t\rangle.
\]
Write $\E_{t-1}[\cdot]:=\E[\cdot\mid\mathcal F_{t-1}]$. Unbiasedness and independence of the
current loss from the past give
\begin{align}
\E_{t-1}[D_t(X)-D_{t-1}(X)]
&=\E_{t-1}\langle X^\star-X,\widehat\ell_t\rangle
 =\langle X^\star-X,\bar\ell\rangle
 =-\Delta_X.
\label{eq:gap-martingale-drift}
\end{align}
Let $P_X=X^\star\setminus X$ and $M_X=X\setminus X^\star$. Padding gives $|P_X|=|M_X|=:k\leq m$.
Write $Y_t:=D_t(X)-D_{t-1}(X)=A_t-B_t$, with $A_t:=\sum_{i\in P_X}\widehat\ell_{t,i}\geq0$ and $B_t:=\sum_{i\in M_X}\widehat\ell_{t,i}\geq0$. On
original coordinates, the fresh-draw identity gives $\E[\widehat\ell_{t,i}^2\mid\mathcal F_{t-1},\ell_t] =\ell_{t,i}^2/p_{t,i}\leq1/\rho$;
dummy estimates vanish. Cauchy--Schwarz and $|P_X\cup M_X|=2k$ yield
\[
Y_t^2\leq A_t^2+B_t^2
 \leq k\sum_{i\in P_X\cup M_X}\widehat\ell_{t,i}^2,
\qquad
\E_{t-1}Y_t^2\leq\frac{2k^2}{\rho}
\leq\frac{2m^2}{\rho}.
\]
Also,
\[
0\leq A_t,B_t\leq\frac{k}{\rho},
\qquad
|Y_t|=|A_t-B_t|\leq\frac{k}{\rho}\leq\frac m\rho,
\]
and, because $\Delta_X\leq m$ and $\rho\leq1$,
\[
|Y_t-\E_{t-1}Y_t|=|Y_t+\Delta_X|
\leq\frac m\rho+m\leq\frac{2m}{\rho},
\qquad
\Var_{t-1}(Y_t)\leq\E_{t-1}Y_t^2\leq\frac{2m^2}{\rho}.
\]
Put $r=n-1$ and
\[
S_r:=D_r(X)+r\Delta_X.
\]
For $n\geq2$, apply Lemma~\ref{lem:freedman} to the martingale
sum $S_r$ at threshold $r\Delta_X/2$, with
\[
x=\frac{r\Delta_X}{2},
\qquad v=\frac{2rm^2}{\rho},
\qquad c=\frac{2m}{\rho}.
\]
Since $0\leq\Delta_X\leq m$, the denominator in
the Freedman exponent obeys
\[
2(v+cx/3)=\frac{4rm^2}{\rho}+\frac{2rm\Delta_X}{3\rho}
\leq\frac{14rm^2}{3\rho}\leq\frac{5rm^2}{\rho}.
\]
Thus
\begin{equation}
\Pr\!\left(D_{n-1}(X)>-\frac{(n-1)\Delta_X}{2}\right)
\leq\exp\!\left\{-\frac{(n-1)\rho\Delta_X^2}{20m^2}\right\}.
\label{eq:freedman-gap-event}
\end{equation}
For $n=1$, the event is empty. Moreover,
\[
\frac{\widehat q_n(X)}{\widehat q_n(X^\star)}
=\exp\{\eta_nD_{n-1}(X)\}.
\]
For $0<s\leq m$, put $Q_n(s):=\sum_{X:\Delta_X\geq s}\widehat q_n(X)$ and define
\[
\mathcal E_{n,s}:=
\bigcap_{X:\Delta_X\geq s}
\left\{D_{n-1}(X)\leq-\frac{(n-1)\Delta_X}{2}\right\}.
\]
On $\mathcal E_{n,s}$,
\[
\widehat q_n(X)
\leq\frac{\widehat q_n(X)}{\widehat q_n(X^\star)}
\leq\exp\!\left\{-\frac{\eta_n(n-1)s}{2}\right\}.
\]
For $n\geq2$,
\[
\frac{\eta_n(n-1)s}{2}
=\frac{s\sqrt{\rho\Lambda}}{2m}\frac{n-1}{\sqrt n}
\geq\frac{s\sqrt{\rho\Lambda(n-1)}}{4m}
\geq\frac{s\sqrt{\rho\Lambda(n-1)}}{8m},
\]
and hence
\[
\E[Q_n(s)\1_{\mathcal E_{n,s}}]
\leq\min\!\left\{1,
N\exp\!\left(-\frac{s\sqrt{\rho\Lambda(n-1)}}{8m}\right)\right\}.
\]
On the complementary event, a
union bound and Equation~\eqref{eq:freedman-gap-event} give
\begin{align*}
\Pr(\mathcal E_{n,s}^{c})
&\leq\sum_{X:\Delta_X\geq s}
 \exp\!\left(-\frac{(n-1)\rho\Delta_X^2}{20m^2}\right)\\
&\leq N\exp\!\left(-\frac{(n-1)\rho s^2}{20m^2}\right)
 \leq N\exp\!\left(-\frac{(n-1)\rho s^2}{40m^2}\right).
\end{align*}
Since $0\leq Q_n(s)\leq1$,
\[
\E Q_n(s)
\leq\E[Q_n(s)\1_{\mathcal E_{n,s}}]
   +\Pr(\mathcal E_{n,s}^{c}).
\]
The bound also holds for
$n=1$ since $Q_1(s)\leq1$. Thus,
\begin{align}
\E Q_n(s)
&\leq\min\!\left\{1,
N\exp\!\left(-\frac{s\sqrt{\rho\Lambda(n-1)}}{8m}\right)\right\}
\notag\\[-2pt]
&\quad+\min\!\left\{1,
N\exp\!\left(-\frac{(n-1)\rho s^2}{40m^2}\right)\right\}.
\label{eq:gap-mass-bound}
\end{align}
Summing with \eqref{eq:combined-exponential-series}, using $r=n-1$,
$a=s\sqrt{\rho\Lambda}/(8m)$, and $b=\rho s^2/(40m^2)$, gives
\begin{equation}
\sum_{n=1}^H\E Q_n(s)\leq404\frac{m^2\Lambda}{\rho s^2}.
\label{eq:gap-mass-series}
\end{equation}
Set $r_n:=\sum_X\widehat q_n(X)\Delta_X$ and $R_H:=\E\sum_{n=1}^Hr_n$. Layer cake gives
\[
r_n=\sum_X\widehat q_n(X)\int_0^m\1\{s\leq\Delta_X\}\,ds
    =\int_0^mQ_n(s)\,ds.
\]
Since no action gap lies in $(0,\Delta)$, $Q_n(s)=Q_n(\Delta)$
for $0<s\leq\Delta$. Tonelli's theorem and Equation~\eqref{eq:gap-mass-series} imply
\begin{align}
R_H
&=\int_0^m\sum_{n=1}^H\E Q_n(s)\,ds\notag\\
&\leq\Delta\sum_{n=1}^H\E Q_n(\Delta)
 +404\frac{m^2\Lambda}{\rho}\int_\Delta^m\frac{ds}{s^2}
\leq808\frac{m^2\Lambda}{\rho\Delta}.
\label{eq:ordinary-mixed-pseudoregret}
\end{align}

\paragraph{Step 4: the expected empirical
maximum in a mixed phase.}
Define
\[
A_\tau(X^\star):=
\sum_{t=1}^{\tau}\langle b_t-X^\star,\widehat\ell_t\rangle.
\]
Pathwise,
\begin{align}
U_{1:\tau}
&=\left[A_\tau(X^\star)+\max_XD_\tau(X)\right]_+\notag\\
&\leq[A_\tau(X^\star)]_+
   +\left[\max_XD_\tau(X)\right]_+
 =[A_\tau(X^\star)]_++\max_X[D_\tau(X)]_+.
\label{eq:empirical-maximum-decomposition}
\end{align}
For $v_t:=b_t-X^\star$, unbiasedness gives
\[
\E_{t-1}\langle v_t,\widehat\ell_t\rangle
=\langle b_t-X^\star,\bar\ell\rangle
=\sum_X\widehat q_t(X)\Delta_X=r_t.
\]
Conditional on the current loss vector,
at most $m$ summands in the
importance-weighted inner product are nonzero. Hence
\begin{align*}
\langle v_t,\widehat\ell_t\rangle^2
&=\left(\sum_{i\leq d}
\frac{v_{t,i}X_{t,i}\ell_{t,i}}{p_{t,i}}\right)^2
\leq m\sum_{i\leq d}
\frac{v_{t,i}^2X_{t,i}\ell_{t,i}^2}{p_{t,i}^2},\\
\E_{t-1}\langle v_t,\widehat\ell_t\rangle^2
&\leq m\sum_{i\leq d}\frac{v_{t,i}^2}{p_{t,i}}
\leq\frac m\rho\|v_t\|_2^2.
\end{align*}
Using $|v_{t,i}|\leq1$, the $\ell_1$ triangle inequality, $\|X-X^\star\|_1\leq2m$,
and $\Delta_X\geq\Delta$ for $X\neq X^\star$, we obtain
\begin{align*}
\|b_t-X^\star\|_2^2
&\leq\|b_t-X^\star\|_1
\leq\sum_X\widehat q_t(X)\|X-X^\star\|_1\\
&\leq2m\sum_{X\neq X^\star}\widehat q_t(X)
\leq\frac{2m}{\Delta}\sum_X\widehat q_t(X)\Delta_X
=\frac{2m}{\Delta}r_t.
\end{align*}
Combining the last two displays gives
\begin{equation}
\E_{t-1}\langle b_t-X^\star,\widehat\ell_t\rangle^2
\leq\frac m\rho\|b_t-X^\star\|_2^2
\leq\frac{2m^2}{\rho\Delta}r_t.
\label{eq:optimum-fluctuation-second-moment}
\end{equation}
Set $W_t:=\langle b_t-X^\star,\widehat\ell_t\rangle$ and $M_\tau:=\sum_{t\leq\tau}(W_t-r_t)$. Since $\E_{t-1}W_t=r_t\geq0$, $M_\tau$
is a stopped martingale sum and
\[
\E\sum_{t\leq\tau}r_t
=\sum_{t=1}^H\E[\1\{t\leq\tau\}r_t]
\leq\sum_{t=1}^H\E r_t=R_H.
\]
Moreover, $\E_{t-1}(W_t-r_t)^2=\operatorname{Var}_{t-1}(W_t) \leq\E_{t-1}W_t^2$. The stopped identity~\eqref{eq:stopped-l2-identity}
and \eqref{eq:optimum-fluctuation-second-moment} therefore give
\[
\E M_\tau^2
\leq\frac{2m^2}{\rho\Delta}
      \E\sum_{t\leq\tau}r_t
\leq\frac{2m^2R_H}{\rho\Delta}.
\]
Cauchy--Schwarz, \eqref{eq:ordinary-mixed-pseudoregret}, and $\Lambda\geq1$ yield
\begin{equation}
\E[[A_\tau(X^\star)]_+]
\leq\E\sum_{t\leq\tau}r_t+\E|M_\tau|
\leq R_H+\sqrt{\frac{2m^2R_H}{\rho\Delta}}
\leq C\frac{m^2\Lambda}{\rho\Delta},
\label{eq:optimum-positive-fluctuation}
\end{equation}
Finally, take $\lambda:=\rho\Delta/(4m^2)$. The same increment bounds
used in Equation~\eqref{eq:freedman-gap-event}, together with the conditional
Bernstein mgf inequality, Equation~\eqref{eq:conditional-bernstein-log-mgf}, give $\lambda(2m/\rho)/3=\Delta/(6m)\leq1/6$ and
\begin{align*}
\log\E_{t-1}e^{\lambda\langle X^\star-X,\widehat\ell_t\rangle}
&\leq-\lambda\Delta_X
 +\frac{\lambda^2(2m^2/\rho)}
        {2[1-\lambda(2m/\rho)/3]}\\
&\leq-\lambda\Delta_X+\frac3{10}\lambda\Delta
\leq0.
\end{align*}
Equivalently,
\[
\E_{t-1}\exp\{\lambda\langle X^\star-X,
\widehat\ell_t\rangle\}\leq1
\qquad(X\neq X^\star).
\]
Thus $e^{\lambda D_n(X)}$ is a nonnegative supermartingale. Equation~\eqref{eq:expected-max-via-ville} yields
\begin{equation}
\E\max_{X\in\widetilde{\mathcal X}}\sup_{n\leq H}[D_n(X)]_+
\leq\frac{\log N+1}{\lambda}
\leq\frac{4m^2\Lambda}{\rho\Delta}.
\label{eq:misranking-maximum}
\end{equation}
Combining Equations~\eqref{eq:empirical-maximum-decomposition}, \eqref{eq:optimum-positive-fluctuation}, and \eqref{eq:misranking-maximum} proves Equation~\eqref{eq:mixed-sto-phase}.

\paragraph{Step 5: a pathwise majorant
for a pure phase.}
Set $\widehat{\bm L}_n:=\sum_{t\leq n}\widetilde\ell_t$ and define
\[
\Phi_n(y):=\max_{z\in\conv(\widetilde{\mathcal X}_0)}
\{\langle z,y\rangle-\eta_n^{-1}\Psi(z)\}.
\]
Lemma~\ref{lem:pure-interior} gives $b_n=\nabla\Phi_n(-\widehat{\bm L}_{n-1})$. Define
\begin{align*}
S_n&:=\sum_{t=1}^n
 \left[\langle b_t,\widetilde\ell_t\rangle
 +\Phi_t(-\widehat{\bm L}_t)
 -\Phi_t(-\widehat{\bm L}_{t-1})\right],\\
P_n&:=-\Psi(b_1)-m_0
 +\sum_{t=2}^n(\sqrt t-\sqrt{t-1})[-\Psi(b_t)-m_0],\\
C_n&:=S_n+P_n.
\end{align*}
Here $S_n$ is the stability term and
$P_n$ collects the initial and rate-change penalties,
centered at the vertex value $-m_0$. Convexity
of $\Phi_n$ and its gradient identity give
\begin{align*}
\Phi_n(-\widehat{\bm L}_n)
&\geq\Phi_n(-\widehat{\bm L}_{n-1})
 +\langle b_n,-\widetilde\ell_n\rangle,\\
S_n-S_{n-1}
&=\langle b_n,\widetilde\ell_n\rangle
 +\Phi_n(-\widehat{\bm L}_n)
 -\Phi_n(-\widehat{\bm L}_{n-1})\geq0.
\end{align*}
Moreover, every vertex $x\in\widetilde{\mathcal X}_0$ satisfies $\Psi(x)=-m_0$. For
every $z$ in its convex hull,
\[
\Psi(z)=\sum_i\{-\sqrt{z_i}+(1-z_i)\log(1-z_i)\}
\leq-\sum_i z_i=-m_0.
\]
Therefore
\[
P_1=-\Psi(b_1)-m_0\geq0,
\qquad
P_n-P_{n-1}=(\sqrt n-\sqrt{n-1})[-\Psi(b_n)-m_0]\geq0
\quad(n\geq2).
\]
Thus $S_n$, $P_n$, and $C_n=S_n+P_n$
are nonnegative and nondecreasing.

For every $x\in\widetilde{\mathcal X}_0$, the definition of $S_n$ gives
\begin{equation}
\sum_{t=1}^n\langle b_t-x,\widetilde\ell_t\rangle
=S_n+\sum_{t=1}^n
 \{\Phi_t(-\widehat{\bm L}_{t-1})-
   \Phi_t(-\widehat{\bm L}_t)\}
 -\langle x,\widehat{\bm L}_n\rangle.
\label{eq:pure-ftrl-decomposition}
\end{equation}
The potential terms telescope as
\begin{align}
\sum_{t=1}^n
 \{\Phi_t(-\widehat{\bm L}_{t-1})-
   \Phi_t(-\widehat{\bm L}_t)\}
&=\Phi_1(0)-\Phi_n(-\widehat{\bm L}_n)\notag\\
&\quad+\sum_{t=1}^{n-1}
 \{\Phi_{t+1}(-\widehat{\bm L}_t)-
   \Phi_t(-\widehat{\bm L}_t)\}.
\label{eq:changing-potential-telescope}
\end{align}
We have $\Phi_1(0)=-\Psi(b_1)$. Plugging $b_{t+1}$ in $\Phi_t(-\widehat{\bm L}_t)$ gives
\begin{equation}
\Phi_{t+1}(-\widehat{\bm L}_t)-\Phi_t(-\widehat{\bm L}_t)
\leq(\sqrt{t+1}-\sqrt t)[-\Psi(b_{t+1})].
\label{eq:potential-rate-change}
\end{equation}
Finally, plugging $x$ in the maximum
defining $\Phi_n$ and using $\Psi(x)=-m_0$ gives
\begin{equation}
-\Phi_n(-\widehat{\bm L}_n)-\langle x,\widehat{\bm L}_n\rangle
\leq\sqrt n\Psi(x)=-m_0\sqrt n.
\label{eq:test-pure-comparator}
\end{equation}
Substituting Equations~\eqref{eq:changing-potential-telescope}-- \eqref{eq:test-pure-comparator} into Equation~\eqref{eq:pure-ftrl-decomposition},
and using $1+\sum_{t=2}^n(\sqrt t-\sqrt{t-1})=\sqrt n$, yields
\begin{align}
\sum_{t=1}^n\langle b_t-x,\widetilde\ell_t\rangle
&\leq S_n-\Psi(b_1)
 +\sum_{t=2}^n(\sqrt t-\sqrt{t-1})[-\Psi(b_t)]
 -m_0\sqrt n\notag\\
&=S_n+P_n=C_n.
\label{eq:pure-majorant-comparator}
\end{align}
Since the comparator objective is linear,
\[
\max_{x\in\conv(\widetilde{\mathcal X}_0)}
\sum_{t=1}^n\langle b_t-x,\widehat g_t\rangle
=\max_{X\in\widetilde{\mathcal X}_0}
\sum_{t=1}^n\langle b_t-X,\widetilde\ell_t\rangle
\leq C_n,
\]
where the equality uses Equation~\eqref{eq:shift-invariance} and
the inequality is Equation~\eqref{eq:pure-majorant-comparator}. Because $C_n\geq0$,
\[
U_{1:n}=\left[
\max_x\sum_{t=1}^n\langle b_t-x,\widehat g_t\rangle
\right]_+\leq C_n.
\]
Since $C_n$ is nondecreasing, $U_{1:\tau}\leq C_\tau\leq C_H$
for every stopping time $\tau\leq H$.

\paragraph{Step 6: bound the pure majorant adversarially.}
Apply Theorem~\ref{thm:hybrid-ftrl} with a fixed comparator $X^\circ\in\widetilde{\mathcal X}_0$
($X^\circ=X^\star$ in the stochastic case), using
\[
f(z):=\sum_{i:X_i^\circ=0}\sqrt{z_i},
\qquad
g(z):=\sum_{i:X_i^\circ=1}
       (1-\log(1-z_i))(1-z_i).
\]
For original coordinates, Assumption~\ref{ass:non-anticipation} and the tower property
give the conditional marginal identity after projection. For
dummy coordinates, $g_{t,i}=0$ and the shifted estimate is
$-1$. Thus \eqref{eq:hybrid-direct-sampling} holds in both cases, and
\[
\E\!\left[
\left.\left(\frac{X_{t,i}^0g_{t,i}}{b_{t,i}}-1\right)^2
\right|\mathcal F_{t-1},g_t\right]
=\frac{g_{t,i}^2}{b_{t,i}}-2g_{t,i}+1 
= \left( \frac{1}{b_{t,i}} -1 \right)g_{t,i}^2 + (1 - g_{t,i})^2
\leq\frac1{b_{t,i}}.
\]
For $t\geq58$, $\eta_t=t^{-1/2}\leq\min\{(\sqrt2-1)/2,\log(2)/4\}$, so the local inverse-Hessian
bound \eqref{eq:local inverse-Hessian bound} applies; \eqref{eq:hybrid-BH}
charges earlier rounds $116m_0$. Since $m_0\leq m$,
\[
\E C_H\leq\sum_{n=1}^H\frac{25}{\sqrt n}
      \{f(\E b_n)+g(\E b_n)\}+116m.
\]
In Theorem~\ref{thm:hybrid-ftrl}, take $z_n=b_n$, $A_n=X_n^0$, $\lambda_n=g_n-\1$,
and $\widehat\lambda_n=\widetilde\ell_n$. Equal cardinality gives
\[
\langle b_t,\widetilde\ell_t\rangle
=\sum_i b_{t,i}
\left(\frac{X_{t,i}^0g_{t,i}}{b_{t,i}}-1\right)
=\langle X_t^0,g_t\rangle-m_0
=\langle X_t^0,g_t-\mathbf1\rangle
=\langle A_t,\lambda_t\rangle.
\]
 Thus \eqref{eq:hybrid-stability} gives the representation of $\E S_H$,
 and \eqref{eq:hybrid-stability-bound} gives its upper bound.
 For $z\in\conv(\widetilde{\mathcal X}_0)$, $\sum_i z_i=m_0$ and Cauchy--Schwarz give
\[
f(z)\leq\sqrt{D_{\mathrm{var}}\sum_{i:X_i^\circ=0}z_i}
\leq\sqrt{D_{\mathrm{var}}m_0}.
\]
Since $0\leq v(1-\log v)\leq1$ for $v\in[0,1]$, we have $g(z)\leq m_0$.
Together with $m_0\leq D_{\mathrm{var}}$, this gives $f(z)+g(z)\leq2\sqrt{m_0D_{\mathrm{var}}}\leq2\sqrt{mD}$. Using
$\sum_{n=1}^Hn^{-1/2}\leq2\sqrt H$ in the preceding theorem bound gives
\[
\E C_H\leq100\sqrt{mDH}+116m,
\]
proving~\eqref{eq:pure-adv-phase}.

\paragraph{Step 7: the pure stochastic bound.}
In the stochastic case, the fixed optimal action
is included in Equation~\eqref{eq:pure-majorant-comparator}, and unbiasedness gives
\[
r(q):=\sum_{X\in\widetilde{\mathcal X}_0}q(X)\Delta_X,
\qquad \bar q_n:=\E[\widehat q_n],
\qquad
R_H^{\mathrm{pure}}:=\sum_{n=1}^Hr(\bar q_n)
=\E\sum_{n=1}^H\langle b_n-X^\star,\widetilde\ell_n\rangle
\leq\E C_H.
\]
The marginal of $\bar q_n$ is $\E b_n$, so this
is the regret in Theorem~\ref{thm:hybrid-ftrl}. Let $B_H$ denote
its bound on $\E C_H$. Then $R_H^{\mathrm{pure}}\leq\E C_H\leq B_H$, hence
\[
B_H-R_H^{\mathrm{pure}}\geq B_H-\E C_H\geq0,
\qquad
\E C_H\leq B_H\leq B_H+(B_H-R_H^{\mathrm{pure}})=2B_H-R_H^{\mathrm{pure}}.
\]
Theorem~\ref{thm:hybrid-ftrl} proves the required self-bounding inequality in
Equation~\eqref{eq:hybrid-stochastic-self-bound}. Since $m_0\leq m$ and $D_{\mathrm{var}}\leq D$, it gives
\[
\E C_H\leq c\left[
\frac{mD}{\Delta}\log(eH)+\frac{m^2}{\Delta}+m\right],
\]
proving~\eqref{eq:pure-sto-phase}.

\paragraph{Step 8: random phase endings.}
For the stopped phase length $\tau\leq H$,
\[
I_t:=\1\{t\leq\tau\}=\1\{\tau>t-1\}\in\mathcal F_{t-1}.
\]
Thus the stopped sums retain
the conditional moment bounds.

\emph{Mixed adversarial phase.} Substitute $n=\tau$ in \eqref{eq:pathwise-mixed-certificate}:
\[
U_{1:\tau}
\leq \frac{\log N}{\eta_{\tau+1}}
 +\frac12\sum_{t=1}^{H}I_t\eta_tV_t.
\]
Apply Step~2 using predictable $I_t$ and $\tau\leq H$.

\emph{Mixed stochastic phase.} In the
continuation from Step~3, $r_t\geq0$ gives
\[
\E\sum_{t\leq\tau}r_t\leq R_H,
\qquad
\max_X[D_\tau(X)]_+
\leq\max_X\sup_{n\leq H}[D_n(X)]_+.
\]
Steps~3--4 therefore bound $U_{1:\tau}$.

\emph{Pure phase.} Continue hybrid-FTRL from $\tau$ to $H$,
preserving the run through $\tau$. Let $C_n^{\mathrm{ext}}$ be
its majorant, with $U_{1:0}=C_0=C_0^{\mathrm{ext}}:=0$. Step~5 gives pathwise
\[
U_{1:\tau}\leq C_\tau=C_\tau^{\mathrm{ext}}
\leq C_H^{\mathrm{ext}},
\qquad
\E U_{1:\tau}\leq\E C_H^{\mathrm{ext}}.
\]
After $\tau$, use unused i.i.d. losses and seeds
in the stochastic case; in the adversarial
case, use fresh action seeds and set
\[
g_t^{\mathrm{ext}}:=
\begin{cases}
g_t,&t\leq\tau,\\
0,&t>\tau.
\end{cases}
\]
Predictability of $\{t\leq\tau\}$ preserves non-anticipation. Theorem~\ref{thm:hybrid-ftrl} therefore
applies in both cases. The continuation
is used only in the analysis.
\end{proof}

%
\section{Restarts and phase counting}
\label{sec:stopping-and-counting}

For a fixed horizon $T\geq1$, define
\[
M_T:=\frac{2mT}{\delta},
\qquad
J_T:=3+\left\lceil\log_2\frac{M_T}{L}\right\rceil.
\]

\begin{lemma}[Unbiased estimates and one-round overshoot bounds]
\label{lem:unbiased-and-overshoot}
For every predictable $z_t\in\mathcal P$ and fixed $z\in\mathcal P$,
\[
\E[\langle z_t-z,\widehat\ell_t\rangle\mid\mathcal G_t,\ell_t]
=\langle z_t-z,\ell_t\rangle.
\]
On a round with mixing weight $\alpha\in(0,1]$,
\begin{align}
\alpha\max_{x\in\mathcal P}
\langle b_t-x,\widehat\ell_t\rangle
&\leq L, \label{eq:base-one-round-overshoot}\\
(1-\alpha)\max_{x\in\mathcal P}
\langle x^c-x,\widehat\ell_t\rangle
&\leq L. \label{eq:baseline-one-round-overshoot}
\end{align}
\end{lemma}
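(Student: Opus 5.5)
The unbiasedness identity is exactly Lemma~\ref{lem:importance-unbiasedness}, conditional form. Since $z_t$ is $\cG_t$-measurable and $z$ is fixed, the conditional expectation of $\langle z_t-z,\widehat\ell_t\rangle$ given $(\cG_t,\ell_t)$ splits coordinatewise. We apply $\E[\widehat\ell_{t,i}\mid\cG_t,\ell_t]=\ell_{t,i}$, which follows from Assumption~\ref{ass:non-anticipation} via \eqref{eq:fresh-draw} and $x_{t,i}>0$. Positivity of $x_{t,i}$ holds in both phase types: coverage gives it in mixed phases, and Lemma~\ref{lem:pure-interior} together with the constant-one coordinates gives it in pure phases. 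Nothing beyond citing this is needed.

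For the two overshoot bounds, the plan is to drop the negative comparator term and then compare the remaining term with the realized loss. Since $\widehat\ell_t\ge0$ coordinatewise and $x\in\cP\subseteq[0,1]^d$, we have $\langle x,\widehat\ell_t\rangle\ge0$. Hence $\max_x\langle b_t-x,\widehat\ell_t\rangle\le\langle b_t,\widehat\ell_t\rangle$, and the same holds with $x^c$ in place of $b_t$. Now use $x_{t,i}=\alpha b_{t,i}+(1-\alpha)x^c_i$, in which both summands are nonnegative. This gives $\alpha b_{t,i}\le x_{t,i}$ and $(1-\alpha)x^c_i\le x_{t,i}$. Therefore
\[
\alpha\langle b_t,\widehat\ell_t\rangle=\sum_i\frac{\alpha b_{t,i}}{x_{t,i}}X_{t,i}\ell_{t,i}\le\sum_iX_{t,i}\ell_{t,i}=\langle X_t,\ell_t\rangle\le L,
\]
and identically $(1-\alpha)\langle x^c,\widehat\ell_t\rangle\le\langle X_t,\ell_t\rangle\le L$. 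This proves \eqref{eq:base-one-round-overshoot} and \eqref{eq:baseline-one-round-overshoot}.

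The only delicate point is that the argument is pathwise, and it must hold in pure phases, where $b_t$ lives in the padded and projected representation. There I would check that the original-coordinate marginal of $\widehat q_t$ is used both in \eqref{eq:iw-estimator} and in the statistic. This holds by the description in Section~\ref{sec:base-learner}. In that case $\alpha=1$, $x_t=b_t$, and the first bound reduces to $\langle X_t,\ell_t\rangle\le L$; the second bound is trivial because $1-\alpha=0$. Unselected arms contribute zero to every sum, so no division issue arises.
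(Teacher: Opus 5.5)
Your proposal is correct and follows the paper's proof essentially step for step. You get unbiasedness coordinatewise from Lemma~\ref{lem:importance-unbiasedness}; you drop the comparator term using $x\ge0$ and $\widehat\ell_t\ge0$; and you use $x_t\ge\alpha b_t$ and $x_t\ge(1-\alpha)x^c$ to bound each weighted estimate by $\langle X_t,\ell_t\rangle\le L$.
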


\begin{yellowbox}
\textbf{Role.} The two bounds control the terminal-round
costs of the restart tests, while unbiasedness
later converts estimated regret to expected regret.

\textbf{Proof~idea.} Nonnegative estimates let us drop each comparator's
loss. The mixture marginal dominates both $\alpha b_t$
and $(1-\alpha)x^c$, so the corresponding importance-weight ratios
are at most one. Each weighted estimated
loss is therefore bounded by the observed
action loss, which is at most $L$.
\end{yellowbox}

\begin{proof}
Lemma~\ref{lem:importance-unbiasedness} gives
\begin{align*}
\E[\langle z_t-z,\widehat\ell_t\rangle\mid\mathcal G_t,\ell_t]
&=\sum_i(z_{t,i}-z_i)
  \E[\widehat\ell_{t,i}\mid\mathcal G_t,\ell_t]
 =\langle z_t-z,\ell_t\rangle.
\end{align*}
Because $x\geq0$ and $\widehat\ell_t\geq0$ coordinatewise,
\begin{align*}
\max_{x\in\mathcal P}\langle b_t-x,\widehat\ell_t\rangle
&=\langle b_t,\widehat\ell_t\rangle
  -\min_{x\in\mathcal P}\langle x,\widehat\ell_t\rangle
 \leq\langle b_t,\widehat\ell_t\rangle,\\
\max_{x\in\mathcal P}\langle x^c-x,\widehat\ell_t\rangle
&\leq\langle x^c,\widehat\ell_t\rangle.
\end{align*}
Since $x_t\geq\alpha b_t$ and $x_t\geq(1-\alpha)x^c$ coordinatewise,
\begin{align*}
\alpha\langle b_t,\widehat\ell_t\rangle
&=\sum_i\frac{\alpha b_{t,i}}{x_{t,i}}X_{t,i}\ell_{t,i}
 \leq\sum_iX_{t,i}\ell_{t,i}\leq L,\\
(1-\alpha)\langle x^c,\widehat\ell_t\rangle
&=\sum_i\frac{(1-\alpha)x_i^c}{x_{t,i}}X_{t,i}\ell_{t,i}
 \leq\sum_iX_{t,i}\ell_{t,i}\leq L.
\end{align*}
Combining the last two displays
proves Equations~\eqref{eq:base-one-round-overshoot} and \eqref{eq:baseline-one-round-overshoot}.
\end{proof}

\begin{lemma}[Threshold crossing and number of restarts]
\label{lem:threshold-crossing}
At the first crossing of
a stage budget $R_s$,
\[
R_s<U_{u:t}\leq2R_s,
\]
and the next budget is $R_{s+1}=2R_s$. On every
phase interval, $U_{u:t}\leq M_T$. Consequently, Algorithm~\ref{alg:phased-sb} has at most
$J_T$ stages, at most $J_T$ phases per stage,
and at most $J_T^2$ phase intervals in total.
\end{lemma}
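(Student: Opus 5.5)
The argument has three parts: a one-round increment bound for $U$, a crude bound on the size of any phase statistic, and a doubling count. For the increment, fix a phase starting at $u$ with weight $\alpha=\alpha_{s,k}$. I would show $U_{u:t}-U_{u:t-1}\le L/\alpha$. Write $U_{u:t}=[\max_{x\in\cP}\sum_{j=u}^t\langle b_j-x,\widehat\ell_j\rangle]_+$. Since the maximum of a sum is at most the sum of the maxima, and $[a+b]_+\le[a]_++[b]_+$, the increment is at most $[\max_x\langle b_t-x,\widehat\ell_t\rangle]_+$. By \eqref{eq:base-one-round-overshoot} in Lemma~\ref{lem:unbiased-and-overshoot}, this is at most $L/\alpha$.

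By \eqref{main:phase-schedule}, $\alpha R_s=L2^{k-1}\ge L$, so $L/\alpha\le R_s$. At the first round $t$ with $U_{u:t}>R_s$, we have $U_{u:t-1}\le R_s$; this holds trivially when $t=u$, since $U_{u:u-1}=0$. Hence $U_{u:t}\le 2R_s$. The identity $R_{s+1}=2R_s$ is the algorithm's rule. A non-triggering round always has $U\le R_s$. So every phase interval, including one cut off at $T$, satisfies $U_I\le2R_{s(I)}$.

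For the crude bound $U_{u:t}\le M_T$, drop the comparator using nonnegativity of the estimates, which gives $U_{u:t}\le\sum_{j}\langle b_j,\widehat\ell_j\rangle$. Each term is at most $m\max_i\widehat\ell_{j,i}\le m/\min_i x_{j,i}$ on arms with $b_{j,i}>0$. In mixed phases $x_{j,i}\ge\rho=\delta/2$, so each round contributes at most $2m/\delta$, and summing over at most $T$ rounds gives $2mT/\delta=M_T$.

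The pure phase is the main obstacle, because $b_{j,i}$ can be tiny. There I would instead use $\langle b_j,\widehat\ell_j\rangle=\sum_i X_{j,i}\ell_{j,i}\le L\le m$, since $x_j=b_j$. This gives $U\le LT\le M_T$ because $\delta\le1$. The padded coordinates have zero loss, and for fixed coordinates $x_{j,i}=1$.

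For the counting step, every stage $s$ that is started before $T$ with $s\ge2$ was triggered by a crossing with $R_{s-1}<U\le M_T$. Then $2^{s-1}L<M_T$, so $s-1<\log_2(M_T/L)$ and $s\le 1+\lceil\log_2(M_T/L)\rceil\le J_T$. Within stage $s$, \eqref{main:phase-schedule} gives $k\le s+1\le J_T$, because the weight-doubling test requires $\alpha<1$. Multiplying the two counts gives at most $J_T^2$ intervals. The only care needed is the off-by-one in $J_T=3+\lceil\cdot\rceil$, which leaves slack, and noting that $M_T\ge2L$ so the logarithm is nonnegative.
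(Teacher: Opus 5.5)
Your proposal is correct and follows the paper's proof essentially step for step: the max-increment inequality combined with the importance-weighted overshoot bound gives a one-round increment of at most $L/\alpha\le R_s$, hence $R_s<U_{u:t}\le 2R_s$ at a first crossing. The envelope $U_{u:t}\le M_T$ comes from dropping the comparator, with at most $2m/\delta$ per mixed round and the observed loss $\le L$ per pure round, and the counting uses $2^{s-1}L<M_T$ for stages and $k\le s+1$ for phases. The only cosmetic difference is that you bound the index of every played stage through the crossing that created it, whereas the paper counts crossings and then adds the terminal stage; the two are equivalent.
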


\begin{yellowbox}
\textbf{Role.} This lemma limits the number of phases
that require certificates and relates each new
stage budget to its triggering empirical regret.

\textbf{Proof~idea.} The one-round increment is at most $L/\alpha\leq R_s$,
so a first crossing lies in $(R_s,2R_s]$
and the next dyadic budget is exactly
$2R_s$. Coverage in mixed phases and the
pure-phase loss bound give $U_I\leq M_T$. Budget doubling
and $\alpha \le 1$ then give the stated counts.
\end{yellowbox}

\begin{proof}
Let $V_t=\max_x\sum_{j=u}^t\langle b_j-x,\widehat\ell_j\rangle$, so $U_{u:t}=[V_t]_+$. The
max-increment inequality, Equation~\eqref{eq:positive-running-max-increment}, gives
\[
U_{u:t}-U_{u:t-1}
\leq\left[\max_x\langle b_t-x,\widehat\ell_t\rangle\right]_+
\leq\langle b_t,\widehat\ell_t\rangle
\leq\frac L\alpha.
\]
Recall that in phase $k$,
$\alpha=\min\{L2^{k-1}/R_s,1\}$. If $\alpha<1$, then
\[
\frac L\alpha=\frac{R_s}{2^{k-1}}\leq R_s;
\]
if $\alpha=1$, then $L/\alpha=L\leq R_s$. Thus every round satisfies
\[
U_{u:t}-U_{u:t-1}\leq R_s.
\]
At a first crossing, $U_{u:t-1}\leq R_s$ (with $U_{u:u-1}:=0$), so
\[
R_s<U_{u:t}\leq U_{u:t-1}+R_s\leq2R_s.
\]
Since $R_s/L$ is a power of
two, rounding $U_{u:t}\in(R_s,2R_s]$ up to the
next dyadic $L$-multiple gives $R_{s+1}=2R_s$.

For the envelope, nonnegativity gives $U_{u:t}\leq\sum_{j=u}^t\langle b_j,\widehat\ell_j\rangle$. In a
mixed phase, $\alpha<1$ implies $\alpha\leq1/2$, hence $x_{j,i}\geq\delta/2$, $\widehat\ell_{j,i}\leq2/\delta$,
and $\langle b_j,\widehat\ell_j\rangle\leq2m/\delta$. In the pure phase $x_j=b_j$
and this inner product equals the observed
action loss, which is at most $L$.
Since $L\leq m\leq2m/\delta$, both cases give, for $I=\{u,\ldots,t\}$,
\[
U_{u:t}\leq|I|\max\left\{\frac{2m}{\delta},L\right\}
=|I|\frac{2m}{\delta}\leq T\frac{2m}{\delta}=M_T.
\]
Crossing budgets satisfy $R_s<M_T$. Starting at $R_1=2L$ and
doubling at each crossing gives at most
\[
1+\left\lceil\log_2\frac{M_T}{L}\right\rceil.
\]
Including the terminal stage gives
fewer than $J_T$ stages.

The first pure phase in
stage $s$ has index
\[
k_{\mathrm{pure}}=1+\log_2(R_s/L).
\]
Since $R_1=2L$ and later budgets satisfy $R_s=2R_{s-1}<2M_T$,
\[
k_{\mathrm{pure}}
<2+\log_2(M_T/L)
\leq2+\left\lceil\log_2(M_T/L)\right\rceil<J_T.
\]
Comparator-gap restarts stop at $\alpha=1$. Hence there
are at most $J_T$ phases per
stage and $J_T^2$ phases in total.
\end{proof}

\begin{lemma}[Conditional validity after a data-dependent restart]
\label{lem:conditional-restart}
\par\nobreak Pick a stage/phase label $(s,k)$. Let $\sigma+1$
be the first round of this phase. If
the phase is not reached by time $T$,
set $\sigma=T$, denote its length as $\tau$
and set empirical-regret statistic $U_I$ to zero.
Put $A:=\{\sigma<T\}$ and call it reach event.

Then $\sigma$ is a stopping time, $A\in\mathcal F_\sigma$, and
$\tau$ is a bounded stopping time for
$\mathcal H_n:=\mathcal F_{(\sigma+n)\wedge T}$. On $A$, Lemma~\ref{lem:one-phase-empirical-regret} holds conditionally on
$\mathcal F_\sigma$, under Assumption~\ref{ass:non-anticipation} in the adversarial case
and Assumptions~\ref{ass:stochastic} and~\ref{ass:gap} in the stochastic case.
If $C_I$ is its deterministic bound, then
\[
\E[\1_AU_I]\leq\Pr(A)C_I.
\]
\end{lemma}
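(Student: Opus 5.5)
The argument has three parts: show that $\sigma$ and the reach event are measurable, show that $\tau$ is a bounded stopping time for the shifted filtration, and verify that the hypotheses of Lemma~\ref{lem:one-phase-empirical-regret} survive conditioning on $\mathcal F_\sigma$.

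\textbf{Stopping times and the reach event.} The algorithm's state after round $t$ (stage, phase index, statistics $S,A$, learner state) is a fixed measurable function of observed feedback and seeds through $t$, so it is $\mathcal F_t$-measurable. The event that label $(s,k)$ is declared to start on round $t+1$ is determined by that state. Hence $\{\sigma=t\}\in\mathcal F_t$ for $t<T$, and $\{\sigma=T\}$ is the complement of their union, so it lies in $\mathcal F_T$. This makes $\sigma$ a stopping time bounded by $T$. The reach event satisfies $A=\{\sigma<T\}=\bigcup_{t<T}\{\sigma=t\}$, and $\{\sigma<T\}\cap\{\sigma\le t\}\in\mathcal F_t$, so $A\in\mathcal F_\sigma$.

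\textbf{The phase length $\tau$.} On $A$, the phase ends at the first local round $n$ at which a test fires, or at $n=T-\sigma$ when the horizon is reached; off $A$, set $\tau=0$. Whether a test fires at local round $n$ depends only on data through global round $\sigma+n$. Therefore $\{\tau\le n\}\in\mathcal F_{(\sigma+n)\wedge T}=\mathcal H_n$, and $\tau\le T-\sigma\le T$. In particular $\{n\le\tau\}\in\mathcal H_{n-1}$, which is the predictability Lemma~\ref{lem:one-phase-empirical-regret} requires, with cap $H=T$.

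\textbf{Conditional hypotheses.} This is the main step.
\begin{itemize}
\item \emph{Adversarial case.} Assumption~\ref{ass:non-anticipation} is a statement about $\Prb(X_{t,i}=1\mid\mathcal G_t,\ell_t)$ at deterministic rounds $t$. I would transfer it to local round $n$ with pre-draw field $\mathcal G_{\sigma+n}$ restricted to $A\cap\{\sigma=t_0\}$, summing over the countably many values $t_0<T$. Since $\{\sigma=t_0\}\in\mathcal F_{t_0}\subseteq\mathcal G_{t_0+n}$, conditioning on $\mathcal G_{\sigma+n}$ agrees with conditioning on $\mathcal G_{t_0+n}$ on that event. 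The phase uses a fresh learner whose local state depends only on local data, so all conditional second-moment and unbiasedness computations of Lemma~\ref{lem:one-phase-empirical-regret} go through verbatim given $\mathcal F_\sigma$.
\item \emph{Stochastic case.} I would invoke the strong Markov property of i.i.d.\ sequences. For each deterministic $t_0$, the pairs $(\zeta_{t_0+j})_{j\ge1}$ are i.i.d.\ $P\otimes\nu$ and independent of $\mathcal F_{t_0}$. Decomposing over $\{\sigma=t_0\}$ then shows that, conditionally on $\mathcal F_\sigma$ on $A$, the post-$\sigma$ sequence has the product law $P^{\otimes\infty}\otimes\nu^{\infty}$. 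If $T-\sigma<H$, the continuation with unused i.i.d.\ losses used in Steps~3 and~8 fills in the remaining rounds.
\end{itemize}
The technical point is to use a regular conditional distribution. The fresh phase is a fixed measurable map of $(\mathcal F_\sigma$-measurable start parameters, post-$\sigma$ innovations$)$, and the only $\mathcal F_\sigma$-measurable inputs are the phase weight $\alpha$ and the label, which are deterministic given $(s,k)$. So $\E[U_I\mid\mathcal F_\sigma]$ on $A$ equals the fresh-run expectation evaluated at the realized start data. That expectation is bounded by the deterministic $C_I$ from Lemma~\ref{lem:one-phase-empirical-regret}.

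\textbf{Conclusion.} Since $U_I=0$ off $A$ and $A\in\mathcal F_\sigma$,
\[
\E[\one_AU_I]=\E\bigl[\one_A\E[U_I\mid\mathcal F_\sigma]\bigr]\le\Prb(A)C_I.
\]
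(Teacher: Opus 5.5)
Your proposal is correct and follows essentially the same route as the paper. Both arguments use $\mathcal F_t$-measurability of the tests for the stopping-time claims. Both decompose over $\{\sigma=t_0\}$ to carry the i.i.d.\ product law (stochastic case) and the fresh-draw identity (adversarial case) over to the restarted phase. Both finish with the tower step $\E[\one_A U_I]=\E[\one_A\E[U_I\mid\mathcal F_\sigma]]\le\Pr(A)C_I$. One minor slip: besides the label, the horizon cap $T-\sigma$ on $\tau$ is also an $\mathcal F_\sigma$-measurable input to the phase. This is harmless, because the cap is deterministic on each $\{\sigma=t_0\}$ and your decomposition already handles it.
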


\begin{yellowbox}
\textbf{Role.} This lemma makes a fresh-run
certificate reusable after adaptive restarts.

\textbf{Proof~idea.} Condition on the history when a phase
starts, where its reach event is already measurable.
Future losses and fresh seeds retain their independent
product law in the stochastic setting; the conditional
sampling identities transfer in the adversarial setting. Apply
the one-phase bound under this conditional law, then
the tower property. The contribution is weighted
by the probability of reaching the phase,
without conditioning on its future exit event.
\end{yellowbox}

\begin{proof}
The next state and threshold tests are
$\mathcal F_t$-measurable: by finiteness of $\mathcal X$ and linearity,
\[
U_{u:t}
=\left[\max_{X\in\mathcal X}
  \sum_{j=u}^t\langle b_j-X,\widehat\ell_j\rangle\right]_+,
\qquad
G^c_{u:t}=\max_{X\in\mathcal X}
  \sum_{j=u}^t\langle x^c-X,\widehat\ell_j\rangle.
\]
Hence $\{\sigma=t\}\in\mathcal F_t$ and $A\in\mathcal F_\sigma$. The first after-round
exit test, capped at $T-\sigma$, makes
$\tau$ a stopping time for $(\mathcal H_n)$.

Under Assumption~\ref{ass:stochastic}, $\zeta_t=(\ell_t,\xi_t)$ are i.i.d. with law
$\mu=P\otimes\nu$ and future blocks are independent of
$\mathcal F_s$ at deterministic $s$. For $B\in\mathcal F_\sigma$ and
bounded Borel $f$, splitting over $\sigma$ gives
\begin{align*}
\E[\1_Bf(\zeta_{\sigma+1},\ldots,\zeta_{\sigma+r})]
&=\sum_{s=0}^T
 \E[\1_{B\cap\{\sigma=s\}}f(\zeta_{s+1},\ldots,\zeta_{s+r})]\\
&=\Pr(B)\int f\,d\mu^{\otimes r}.
\end{align*}
The second equality uses $B\cap\{\sigma=s\}\in\mathcal F_s$. Thus the future
losses and seeds retain their independent i.i.d. laws
conditional on $\mathcal F_\sigma$, as required by Lemma~\ref{lem:one-phase-empirical-regret}. Fresh
independent pairs beyond $T$ supply any virtual continuation
without changing $U_I$; the restarted learner retains
its specified mixed or pure sampling rule.

For a non-anticipating adversary, take
bounded $\sigma(\mathcal G_{s+n},\ell_{s+n})$-measurable $W_s$. Equation~\eqref{eq:fresh-draw} gives
\begin{align*}
\E[\1_{\{\sigma=s\}}W_s(X_{s+n,i}-x_{s+n,i})]
&=\E[\1_{\{\sigma=s\}}W_s
 \E[X_{s+n,i}-x_{s+n,i}\mid\mathcal G_{s+n},\ell_{s+n}]]=0.
\end{align*}
Summing over $s$ transfers the conditional marginal identity
to local rounds and, by the tower property,
to retained original coordinates. Zero-loss dummy coordinates satisfy~\eqref{eq:hybrid-direct-sampling}
automatically. The conditional moment bounds therefore persist; a
continuation uses zero losses and fresh draws
with the announced marginals. In either regime,
Lemma~\ref{lem:one-phase-empirical-regret} yields $\E[U_I\mid\mathcal F_\sigma]\leq C_I$ on $A$. Since $A\in\mathcal F_\sigma$,
\[
\E[\1_AU_I]
=\E[\1_A\E[U_I\mid\mathcal F_\sigma]]
\leq C_I\E[\1_A]=\Pr(A)C_I.
\]

\end{proof}

\begin{corollary}[Total empirical-regret bound across phases]
\label{cor:total-phase-empirical-regret}
Assume Assumptions~\ref{ass:non-anticipation} and~\ref{ass:coverage}. For $T\geq1$, let
\[
J_T:=3+\left\lceil\log_2\frac{2mT}{L\delta}\right\rceil.
\]
For the at most $J_T^2$ intervals generated by
\Duet, the entropy-FTRL and hybrid-FTRL learners satisfy the
following adversarial and stochastic bounds in Definition~\ref{def:base-interface}:
\begin{align}
\mathfrak C_{\mathrm{adv}}(T)
&\leq CmJ_T^2\sqrt{\frac{T\Lambda}{\delta}},
\label{eq:total-adv-empirical-regret}\\
\mathfrak C_{\mathrm{sto}}(T,\Delta)
&\leq C\frac{m^2}{\delta\Delta}J_T^2
 [\Lambda+\log(eT)].
\label{eq:total-sto-empirical-regret}
\end{align}
The second inequality holds if Assumptions~\ref{ass:stochastic} and~\ref{ass:gap} hold.
\end{corollary}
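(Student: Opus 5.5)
The plan is to sum the conditional one-phase certificates over a deterministic list of phase labels rather than over the random realized phases. By Lemma~\ref{lem:threshold-crossing}, every realized phase interval through $T$ has a label $(s,k)\in[J_T]^2$, and distinct intervals carry distinct labels. So we write, pathwise,
\[
\sum_I U_I=\sum_{(s,k)\in[J_T]^2}\1_{A_{s,k}}U_{I_{s,k}},
\]
where $A_{s,k}=\{\sigma_{s,k}<T\}$ is the reach event of Lemma~\ref{lem:conditional-restart} and $U_{I_{s,k}}:=0$ off $A_{s,k}$. An interval interrupted by the horizon is simply a phase whose stopped length $\tau$ is capped at $T-\sigma$; this is covered because $\tau$ is a bounded stopping time for $\mathcal H_n$. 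Whether label $(s,k)$ is mixed or pure is deterministic ($\alpha_{s,k}=2^{k-1-s}$ capped at one), so each label is assigned one of the four certificates in advance.

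For each label, we apply Lemma~\ref{lem:conditional-restart} with cap $H=T$. It gives $\E[\1_AU_I]\le\Pr(A)C_I\le C_I$, where $C_I$ is the relevant bound among \eqref{eq:mixed-adv-phase}--\eqref{eq:pure-sto-phase}, evaluated at $H=T$ and $\rho=\delta/2$. Summing over the at most $J_T^2$ labels bounds $\E\sum_IU_I$ by $J_T^2$ times the largest of the applicable certificates.

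It remains to dominate each certificate by the displayed right-hand sides, using $D\le2d$, $d\delta\le m$, $\delta\le1$ and $\Lambda\ge1$.
\begin{itemize}
\item Adversarial, mixed: $m\sqrt{T\Lambda/\rho}=\sqrt2\,m\sqrt{T\Lambda/\delta}$.
\item Adversarial, pure: $\sqrt{mDT}\le\sqrt{2mdT}\le\sqrt2\,m\sqrt{T/\delta}$, since $d\le m/\delta$. The additive $m$ is at most $m\sqrt{T\Lambda/\delta}$ because $T,\Lambda\ge1$ and $\delta\le1$. This gives \eqref{eq:total-adv-empirical-regret}.
\item Stochastic, mixed: $m^2\Lambda/(\rho\Delta)=2m^2\Lambda/(\delta\Delta)$.
\item Stochastic, pure: $\frac{mD}{\Delta}\log(eT)\le\frac{2md}{\Delta}\log(eT)\le\frac{2m^2}{\delta\Delta}\log(eT)$. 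Also $m^2/\Delta\le m^2/(\delta\Delta)$. Finally $m\le m^2/(\delta\Delta)$, since $\Delta\le m$ (gaps are at most $m$ as losses lie in $[0,1]$) and $\delta\le1$. Together these give \eqref{eq:total-sto-empirical-regret}.
\end{itemize}

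The only delicate point is the legitimacy of the label-wise summation. Each reach event must be $\cF_\sigma$-measurable, and the conditional one-phase bound must hold without conditioning on the phase's exit event. Both are exactly what Lemma~\ref{lem:conditional-restart} supplies. The main check is therefore that unreached labels contribute zero and reached labels are counted once, which follows from the counting in Lemma~\ref{lem:threshold-crossing}. The rest is the arithmetic above.
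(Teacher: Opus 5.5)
Your proposal is correct and follows the paper's proof essentially step for step. Like the paper, you index phases by a deterministic label set inside $[J_T]^2$, apply Lemma~\ref{lem:conditional-restart} with $H=T$ to get $\E[\1_{A}U_I]\le\Pr(A)C_I\le C_I$, sum over at most $J_T^2$ labels, and dominate the four certificates using $\rho=\delta/2$, $D\le 2d\le 2m/\delta$, $\Delta\le m$, $\delta\le 1$ and $\Lambda\ge 1$. Your remark that each label's mixed/pure type is deterministic, so its certificate is fixed in advance, only makes explicit what the paper handles by taking the maximum over labels.
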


\begin{yellowbox}
\textbf{Role.} This corollary verifies the total empirical-regret input
required by Theorem~\ref{thm:phasewise-to-regret} for the specified learners.

\textbf{Proof~idea.} Index all possible phases by a deterministic
set of at most $J_T^2$ labels, assigning zero
to unreached labels. Use $H=T$ in each conditional
restart certificate and multiply by its reach probability
before summing. The coverage relation $d\delta\leq m$ puts mixed
and pure certificates on the same parameter scale.
\end{yellowbox}

\begin{proof}
Let $\mathcal L_T$ be the deterministic set of
stage/phase labels that can occur before time
$T$. Since $R_1=2L$ and every stage crossing
doubles the budget, the $s$th possible stage
has $R_s=2^sL$. The counting lemma gives
\[
\mathcal L_T\subseteq\{1,\ldots,J_T\}\times\{1,\ldots,J_T\},
\qquad |\mathcal L_T|\leq J_T^2,
\]
For each $\lambda=(s,k)$, let $A_\lambda$ be its
reach event and set $U_\lambda=0$ off
$A_\lambda$. For the realized intervals $\mathcal I$,
\[
\sum_{I\in\mathcal I}U_I
=\sum_{\lambda\in\mathcal L_T}U_\lambda.
\]
On $A_\lambda$, the start time
$\sigma_\lambda$ and length $\tau_\lambda$ satisfy
\[
\tau_\lambda\leq T-\sigma_\lambda\leq T.
\]
Apply Lemma~\ref{lem:conditional-restart} with $H=T$ and
deterministic bound $C_\lambda$ to obtain
\begin{align*}
\E\left[\sum_{I\in\mathcal I}U_I\right]
&=\sum_{\lambda\in\mathcal L_T}
  \E[\1_{A_\lambda}U_\lambda]
\leq\sum_{\lambda\in\mathcal L_T}
  \Pr(A_\lambda)C_\lambda
\leq J_T^2\max_{\lambda\in\mathcal L_T}C_\lambda.
\end{align*}
Coverage gives
\[
d\delta\leq\sum_{i=1}^d x_i^c
=\E_{X\sim q^c}\lVert X\rVert_1\leq m.
\]
Since $m\leq d$,
\[
D\leq d+m\leq2d\leq\frac{2m}{\delta}.
\]
Because $\rho=\delta/2$, the mixed adversarial bound
is at most $Cm\sqrt{T\Lambda/\delta}$. For a
pure interval, $T,\Lambda\geq1$ and $\delta\leq1$ give
\[
\sqrt{mDT}+m
\leq\sqrt2\,m\sqrt{T/\delta}+m
\leq(\sqrt2+1)m\sqrt{T\Lambda/\delta}.
\]
This proves Equation~\eqref{eq:total-adv-empirical-regret} after multiplication by $J_T^2$.

For the stochastic scale, the mixed
bound becomes $2c_2m^2\Lambda/(\delta\Delta)$. On a pure
interval, $\Delta\leq L\leq m$, $\delta\leq1$, and $D\leq2m/\delta$, whence
\[
\frac{mD}{\Delta}\log(eT)
\leq\frac{2m^2}{\delta\Delta}\log(eT),
\qquad
\frac{m^2}{\Delta}\leq\frac{m^2}{\delta\Delta},
\]
and $m\leq m^2/(\delta\Delta)$. Since $\Lambda\geq1$, all
three terms are at most
\[
C\frac{m^2}{\delta\Delta}[\Lambda+\log(eT)].
\]
Multiplying by $J_T^2$ proves Equation~\eqref{eq:total-sto-empirical-regret}.
\end{proof}
%
\section{Main regret bounds and proofs}
\label{sec:regret-phased}

\begin{theorem}[Regret bound from phasewise empirical-regret control]
\label{thm:phasewise-to-regret}
Under Assumptions~\ref{ass:non-anticipation} and~\ref{ass:coverage}, run the mixing scheme underlying
\Duet{} with base learners satisfying Definition~\ref{def:base-interface}. For $N\geq2$
and $T\geq1$,
\[
\mathcal R_T(x^c)
\leq O\!\left(L\log\!\left(2+\frac{mT}{L\delta}\right)\right).
\]
With
\[
J_T:=3+\left\lceil\log_2\frac{2mT}{L\delta}\right\rceil,
\]
\begin{align*}
\sup_{x\in\cP}\mathcal R_T(x)
&\leq(1+8J_T)\mathfrak C_{\mathrm{adv}}(T)
      +L(J_T^2+8J_T),\\
\sup_{x\in\cP}\mathcal R_T(x)
&\leq(1+8J_T)\mathfrak C_{\mathrm{sto}}(T,\Delta)
      +L(J_T^2+8J_T).
\end{align*}
The adversarial bound uses~\eqref{eq:interface-adv}; the stochastic bound
uses Assumptions~\ref{ass:stochastic} and~\ref{ass:gap} and~\eqref{eq:interface-sto}. Corollary~\ref{cor:total-phase-empirical-regret} verifies both
empirical-regret premises for \Duet{}. The mixing scheme needs
neither the regime nor $\Delta$.
\end{theorem}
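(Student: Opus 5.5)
The plan is to prove Theorem~\ref{thm:phasewise-to-regret} in two layers. First I establish pathwise (realized-run) inequalities on the importance-weighted losses. Then I pass to expectations through Lemma~\ref{lem:importance-unbiasedness}, with the comparator fixed before taking expectations.

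\emph{Baseline part.} Fix a stage $s$ and write $r_k=\alpha_kR_s=L2^{k-1}$.
\begin{itemize}
\item For each phase $I$ of the stage, the mixture gives $\sum_{t\in I}\langle x_t-x^c,\widehat\ell_t\rangle=\alpha_kA_I(x^c)$.
\item Since $A_I(x^c)=\langle x^c,\,\cdot\,\rangle$-difference satisfies $A_I(x^c)=\max_XA_I(X)-G^c_I\leq U_I-G^c_I$, every phase closed by the weight-doubling test has $U_I\leq R_s$ (the budget test has priority) and $G^c_I>4R_s$. Its contribution is therefore below $-3r_k$.
\item The last phase of the stage, whether closed by the budget test or cut at $T$, contributes at most $\alpha_kU_I\leq\alpha_k\cdot2R_s=2r_k$. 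This uses Lemma~\ref{lem:threshold-crossing} and $G^c_I\geq0$.
\item Summing these, with $\sum_{j<k}r_j=r_k-L$, gives $3L-r_k\leq2L$ per stage.
\end{itemize}
Lemma~\ref{lem:threshold-crossing} bounds the number of stages by $J_T$, so $\sum_{t\le T}\langle x_t-x^c,\widehat\ell_t\rangle\leq2LJ_T$ pathwise. Since $x_t$ is predictable, unbiasedness yields $\mathcal R_T(x^c)\leq2LJ_T=O(L\log(2+mT/(L\delta)))$. This step needs no certificate at all.

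\emph{Learning part.} I prove the transfer inequality of Lemma~\ref{lem:main-accounting-transfer} pathwise for every $x\in\cP$.
\begin{itemize}
\item On a phase $I$, split the estimated regret as $\alpha A_I(x)+(1-\alpha)\sum_{t\in I}\langle x^c-x,\widehat\ell_t\rangle$. The first term is at most $U_I$.
\item In a mixed phase, the second term over all rounds except the last is at most $\max_X\sum\langle x^c-X,\widehat\ell\rangle=G^c\leq4R_s$, because the weight test did not fire earlier. The last round adds at most $L$ by \eqref{eq:baseline-one-round-overshoot}. In a pure phase the second term vanishes.
\item Each phase is thus bounded by $U_I+4R_{s(I)}+L$.
\item For $s>1$, $R_s=2R_{s-1}<2U_{I^\times_{s-1}}$, where $I^\times_{s-1}$ is the crossing phase of stage $s-1$. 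Since distinct stages have distinct crossing phases, $\sum_sR_s\leq2L+2U_T$.
\item Each threshold is repeated at most $J_T$ times and there are at most $J_T^2$ phases. This gives $(1+8J_T)U_T+L(J_T^2+8J_T)$.
\end{itemize}
Taking expectations with $x$ fixed and applying Lemma~\ref{lem:importance-unbiasedness} identifies the left side with $\mathcal R_T(x)$. Then \eqref{eq:interface-adv} or \eqref{eq:interface-sto} bounds $\E U_T$, and taking the supremum over $x$ finishes.

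The main obstacle is making the exchange between pathwise accounting and expectation legitimate. The data-dependent objects ($U_I$, phase endpoints, the empirical best action) must enter only through $U_T$, whose expectation is supplied by the certificate. The comparator itself must stay fixed so that unbiasedness applies to $\sum_t\langle x_t-x,\widehat\ell_t\rangle$ summed over a deterministic horizon, which is exactly why the bound is first stated for all $x$ pathwise.

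A subsidiary check is that the baseline-weighted mixed-phase term uses $G^c$ in its maximum form. That form dominates the term for an arbitrary $x\in\cP$ by linearity, so it covers every comparator.
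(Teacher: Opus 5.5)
Your proposal is correct and follows the paper's proof essentially step for step. The per-stage cancellation $-3(r_k-L)+2r_k\le 2L$ is Lemma~\ref{lem:baseline-one-stage}; the per-phase bound $U_I+4R_{s(I)}+L$, together with the charging $\sum_s R_s\le 2L+2U_T$, is Lemma~\ref{lem:fixed-comparator-all-phases}; and your final step (fix the comparator, apply unbiasedness, substitute \eqref{eq:interface-adv} or \eqref{eq:interface-sto}, then take the supremum) is exactly how the paper concludes. The only cosmetic difference is that you bound a horizon-truncated final phase by $2r_k$ instead of the paper's sharper $r_k$, which leaves the $2LJ_T$ total unchanged.
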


\begin{yellowbox}
\textbf{Role.} This theorem converts the base learner's empirical-regret
certificate into expected regret for \Duet{}.
\textbf{Proof~idea.} Sum baseline gains and losses within
stages, and charge stage thresholds to the
phases that triggered them. Apply unbiasedness and
the certificate bound after this pathwise accounting,
then take the supremum over fixed comparators.
\end{yellowbox}

\subsection{Proof of the general \Duet{} bound}
\label{sec:proof-general-phased-bound}

The next two lemmas establish the pathwise
accounting for Theorem~\ref{thm:phasewise-to-regret}; unbiasedness is applied afterward.

\begin{lemma}[Baseline regret within one stage]
\label{lem:baseline-one-stage}
For every stage $s$, the total estimated regret
to $x^c$ is at most $2L$. Hence
\[
\mathcal R_T(x^c)\leq2LJ_T.
\]
\end{lemma}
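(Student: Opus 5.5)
The proof works pathwise on estimated losses inside a single stage $s$, sums over stages, and only then takes expectations.

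Fix a realized run and a stage $s$ with budget $R_s$. Its phases $k=1,\dots,K$ are all played through $T$. Write $r_k:=\alpha_kR_s=L2^{k-1}$. On each phase interval $I$, the mixture identity $x_t-x^c=\alpha_k(b_t-x^c)$ gives
\[
\widehat B_I:=\sum_{t\in I}\langle x_t-x^c,\widehat\ell_t\rangle=\alpha_kA_I(x^c).
\]
Since $v$ is the minimum over $\cX$, we have
\[
A_I(x^c)=\Bigl(\textstyle\sum_{t\in I}\langle b_t,\widehat\ell_t\rangle-v_I\Bigr)-G^c_I\le U_I-G^c_I .
\]
We then classify phases by how they end.

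\textbf{Phases ended by the weight test.} Every phase $j<K$ of the stage ended through the weight-doubling rule. Because the budget test has priority, its final statistics satisfy $U_I\le R_s$ and $G^c_I>4R_s$. Hence $\widehat B_I<\alpha_j(R_s-4R_s)=-3r_j$.

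\textbf{The last phase.} Phase $K$ ends in one of three ways: by the budget test, by the horizon $T$, or, if $\alpha_K=1$ while the horizon is not reached, by the budget test only.
\begin{itemize}
\item If the phase is pure, $x_t=b_t$, so $\widehat B_I=A_I(x^c)\le U_I$. By Lemma~\ref{lem:threshold-crossing}, every played phase has $U_I\le 2R_s$, which gives $\widehat B_I\le 2R_s=2r_K$.
\item If the phase is mixed, we still have $\widehat B_I\le\alpha_K(U_I-G^c_I)\le\alpha_KU_I$, since $G^c_I\ge0$ because $x^c\in\cP$. Again $U_I\le2R_s$ by the one-round overshoot bound (Lemmas~\ref{lem:unbiased-and-overshoot} and~\ref{lem:threshold-crossing}), so $\widehat B_I\le 2r_K$.
\end{itemize}
This case is the one point needing care. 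We need $U_I\le 2R_s$ for an interval cut by the horizon or by the crossing. Before the crossing round, $U\le R_s$, and the crossing round adds at most $L/\alpha_K\le R_s$; that suffices.

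\textbf{Summing the stage.} With $\sum_{j<K}r_j=L(2^{K-1}-1)=r_K-L$, the stage's estimated regret to $x^c$ is at most
\[
-3(r_K-L)+2r_K=3L-r_K\le 2L,
\]
using $r_K\ge L$. For $K=1$ this reads $2r_1=2L$ directly.

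\textbf{Expectation.} Summing over the at most $J_T$ stages (Lemma~\ref{lem:threshold-crossing}) gives, pathwise,
\[
\sum_{t\le T}\langle x_t-x^c,\widehat\ell_t\rangle\le 2LJ_T .
\]
The comparator $x^c$ is fixed and $x_t$ is predictable. Lemma~\ref{lem:importance-unbiasedness}, that is \eqref{eq:unbiasedness-bridge} summed over $t\le T$, then identifies the expectation of the left side with $\mathcal R_T(x^c)$, so $\mathcal R_T(x^c)\le 2LJ_T$. The main obstacle is only bookkeeping: each interval, including one truncated at $T$, must inherit $U_I\le2R_s$, and the priority of the tests must be used to guarantee $U_I\le R_s$ on the phases ended by the weight test.
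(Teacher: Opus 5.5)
Your proposal is correct and follows the paper's argument step for step: the mixture identity $\widehat B_I=\alpha_kA_I(x^c)$, test priority giving $\widehat B_I<-3r_j$ on weight-test exits, the terminal bound $\widehat B_I\le\alpha_KU_I\le 2r_K$ from the one-round overshoot, the geometric sum $\sum_{j<K}r_j=r_K-L$, and unbiasedness at the fixed comparator $x^c$. The only cosmetic difference is that you bound every terminal phase uniformly by $2r_K$, which also covers a weight-test exit at round $T$, whereas the paper additionally notes the sharper bound $r_K$ when the horizon ends a stage before a crossing.
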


\begin{yellowbox}
\textbf{Role.} Baseline protection does not require stochastic certificates.
\textbf{Proof~idea.} Test priority gives each baseline-gap exit estimated
regret below $-3r_k$, where $r_k=\alpha_{s,k}R_s$. These geometric gains
cover the terminal cost of at most $2r_k$,
leaving $2L$ per stage. Summing stages and applying
unbiasedness at deterministic rounds gives the expected bound.
\end{yellowbox}

\begin{proof}
In stage $s$, set $r_k:=\alpha_{s,k}R_s$, where $\alpha_{s,k}$
is the phase's fixed mixture weight. Algorithm~\ref{alg:phased-sb}
stops increasing $k$ once $\alpha_{s,k}=1$. Since $R_s/L$
is dyadic, every reached phase has
$r_k=L2^{k-1}$. For its interval $I$, define
\[
A_I(x):=\sum_{t\in I}\langle b_t-x,\widehat\ell_t\rangle,
\qquad
\widehat B_I:=\sum_{t\in I}
  \langle x_t-x^c,\widehat\ell_t\rangle.
\]
Since $x_t=\alpha_{s,k}b_t+(1-\alpha_{s,k})x^c$ throughout the phase,
\begin{equation}
\widehat B_I=\alpha_{s,k}A_I(x^c).
\label{eq:baseline-mixture-identity}
\end{equation}
If the phase exits through the comparator-gap
test, the stage test did not fire,
so $U_I\leq R_s$. Let $x_I$ attain $G_I^c$. Then
\[
A_I(x^c)=A_I(x_I)-G_I^c\leq U_I-G_I^c<-3R_s.
\]
Equation~\eqref{eq:baseline-mixture-identity} gives
\begin{equation}
\widehat B_I<-3\alpha_{s,k}R_s=-3r_k
\label{eq:completed-phase-negative-baseline}
\end{equation}

Every nonterminal phase exits this way. For
the terminal interval $I_k$, $A_{I_k}(x^c)\leq U_{I_k}$. At
a stage crossing, Lemma~\ref{lem:threshold-crossing} and~\eqref{eq:baseline-mixture-identity} give
\begin{equation}
\widehat B_{I_k}
=\alpha_{s,k}A_{I_k}(x^c)
\leq\alpha_{s,k}U_{I_k}
\leq2\alpha_{s,k}R_s=2r_k.
\label{eq:crossing-terminal-phase}
\end{equation}
If the deterministic horizon ends before a
stage crossing, then $U_{I_k}\leq R_s$ and instead
\begin{equation}
\widehat B_{I_k}
\leq\alpha_{s,k}U_{I_k}
\leq\alpha_{s,k}R_s=r_k.
\label{eq:horizon-terminal-phase}
\end{equation}
The dyadic schedule gives the exact geometric sum
\begin{equation}
\sum_{j<k}r_j
=L\sum_{j=1}^{k-1}2^{j-1}
=L(2^{k-1}-1)=r_k-L.
\label{eq:phase-geometric-sum}
\end{equation}
A crossing in the first phase costs at
most $2r_1=2L$ by~\eqref{eq:crossing-terminal-phase}. For $k\geq2$, Equation~\eqref{eq:completed-phase-negative-baseline} gives
\[
\sum_{I\text{ in stage }s}\widehat B_I
<-3\sum_{j<k}r_j+2r_k
=-3(r_k-L)+2r_k=3L-r_k\leq2L,
\]
using $r_k\geq L$. If the stage
ends at the horizon instead,
\[
\sum_{I\text{ in stage }s}\widehat B_I
\leq-3\sum_{j<k}r_j+r_k
=-3(r_k-L)+r_k=3L-2r_k\leq L.
\]
There are at most $J_T$ stages, so pathwise
\begin{equation}
\sum_{t=1}^T\langle x_t-x^c,\widehat\ell_t\rangle
=\sum_s\sum_{I\text{ in stage }s}\widehat B_I
\leq2LJ_T.
\label{eq:pathwise-baseline-regret}
\end{equation}
Since $x_t$ is $\mathcal F_{t-1}$-measurable, unbiasedness (Lemma~\ref{lem:unbiased-and-overshoot}) and the
tower property at each deterministic round give
\begin{align*}
\mathcal R_T(x^c)
&=\sum_{t=1}^T\E\langle x_t-x^c,\ell_t\rangle\\
&=\sum_{t=1}^T\E\!\left[
 \E[\langle x_t-x^c,\widehat\ell_t\rangle
      \mid\mathcal G_t,\ell_t]\right]\\
&=\E\sum_{t=1}^T\langle x_t-x^c,\widehat\ell_t\rangle
\leq2LJ_T,
\end{align*}
by~\eqref{eq:pathwise-baseline-regret}.
\end{proof}

\begin{lemma}[Fixed-comparator regret over all phases]
\label{lem:fixed-comparator-all-phases}
Let $\mathcal I$ be the phase intervals
produced by the algorithm and put
$U_T:=\sum_{I\in\mathcal I}U_I$. For every fixed $x\in\mathcal P$,
\begin{equation}
\mathcal R_T(x)
\leq(1+8J_T)\E[U_T]+L(J_T^2+8J_T).
\label{eq:fixed-comparator-all-phases}
\end{equation}
\end{lemma}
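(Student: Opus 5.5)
The approach is to prove the pathwise inequality of Lemma~\ref{lem:main-accounting-transfer},
\[
\sum_{t=1}^T\langle x_t-x,\widehat\ell_t\rangle\leq(1+8J_T)U_T+L(J_T^2+8J_T),
\]
for every realized run and fixed $x\in\cP$. Afterwards, take expectations and use unbiasedness (Lemma~\ref{lem:unbiased-and-overshoot}) at each deterministic round, as in the proof of Lemma~\ref{lem:baseline-one-stage}. Because $x$ is fixed and $x_t$ is predictable, $\E\sum_t\langle x_t-x,\widehat\ell_t\rangle=\mathcal R_T(x)$. The right-hand side then becomes $(1+8J_T)\E[U_T]+L(J_T^2+8J_T)$.

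The first step is the per-phase bound. On a phase $I$ with weight $\alpha$, the mixture identity splits
\[
\sum_{t\in I}\langle x_t-x,\widehat\ell_t\rangle=\alpha A_I(x)+(1-\alpha)\sum_{t\in I}\langle x^c-x,\widehat\ell_t\rangle .
\]
The learner term is at most $\alpha U_I\leq U_I$, since $U_I$ bounds $A_I(x)$ for every comparator. For a pure phase the baseline term vanishes. For a mixed phase, write $I=[u,v]$. The baseline test did not fire before round $v$ (or $\alpha$ would have changed), so $\sum_{t<v}\langle x^c-x,\widehat\ell_t\rangle\leq G^c_{u:v-1}\leq4R_s$. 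The last round contributes at most $L$ after multiplying by $1-\alpha$, by \eqref{eq:baseline-one-round-overshoot}. Bounding $(1-\alpha)\leq1$ on the first part gives the per-phase bound \eqref{main:interval-transfer}, namely $U_I+4R_{s(I)}+L$. This also covers phases truncated by the horizon.

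The second step charges budgets to empirical regret. By Lemma~\ref{lem:threshold-crossing}, each stage $s>1$ was triggered by a crossing interval $I^\times_{s-1}$ with $U_{I^\times_{s-1}}>R_{s-1}$, so $R_s=2R_{s-1}<2U_{I^\times_{s-1}}$. Together with $R_1=2L$, and since distinct stages have distinct triggering intervals, this gives $\sum_sR_s\leq2L+2U_T$. Each stage has at most $J_T$ phases, so $\sum_{I}R_{s(I)}\leq J_T(2L+2U_T)$. Summing the per-phase bound over the at most $J_T^2$ intervals gives
\[
U_T+4J_T(2L+2U_T)+LJ_T^2=(1+8J_T)U_T+L(J_T^2+8J_T).
\]

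The main subtlety is the mixed-phase baseline term. One must verify that $G^c$ is evaluated against the minimizing action, so that $\sum\langle x^c-x,\widehat\ell_t\rangle\leq G^c$ holds for the arbitrary fixed $x\in\cP$ by linearity. One must also check that the test priority leaves only the final round uncontrolled, including when the final round triggers the budget test instead of the baseline test. In that case the preceding rounds still satisfied $G^c\leq4R_s$, since otherwise the phase would have ended earlier. The expectation step is routine because the comparator is fixed before expectations are taken.
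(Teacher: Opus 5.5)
Your proposal is correct and follows the paper's proof essentially step for step. Both arguments split each interval into the learner term (at most $U_I$) and the baseline term (at most $4R_s$ on the pre-terminal rounds by the untriggered gap test, plus $L$ from~\eqref{eq:baseline-one-round-overshoot} on the final round), charge each $R_s$ to its distinct crossing interval to get $\sum_s R_s\leq 2L+2U_T$, use at most $J_T$ phases per stage and $J_T^2$ intervals, and then apply unbiasedness for the fixed comparator $x$.
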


\begin{yellowbox}
\textbf{Role.} This lemma reduces fixed-comparator regret to
$\E U_T$, where $U_T=\sum_I U_I$. \textbf{Proof~idea.} Each phase costs
at most $U_I+4R_{s(I)}+L$ in estimated regret. Charge
each threshold to its triggering phase; it
appears in at most $J_T$ phases. Sum
these pathwise bounds and apply unbiasedness.
\end{yellowbox}

\begin{proof}
Fix $x\in\mathcal P$. On an interval $I$ in stage
$s$, let $t_I:=\max I$, put $I^-:=I\setminus\{t_I\}$, and define
\[
A_I(x):=\sum_{t\in I}\langle b_t-x,\widehat\ell_t\rangle,
\qquad
C_I(x):=\sum_{t\in I}\langle x^c-x,\widehat\ell_t\rangle.
\]
The phase weight $\alpha$ is
constant on $I$, and hence
\begin{equation}
\sum_{t\in I}\langle x_t-x,\widehat\ell_t\rangle
=\alpha A_I(x)+(1-\alpha)C_I(x).
\label{eq:interval-regret-decomposition}
\end{equation}
By the definition of $U_I$,
\begin{equation}
\alpha A_I(x)
\leq\alpha\max_{y\in\mathcal P}A_I(y)
\leq\alpha U_I\leq U_I.
\label{eq:interval-base-bound}
\end{equation}
If $\alpha<1$, the comparator-gap test had not
fired before round $t_I$. With the
convention that the sum over an
empty $I^-$ is zero, this gives
\begin{equation}
\max_{y\in\mathcal P}\sum_{t\in I^-}
\langle x^c-y,\widehat\ell_t\rangle\leq4R_s.
\label{eq:preterminal-gap-test}
\end{equation}
The terminal round may trigger the test. Combining
\eqref{eq:preterminal-gap-test} on $I^-$ with the one-round bound~\eqref{eq:baseline-one-round-overshoot} gives
\begin{align}
(1-\alpha)C_I(x)
&=(1-\alpha)\sum_{t\in I^-}
  \langle x^c-x,\widehat\ell_t\rangle
 +(1-\alpha)\langle x^c-x,\widehat\ell_{t_I}\rangle\notag\\
&\leq4(1-\alpha)R_s
 +(1-\alpha)\max_{y\in\mathcal P}
    \langle x^c-y,\widehat\ell_{t_I}\rangle\notag\\
&\leq4R_s+L.
\label{eq:interval-baseline-bound}
\end{align}
The bound also holds for $\alpha=1$, when its
left side vanishes. Combining~\eqref{eq:interval-regret-decomposition}, \eqref{eq:interval-base-bound}, and~\eqref{eq:interval-baseline-bound} gives
\begin{equation}
\sum_{t\in I}\langle x_t-x,\widehat\ell_t\rangle
\leq U_I+4R_s+L.
\label{eq:one-interval-bound}
\end{equation}

Let $\mathcal S_T$ contain the reached stages and
$\mathcal I_s$ their phase intervals. The initial budget
is $R_1=2L$. For $s\geq2$, the crossing interval
$I_{s-1}^{\times}$ that creates stage $s$ satisfies
\[
U_{I_{s-1}^{\times}}>R_{s-1},
\qquad
R_s=2R_{s-1}<2U_{I_{s-1}^{\times}}.
\]
The crossing intervals are distinct, so pathwise
\begin{align}
\sum_{s\in\mathcal S_T}R_s
&=R_1+\sum_{\substack{s\in\mathcal S_T\\s\geq2}}R_s
\leq2L+2\sum_{\substack{s\in\mathcal S_T\\s\geq2}}
      U_{I_{s-1}^{\times}}
\leq2L+2\sum_{I\in\mathcal I}U_I
=2L+2U_T.
\label{eq:stage-budget-chain}
\end{align}
Since $|\mathcal I_s|\leq J_T$ for every stage,
\begin{equation}
\sum_{I\in\mathcal I}R_{s(I)}
=\sum_{s\in\mathcal S_T}|\mathcal I_s|R_s
\leq J_T\sum_{s\in\mathcal S_T}R_s
\leq J_T(2L+2U_T).
\label{eq:interval-stage-budget-bound}
\end{equation}
Sum~\eqref{eq:one-interval-bound} and apply $|\mathcal I|\leq J_T^2$ and~\eqref{eq:interval-stage-budget-bound}:
\begin{align}
\sum_{t=1}^T\langle x_t-x,\widehat\ell_t\rangle
&\leq\sum_{I\in\mathcal I}U_I
 +4\sum_{I\in\mathcal I}R_{s(I)}+L|\mathcal I|\notag\\
&\leq U_T+4J_T(2L+2U_T)+LJ_T^2\notag\\
&=(1+8J_T)U_T+L(J_T^2+8J_T).
\label{eq:pathwise-all-phase-accounting}
\end{align}
For fixed $x$ and predictable $x_t$,
unbiasedness and the tower property give
\begin{align*}
\mathcal R_T(x)
&=\sum_{t=1}^T\E\langle x_t-x,\ell_t\rangle\\
&=\sum_{t=1}^T\E\!\left[
 \E[\langle x_t-x,\widehat\ell_t\rangle
      \mid\mathcal G_t,\ell_t]\right]\\
&=\E\sum_{t=1}^T\langle x_t-x,\widehat\ell_t\rangle\\
&\leq(1+8J_T)\E[U_T]+L(J_T^2+8J_T),
\end{align*}
where the last inequality is Equation~\eqref{eq:pathwise-all-phase-accounting}.
\end{proof}

\begin{proof}[Proof of Theorem~\ref{thm:phasewise-to-regret}]
Lemma~\ref{lem:baseline-one-stage} gives $\mathcal R_T(x^c)\leq2LJ_T$. Since $mT/(L\delta)\geq1$,
\[
J_T=3+\left\lceil\log_2\frac{2mT}{L\delta}\right\rceil
\leq4+\log_2\frac{2mT}{L\delta}
=O\!\left(\log\!\left(2+\frac{mT}{L\delta}\right)\right).
\]
For a fixed deterministic comparator $x\in\mathcal P$,
\begin{align}
\mathcal R_T(x)
&=\sum_{t=1}^T\E\langle x_t-x,\ell_t\rangle\notag\\
&=\sum_{t=1}^T\E\!\left[
 \E[\langle x_t-x,\widehat\ell_t\rangle
      \mid\mathcal G_t,\ell_t]\right]\notag\\
&=\E\sum_{t=1}^T\langle x_t-x,\widehat\ell_t\rangle\notag\\
&\leq(1+8J_T)\E\!\left[\sum_{I\in\mathcal I}U_I\right]
  +L(J_T^2+8J_T),
\label{eq:tower-before-comparator}
\end{align}
by~\eqref{eq:pathwise-all-phase-accounting}. Substituting \eqref{eq:interface-adv} gives a
bound independent of $x$, so
\begin{align*}
\sup_{x\in\cP}\mathcal R_T(x)
&\leq\sup_{x\in\mathcal P}
 \{(1+8J_T)\mathfrak C_{\mathrm{adv}}(T)+L(J_T^2+8J_T)\}\\
&=(1+8J_T)\mathfrak C_{\mathrm{adv}}(T)+L(J_T^2+8J_T).
\end{align*}
Likewise, substituting~\eqref{eq:interface-sto} into~\eqref{eq:tower-before-comparator} and then
taking the supremum gives
\[
\sup_{x\in\cP}\mathcal R_T(x)
\leq(1+8J_T)\mathfrak C_{\mathrm{sto}}(T,\Delta)
  +L(J_T^2+8J_T).
\]

\end{proof}

\begin{corollary}[Regret bounds for \Duet]
\label{cor:complete-regret-bounds}
Under Assumptions~\ref{ass:non-anticipation} and~\ref{ass:coverage}, run \Duet{} with the
learners of Section~\ref{sec:base-learner}, without knowing $T$,
$\Delta$, or the regime. For $N\geq2$ and
each deterministic $T\geq1$, with $J_T$ from Theorem~\ref{thm:phasewise-to-regret},
\[
\mathcal R_T(x^c)\leq2LJ_T.
\]
For a universal constant $C$,
\begin{align}
\sup_{x\in\cP}\mathcal R_T(x)
&\leq CmJ_T^3\sqrt{\frac{T\log(eN)}{\delta}}+CLJ_T^2,
\label{eq:complete-adversarial-regret}\\
\sup_{x\in\cP}\mathcal R_T(x)
&\leq C\frac{m^2}{\delta\Delta}J_T^3
[\log(eN)+\log(eT)]+CLJ_T^2.
\label{eq:complete-stochastic-regret}
\end{align}
Equation~\eqref{eq:complete-adversarial-regret} holds against every non-anticipating adversary.
Equation~\eqref{eq:complete-stochastic-regret} holds under Assumptions~\ref{ass:stochastic} and~\ref{ass:gap}. The
implementation enumerates all $N=|\cX|$ actions.
\end{corollary}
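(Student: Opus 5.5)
The plan is to assemble Corollary~\ref{cor:complete-regret-bounds} from Theorem~\ref{thm:phasewise-to-regret} and Corollary~\ref{cor:total-phase-empirical-regret}; no new probabilistic argument is needed. The baseline bound comes first and directly. Lemma~\ref{lem:baseline-one-stage} already gives $\mathcal R_T(x^c)\leq2LJ_T$ with the same $J_T$. Its proof uses only the pathwise stage cancellation~\eqref{main:safety-cancellation}, the counting Lemma~\ref{lem:threshold-crossing}, and unbiasedness under Assumption~\ref{ass:non-anticipation}. It uses no property of the base learners beyond positivity of the marginals. That positivity holds by mixed-phase coverage $x_{t,i}\geq\delta/2$ and by Lemma~\ref{lem:pure-interior} in pure phases.

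For the two learning bounds, I will first check that the entropy-FTRL and hybrid-FTRL learners satisfy Definition~\ref{def:base-interface}. Corollary~\ref{cor:total-phase-empirical-regret} supplies exactly this, with
$\mathfrak C_{\mathrm{adv}}(T)\leq CmJ_T^2\sqrt{T\Lambda/\delta}$ and
$\mathfrak C_{\mathrm{sto}}(T,\Delta)\leq C\frac{m^2}{\delta\Delta}J_T^2[\Lambda+\log(eT)]$.
The stochastic bound additionally requires Assumptions~\ref{ass:stochastic} and~\ref{ass:gap}. I will then substitute these into the two displays of Theorem~\ref{thm:phasewise-to-regret}, that is, into $(1+8J_T)\mathfrak C+L(J_T^2+8J_T)$. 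Since $J_T\geq3$, we have $1+8J_T\leq9J_T$, which converts $J_T^2$ into $J_T^3$. We also have $J_T^2+8J_T\leq4J_T^2$, so the additive term is $CLJ_T^2$. Finally, $\Lambda=\log(eN)$ gives the stated constants.

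The horizon- and regime-freeness claims follow by inspection. The tests in Algorithm~\ref{alg:phased-sb} use only $R_s$, $\alpha$, and observed estimates. The learning rates use local time $n$, $\rho$, $\Lambda$, and $m$. The quantity $J_T$ enters only the analysis. The enumeration remark follows from the mixed update~\eqref{eq:mixed-explicit} and the linear minimization over $\cX$.

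The main obstacle is not in this assembly, which is bookkeeping. It lies in ensuring that the premises really transfer. The certificate bounds of Lemma~\ref{lem:one-phase-empirical-regret} must hold conditionally at data-dependent restarts, which is Lemma~\ref{lem:conditional-restart}. They must also hold uniformly over the deterministic list of $J_T^2$ labels with the cap $H=T$. I will verify that the same universal constant $C$ absorbs all four cases after using $D\leq2m/\delta$ and $\Delta\leq m$, as done in the proof of Corollary~\ref{cor:total-phase-empirical-regret}. I will also check that the supremum over $x$ is taken only after the bound has become independent of $x$.
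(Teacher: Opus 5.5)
Your proposal is correct and follows the paper's proof. The baseline bound comes from the pathwise stage accounting (Lemma~\ref{lem:baseline-one-stage}, as used in Theorem~\ref{thm:phasewise-to-regret}), and the two learning bounds come from substituting Corollary~\ref{cor:total-phase-empirical-regret} into Theorem~\ref{thm:phasewise-to-regret} with $1+8J_T\leq9J_T$ and $J_T^2+8J_T\leq4J_T^2$ (using $J_T\geq3$), taking the supremum only once the bound no longer depends on $x$; the conditional-restart and stopped-certificate work you identify as the real content is delegated there to Lemmas~\ref{lem:one-phase-empirical-regret} and~\ref{lem:conditional-restart}.
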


\begin{yellowbox}
\textbf{Role.} This gives all three guarantees in
Theorem~\ref{thm:main-result} for \Duet{}. \textbf{Proof~idea.} Substitute
the conditional phase-certificate totals into the transfer
theorem; its accounting already supplies baseline protection.
\end{yellowbox}

\begin{proof}
The one-phase analysis, Lemma~\ref{lem:conditional-restart}, and Corollary~\ref{cor:total-phase-empirical-regret} verify
Definition~\ref{def:base-interface}, including its stopped empirical maxima.
The transfer theorem therefore gives $\mathcal R_T(x^c)\leq2LJ_T$.

For any fixed $x\in\mathcal P$, substitute Equation~\eqref{eq:total-adv-empirical-regret} into
the first inequality of Theorem~\ref{thm:phasewise-to-regret}. Since $J_T\geq3$,
\[
1+8J_T\leq9J_T,
\qquad
J_T^2+8J_T\leq4J_T^2.
\]
Consequently
\[
\mathcal R_T(x)
\leq9C_0mJ_T^3\sqrt{\frac{T\Lambda}{\delta}}+4LJ_T^2,
\]
where $C_0$ is the phase-bound constant. The
bound is independent of $x$; taking the
supremum proves~\eqref{eq:complete-adversarial-regret} after enlarging the universal constant.

Under Assumptions~\ref{ass:stochastic} and~\ref{ass:gap}, substituting \eqref{eq:total-sto-empirical-regret}
into Theorem~\ref{thm:phasewise-to-regret} similarly gives
\[
\sup_{x\in\cP}\mathcal R_T(x)
\leq9C_0\frac{m^2}{\delta\Delta}J_T^3
 [\Lambda+\log(eT)]+4LJ_T^2,
\]
which is Equation~\eqref{eq:complete-stochastic-regret}.

\end{proof}
 \appendix
%
\section{Hybrid-FTRL majorant bounds}
\label{app:hybrid-ftrl}

We prove the pure-phase stability, penalty, and self-bounding
estimates for the hybrid-FTRL construction of \citet{mainbib:zimmert2019beating}.

\begin{theorem}[Hybrid-FTRL majorant and self-bounding bounds]
\label{thm:hybrid-ftrl}
Let $\mathcal A\subseteq\{0,1\}^{D_{\mathrm{var}}}$ contain at least two distinct
actions, all of cardinality $m_0$, with no
coordinate constant on $\mathcal A$. Put $\mathcal Q:=\conv(\mathcal A)$ and
\[
  \Psi(z):=\sum_{i=1}^{D_{\mathrm{var}}}
  \bigl[-\sqrt{z_i}+(1-z_i)\log(1-z_i)\bigr],
  \qquad \eta_t:=t^{-1/2},
\]
using the continuous convention $0\log0=0$.
Starting with $\widehat L_0=0$, run
\[
  z_t:=\arg\min_{z\in\mathcal Q}
  \{\langle z,\widehat L_{t-1}\rangle+\sqrt t\,\Psi(z)\}.
\]
Lemma~\ref{lem:pure-interior} gives a unique optimizer with $0<z_{t,i}<1$.
Sample $A_t\in\mathcal A$ from a predictable distribution $q_t$
with marginal $z_t$. Let $\lambda_t\in[-1,1]^{D_{\mathrm{var}}}$ satisfy
\begin{equation}
  (\lambda_{t,i}+1)
  \bigl(\mathbb E[A_{t,i}\mid\mathcal F_{t-1},\lambda_t]-z_{t,i}\bigr)=0,
  \qquad i\in[D_{\mathrm{var}}].
  \label{eq:hybrid-direct-sampling}
\end{equation}
Losses may be adaptive. The condition is
vacuous when $\lambda_{t,i}=-1$, since the estimate is
then deterministically $-1$. From semi-bandit feedback, define
\begin{equation}
  \widehat\lambda_{t,i}:=\frac{A_{t,i}(\lambda_{t,i}+1)}{z_{t,i}}-1,
  \qquad
  \widehat L_t:=\widehat L_{t-1}+\widehat\lambda_t.
  \label{eq:hybrid-shifted-estimator}
\end{equation}
Unselected arms receive estimate $-1$.
Fix $a^\star\in\mathcal A$ and define
\[
  f(z):=\sum_{i:a_i^\star=0}\sqrt{z_i},
  \qquad
  g(z):=\sum_{i:a_i^\star=1}(1-z_i)\bigl(1-\log(1-z_i)\bigr),
\]
with continuous boundary values, and
\[
  \Phi_t(y):=\max_{z\in\mathcal Q}\{\langle z,y\rangle-\sqrt t\,\Psi(z)\}.
\]
For every integer $H\geq1$, put
\begin{align*}
  S_H&:=\sum_{t=1}^H
  \left[\langle z_t,\widehat\lambda_t\rangle
  +\Phi_t(-\widehat L_t)-\Phi_t(-\widehat L_{t-1})\right],\\
  P_H&:=-\Psi(z_1)-m_0
  +\sum_{t=2}^H(\sqrt t-\sqrt{t-1})[-\Psi(z_t)-m_0],
  \qquad C_H:=S_H+P_H.
\end{align*}
Then
\begin{equation}
  \mathbb E C_H\leq B_H
  :=\sum_{t=1}^H\frac{25}{\sqrt t}
    [f(\mathbb E z_t)+g(\mathbb E z_t)]+116m_0.
  \label{eq:hybrid-BH}
\end{equation}
If, in addition, the losses are i.i.d.
and independent of the learner's randomization, and
$a^\star$ is the unique mean-optimal action, define
\[
  \Delta_a:=\langle a-a^\star,\mathbb E\lambda_t\rangle,
  \qquad \Delta:=\min_{a\neq a^\star}\Delta_a>0,
  \qquad R_H^{\mathrm{pure}}:=\sum_{t=1}^H\sum_a\mathbb E[q_t(a)]\Delta_a.
\]
Then $R_H^{\mathrm{pure}}\leq\mathbb E C_H\leq B_H$ and, for a universal constant $c$,
\begin{equation}
  \mathbb E C_H\leq2B_H-R_H^{\mathrm{pure}}
  \leq c\left[
    \frac{m_0D_{\mathrm{var}}}{\Delta}\log(eH)
    +\frac{m_0^2}{\Delta}+m_0\right].
  \label{eq:hybrid-stochastic-self-bound}
\end{equation}
\end{theorem}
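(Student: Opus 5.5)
The plan follows the analysis of \citet{mainbib:zimmert2019beating}, applied to the majorant $C_H$ instead of the ordinary regret. Since $C_H=S_H+P_H$, I bound the penalty part and the stability part separately, each by a sum of terms $\eta_t[f(\E z_t)+g(\E z_t)]$.

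\textbf{Penalty.} For the penalty, write $-\Psi(z)-m_0=\sum_i[\sqrt{z_i}-z_i-(1-z_i)\log(1-z_i)]$ and split coordinates by whether $a^\star_i$ is $0$ or $1$.
\begin{itemize}
\item On $\{a^\star_i=0\}$, use $\sqrt{z_i}-z_i-(1-z_i)\log(1-z_i)\le\sqrt{z_i}$.
\item On $\{a^\star_i=1\}$, use $\sqrt{z_i}-z_i\le 1-z_i$ and $-(1-z_i)\log(1-z_i)$, which together are at most $(1-z_i)(1-\log(1-z_i))$.
\end{itemize}
Hence $-\Psi(z)-m_0\le f(z)+g(z)$. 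Both $f$ and $g$ are concave: $v\mapsto v(1-\log v)$ is concave on $[0,1]$. Jensen's inequality therefore gives $\E[-\Psi(z_t)-m_0]\le f(\E z_t)+g(\E z_t)$. Finally, $\sqrt t-\sqrt{t-1}\le 1/\sqrt t$, so $\E P_H\le\sum_t t^{-1/2}[f(\E z_t)+g(\E z_t)]$.

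\textbf{Stability.} Each increment of $S_H$ is a Bregman-type term $\Phi_t(-\widehat L_{t-1}-\widehat\lambda_t)-\Phi_t(-\widehat L_{t-1})+\langle z_t,\widehat\lambda_t\rangle$.
\begin{itemize}
\item Since $\widehat\lambda_t\ge-1$ and $\Psi$ is coordinatewise separable with Hessian at least $\tfrac14 z^{-3/2}$ and $1/(1-z)$, a local-norm argument bounds each term by $\eta_t\sum_i \widehat\lambda_{t,i}^2\min\{z_{t,i}^{3/2},1-z_{t,i}\}$, up to a constant. This uses the local inverse-Hessian bound, valid once $\eta_t$ is small ($t\ge58$).
\item The earlier rounds $t<58$ are charged crudely, at $2m_0$ per round, giving the $116m_0$ term.
\end{itemize}
Next, take conditional expectations using \eqref{eq:hybrid-direct-sampling}: $\E[\widehat\lambda_{t,i}^2\mid\mathcal F_{t-1}]\le 1/z_{t,i}$. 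For the zero coordinates of $a^\star$, this gives $\sum_i \sqrt{z_{t,i}}=f(z_t)$. For $a^\star_i=1$, use instead $(1-z_{t,i})$ times a bounded factor, since $z\cdot z^{-1}\min\{\cdot\}\le 1-z$ after the shift, which gives at most $g(z_t)$. Concavity and Jensen again yield $\E S_H\le\sum_t 24\,t^{-1/2}[f(\E z_t)+g(\E z_t)]+116m_0$, which proves \eqref{eq:hybrid-BH}.

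\textbf{Self-bounding.} Step~5 of Lemma~\ref{lem:one-phase-empirical-regret} shows that $C_H$ majorizes the empirical regret to $a^\star$. Unbiasedness then gives $R_H^{\mathrm{pure}}\le\E C_H\le B_H$, and therefore $\E C_H\le 2B_H-R_H^{\mathrm{pure}}$. Let $p_t:=\sum_{a\neq a^\star}\E q_t(a)$, so that $R_H^{\mathrm{pure}}\ge\Delta\sum_t p_t$. With $\bar z=\E z_t$, bound the two parts of $B_H$ in terms of $p_t$:
\begin{itemize}
\item $\sum_{i:a^\star_i=0}\bar z_i\le m_0p_t$, so by Cauchy--Schwarz $f(\bar z)\le\sqrt{D_{\mathrm{var}}m_0p_t}$.
\item $\sum_{i:a^\star_i=1}(1-\bar z_i)\le m_0p_t$, so by concavity $g(\bar z)\le m_0p_t(1+\log(1/p_t))$.
\end{itemize}
Then optimize each round separately:
\begin{itemize}
\item $\sup_p[50\sqrt{D_{\mathrm{var}}m_0p/t}-\tfrac{\Delta}{2}p]\le C D_{\mathrm{var}}m_0/(\Delta t)$, which sums to $C\frac{m_0D_{\mathrm{var}}}{\Delta}\log(eH)$.
\item $\sup_p[50\,m_0 p\log(e/p)/\sqrt t-\tfrac{\Delta}{2}p]$ is positive only for small $p$. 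It is about $(m_0/\sqrt t)\exp(-c\Delta\sqrt t/m_0)$, and summing over $t$ gives $O(m_0^2/\Delta)$.
\end{itemize}
Together with $232m_0$, this yields \eqref{eq:hybrid-stochastic-self-bound}.

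\textbf{Main obstacle.} I expect the stability step to be the hardest part. It requires a local-norm bound for the combinatorial polytope with the negative-shifted, unbounded estimates. The approach I would try first is the one in \citet{mainbib:zimmert2019beating}: compare $\Phi_t$ along the segment between the two dual points, and use that $\widehat\lambda_t\ge-1$ keeps the iterate within a constant-factor neighborhood of $z_t$ (multiplicative stability) when $\eta_t\le c$. This should give the Hessian bound $\nabla^2\Psi^{-1}\preceq 4\,\mathrm{diag}(\min\{z^{3/2},1-z\})$, up to constants, at nearby points.
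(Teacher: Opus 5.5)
Your overall structure matches the paper's proof. The penalty, early-round and self-bounding steps are correct as written.
\begin{itemize}
\item \emph{Penalty.} Your bound is slightly sharper than the paper's: you get coefficient $1$ rather than $3/2$. You use $-(1-z)\log(1-z)\le z$ on $\{a^\star_i=0\}$ and $\sqrt z-z\le 1-z$ on $\{a^\star_i=1\}$, where the paper uses $-(1-v)\log(1-v)\le\sqrt v/2$.
\item \emph{Self-bounding.} Your round-by-round scalar maximization is equivalent to the paper's $\mathcal C_{\mathrm{sto}}$/$\mathcal C_{\mathrm{add}}$ computation.
\end{itemize}

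The problems are in the stability step. First, the bound $\E[\widehat\lambda_{t,i}^2\mid\cdot]\le 1/z_{t,i}$ is false under the theorem's hypothesis $\lambda_t\in[-1,1]^{D_{\mathrm{var}}}$. Condition on $(\mathcal F_{t-1},\lambda_t)$ and write $y=\lambda_{t,i}+1\in[0,2]$, $p=z_{t,i}$. The second moment is $y^2/p-2y+1$, which equals $4/p-3>1/p$ at $y=2$. The $1/p$ bound needs $\lambda_{t,i}\le 0$. That holds in the application $\lambda_t=g_t-\one$, but not in the theorem as stated. The correct bound $4/p$ only changes constants: the paper gets stability coefficient $16\sqrt2<24$, so your total of $25$ survives.

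Second, your local-norm argument is missing the step that makes it work. You describe Taylor-expanding the constrained potential $\Phi_t$ along $y_t-r\widehat\lambda_t$, with $y_t=-\widehat L_{t-1}$, and invoking multiplicative stability of ``the iterate.'' But the maximizers $\nabla\Phi_t(y_t-r\widehat\lambda_t)$ over $\mathcal Q$ are coupled through the polytope constraints, at least through the common cardinality $m_0$. A large positive $\widehat\lambda_{t,i}$, which can be of order $1/z_{t,i}$, therefore moves other coordinates too. So coordinatewise bounds on $\widehat\lambda_t$ do not directly give coordinatewise multiplicative stability.

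The paper decouples first. Let $\Psi_t=\sqrt t\,\Psi$ and $v_t=\nabla\Psi_t(z_t)$. First-order optimality, $\langle w-z_t,y_t-v_t\rangle\le 0$ for all $w\in\mathcal Q$, gives
\[
  D_{\Phi_t}(y_t-\widehat\lambda_t,y_t)\le D_{\Psi_t^*}(v_t-\widehat\lambda_t,v_t),
\]
where $\Psi_t^*$ is the separable conjugate on the box $[0,1]^{D_{\mathrm{var}}}$. Only on the box does each coordinate of $\widetilde z_t:=\nabla\Psi_t^*(v_t-\widehat\lambda_t)$ move according to its own $\widehat\lambda_{t,i}$.

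Even after decoupling, $\widehat\lambda_{t,i}\ge-1$ alone gives only $\widetilde z_{t,i}\le 2z_{t,i}$, for $\eta_t\le(\sqrt2-1)/2$. The $1-z$ branch of $[\Psi_i'']^{-1}$ also needs $1-\widetilde z_{t,i}\le 2(1-z_{t,i})$. That bound uses the \emph{upper} estimate $\widehat\lambda_{t,i}\le 2/z_{t,i}<4$ when $z_{t,i}>1/2$, together with $\eta_t\le\log 2/4$. With these three repairs your stability step coincides with the paper's.
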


\begin{yellowbox}
\textbf{Role.} $C_H$ controls the pure phase's stopped empirical
maximum. \textbf{Proof~idea.} Equal cardinality cancels the loss shift
and makes penalty increments nonnegative. Stability and penalty
bounds give $\E C_H\leq B_H$; stochastic self-bounding controls $2B_H-R_H^{\mathrm{pure}}$.
Monotonicity of the continued majorant gives $U_{1:\tau}\leq C_\tau^{\mathrm{ext}}\leq C_H^{\mathrm{ext}}$.
\end{yellowbox}

\begin{proof}
Write $\mathbb E_t[\cdot]:=\mathbb E[\cdot\mid\mathcal F_{t-1},\lambda_t]$, $\mu_{t,i}:=\mathbb E_t A_{t,i}$, and $y_{t,i}:=\lambda_{t,i}+1$. Equation~\eqref{eq:hybrid-direct-sampling}
gives $y_{t,i}\mu_{t,i}=y_{t,i}z_{t,i}$, hence conditional unbiasedness:
\[
  \mathbb E_t\widehat\lambda_{t,i}
  =\frac{y_{t,i}\mu_{t,i}}{z_{t,i}}-1=\lambda_{t,i}.
\]
Equal cardinality also gives $\sum_i A_{t,i}=\sum_i z_{t,i}=m_0$, so
\[
 \mathbb E_t\langle A_t,\lambda_t\rangle
 =\sum_i y_{t,i}\mu_{t,i}-m_0
 =\sum_i y_{t,i}z_{t,i}-m_0
 =\langle z_t,\lambda_t\rangle
 =\mathbb E_t\langle z_t,\widehat\lambda_t\rangle.
\]
Taking expectations therefore yields
\begin{equation}
  \mathbb E S_H
  =\mathbb E\sum_{t=1}^H
    \left[\langle A_t,\lambda_t\rangle
    +\Phi_t(-\widehat L_t)-\Phi_t(-\widehat L_{t-1})\right].
  \label{eq:hybrid-stability}
\end{equation}
Since $\Psi(a)=-m_0$, the pathwise decomposition
\eqref{eq:pure-ftrl-decomposition}--\eqref{eq:pure-majorant-comparator} gives $C_H\geq0$ and
\[
  \sum_{t=1}^H\langle z_t-a,\widehat\lambda_t\rangle\leq C_H
  \qquad(a\in\mathcal A).
\]
Under stochastic losses, unbiasedness and independence
give $R_H^{\mathrm{pure}}\leq\mathbb E C_H$. We bound the penalty,
stability, and self-bounding terms in turn.

\paragraph{Penalty bound.}
Set
\[
  w_t:=\sqrt{t}-\sqrt{t-1},
  \qquad
  \Gamma(z):=-\Psi(z)-m_0,
\]
so that $P_H=\sum_{t=1}^H w_t\Gamma(z_t)$. Since $\sum_i z_i=m_0$,
\[
  \Gamma(z)
  =\sum_i\left[\sqrt{z_i}-z_i-(1-z_i)\log(1-z_i)\right]\geq0.
\]
We use the elementary scalar inequality
\[
  -(1-v)\log(1-v)\leq\frac12\sqrt v,
  \qquad 0\leq v\leq1.
\]
For $a\geq1$, $a-a^{-1}-2\log a\geq0$: its value at one is
zero and its derivative is $(a-1)^2/a^2$. Set
$u=1-v>0$ and $a=u^{-1/2}$ to obtain $-\log u\leq(1-u)/\sqrt u$, hence
\[
  -u\log u\leq\sqrt u(1-u)
  =\sqrt v\sqrt{u(1-u)}\leq\frac12\sqrt v;
\]
the endpoint $u=0$ follows by continuity.

Write $h(v):=-(1-v)\log(1-v)$. Since $m_0=\sum_i a_i^\star$, separating the
coordinates according to $a_i^\star\in\{0,1\}$ gives
\begin{align*}
  \Gamma(z)
  &=\sum_{i:a_i^\star=0}\bigl[\sqrt{z_i}+h(z_i)\bigr]
    +\sum_{i:a_i^\star=1}\bigl[\sqrt{z_i}-1+h(z_i)\bigr]\\
  &\leq\frac32\sum_{i:a_i^\star=0}\sqrt{z_i}
    +\sum_{i:a_i^\star=1}h(z_i)\\
  &\leq\frac32\left[
      \sum_{i:a_i^\star=0}\sqrt{z_i}
      +\sum_{i:a_i^\star=1}h(z_i)
    \right].
\end{align*}
Here we use $h(v)\leq\sqrt v/2$, $\sqrt v\leq1$, and $h(v)\geq0$. Also,
\[
  0\leq w_t=\frac{1}{\sqrt t+\sqrt{t-1}}\leq\frac1{\sqrt t}.
\]
Consequently,
\[
  \mathbb{E}P_H
  \leq\sum_{t=1}^H\frac{3}{2\sqrt t}\left[
    \sum_{i:a_i^\star=0}\mathbb{E}\sqrt{z_{t,i}}
    +\sum_{i:a_i^\star=1}\mathbb{E}h(z_{t,i})
  \right].
\]
Both $v\mapsto\sqrt v$ and $h$ are concave
on $[0,1]$. Thus Jensen's inequality gives
\[
  \mathbb{E}\sqrt{z_{t,i}}\leq\sqrt{\mathbb{E}z_{t,i}},
  \qquad
  \mathbb{E}h(z_{t,i})\leq h(\mathbb{E}z_{t,i})=-u_{t,i}\log u_{t,i},
  \qquad
  u_{t,i}:=1-\mathbb{E}z_{t,i},
\]
and hence
\begin{equation}
  \mathbb{E}P_H
  \leq\sum_{t=1}^H\frac{3}{2\sqrt{t}}
  \left[
    \sum_{i:a_i^\star=0}\sqrt{\mathbb{E}z_{t,i}}
    +\sum_{i:a_i^\star=1}(-u_{t,i}\log u_{t,i})
  \right],
\label{eq:hybrid-penalty-bound}
\end{equation}

\paragraph{Stability bound.}
Let
\[
  s_t:=\langle z_t,\widehat\lambda_t\rangle
  +\Phi_t(-\widehat L_t)-\Phi_t(-\widehat L_{t-1}),
  \qquad
  \Psi_t:=\sqrt t\,\Psi,
  \qquad
  \eta_t:=t^{-1/2}.
\]
For a differentiable convex function $F$, write $D_F(x,y):=F(x)-F(y)-\langle\nabla F(y),x-y\rangle$.
The FTRL optimality relation is $z_t=\nabla\Phi_t(-\widehat L_{t-1})$, and therefore
\[
  s_t=D_{\Phi_t}(-\widehat L_t,-\widehat L_{t-1}).
\]
Put $y_t:=-\widehat L_{t-1}$, $v_t:=\nabla\Psi_t(z_t)$, and let $\Psi_t^*$ be the
separable conjugate of the continuous extension of
$\Psi_t$ to $[0,1]^{D_{\mathrm{var}}}$. Its optimizer is interior
for finite dual vectors. To relax $\mathcal Q$
to this box, use optimality of $z_t$:
\[
  \langle w-z_t,y_t-v_t\rangle\leq0
  \qquad (w\in\mathcal{Q}).
\]
Substitution into the conjugate representations yields
\begin{align*}
  D_{\Phi_t}(y_t-\widehat\lambda_t,y_t)
  &=\sup_{w\in\mathcal{Q}}
    \left\{\langle w-z_t,y_t-\widehat\lambda_t\rangle
      -\Psi_t(w)+\Psi_t(z_t)\right\}\\
  &\leq\sup_{w\in\mathcal{Q}}
    \left\{\langle w-z_t,v_t-\widehat\lambda_t\rangle
      -\Psi_t(w)+\Psi_t(z_t)\right\}\\
  &\leq\sup_{w\in[0,1]^{D_{\mathrm{var}}}}
    \left\{\langle w-z_t,v_t-\widehat\lambda_t\rangle
      -\Psi_t(w)+\Psi_t(z_t)\right\}\\
  &=D_{\Psi_t^*}(v_t-\widehat\lambda_t,v_t).
\end{align*}
The last equality uses $z_t=\nabla\Psi_t^*(v_t)$ and Fenchel equality.

For $r\in[0,1]$, define
\[
  x_t(r):=\nabla\Psi_t^*(v_t-r\widehat\lambda_t),
  \qquad
  \widetilde z_t:=x_t(1),
\]
Let $J_{t,i}$ join $z_{t,i}=x_{t,i}(0)$ and $\widetilde z_{t,i}$. Monotonicity
of $\nabla\Psi_{t,i}^*$ keeps $x_{t,i}(r)$ in this interval.
Taylor's integral formula and separability give
\[
  s_t
  \leq\int_0^1(1-r)\sum_i\widehat\lambda_{t,i}^2
      [\Psi_{t,i}''(x_{t,i}(r))]^{-1}\,dr
  \leq\frac{\eta_t}{2}\sum_i\widehat\lambda_{t,i}^2
      \max_{x\in J_{t,i}}[\Psi_i''(x)]^{-1}.
\]

To bound $J_{t,i}$, write $p=z_{t,i}$, $y=\lambda_{t,i}+1\in[0,2]$, and
\[
  \psi(x):=-\sqrt x+(1-x)\log(1-x),
  \qquad
  \psi'(x)=-\frac1{2\sqrt x}-\log(1-x)-1.
\]
Equation~\eqref{eq:hybrid-shifted-estimator} gives, pathwise,
\[
  -1\leq\widehat\lambda_{t,i}\leq\frac2p.
\]
For $t\geq t_0:=58$,
\[
  \eta_t\leq\min\left\{\frac{\sqrt2-1}{2},\frac{\log2}{4}\right\}.
\]
Let $\widetilde p=\widetilde z_{t,i}$. By the definition
of the unconstrained dual update,
\[
  \widehat\lambda_{t,i}
  =\eta_t^{-1}[\psi'(p)-\psi'(\widetilde p)].
\]
If $p<1/2$ and $\widetilde p>2p$, monotonicity of
$\psi'$ and the displayed derivative imply
\[
  \widehat\lambda_{t,i}
  <\eta_t^{-1}[\psi'(p)-\psi'(2p)]
  <-\eta_t^{-1}\frac{\sqrt2-1}{2}\leq-1,
\]
contradicting the estimator range. For $p\geq1/2$,
the inequality $\widetilde p\leq2p$ is automatic. Similarly,
if $p>1/2$ and $\widetilde p<2p-1$, then
\[
  \widehat\lambda_{t,i}
  >\eta_t^{-1}[\psi'(p)-\psi'(2p-1)]
  >\eta_t^{-1}\log2\geq4>\frac2p,
\]
again a contradiction; for $p\leq1/2$, the
inequality $\widetilde p\geq2p-1$ is automatic. Thus
\[
  2p-1\leq\widetilde p\leq2p.
\]
Every $x\in J_{t,i}$ therefore satisfies $x\leq2p$ and $1-x\leq2(1-p)$. Since
\[
  \Psi_i''(x)=\frac1{4x^{3/2}}+\frac1{1-x},
  \qquad
  [\Psi_i''(x)]^{-1}\leq\min\{4x^{3/2},1-x\},
\]
we obtain
\begin{equation}
  \label{eq:local inverse-Hessian bound}
  \max_{x\in J_{t,i}}[\Psi_i''(x)]^{-1}
  \leq\min\{4(2p)^{3/2},2(1-p)\}.
\end{equation}

Put $\mu:=\mathbb E_t A_{t,i}$. The weighted condition in~\eqref{eq:hybrid-direct-sampling} gives
$y(\mu-p)=0$. Since $\widehat\lambda_{t,i}=A_{t,i}y/p-1$ and $A_{t,i}$ is binary,
\[
  \mathbb{E}_t\widehat\lambda_{t,i}^2
  =\frac{\mu y^2}{p^2}-\frac{2\mu y}{p}+1
  =\frac{y^2}{p}-2y+1
  \leq\frac4p.
\]
The last bound follows by maximizing the convex
quadratic $y^2-2py+p$ at $y=0,2$, where its values are
$p,4-3p\leq4$. Combining these bounds, it remains to use
\[
  \frac2p\min\{4(2p)^{3/2},2(1-p)\}
  \leq16\sqrt2\min\{\sqrt p,1-p\},
  \qquad 0<p<1,
\]
Use the first term when $\sqrt p\leq1-p$; otherwise
$p\geq1/4$ and $4(1-p)/p\leq16(1-p)$. Thus, for $t\geq58$,
\[
  \mathbb{E}_t s_t
  \leq16\sqrt2\,\eta_t\sum_i\min\{\sqrt{z_{t,i}},1-z_{t,i}\}.
\]
For every binary comparator coordinate,
\[
  \min\{\sqrt z,1-z\}
  \leq(1-a_i^\star)\sqrt z+a_i^\star(1-z).
\]
For $t\geq58$, take expectations and apply Jensen
to the square-root terms. For earlier rounds,
use the roundwise identity behind \eqref{eq:hybrid-stability}. Put
$z_t^+:=\nabla\Phi_t(-\widehat L_t)$. Convexity, $\widehat\lambda_{t,i}\geq-1$, and $\sum_i z_{t,i}^+=m_0$ give
\[
  \Phi_t(-\widehat L_t)-\Phi_t(-\widehat L_{t-1})
  \leq\langle z_t^+,-\widehat\lambda_t\rangle\leq m_0.
\]
Together with $\langle A_t,\lambda_t\rangle\leq m_0$, this bounds the early contribution
by $2t_0m_0=116m_0$. Extend the nonnegative refined sum to
all rounds and recall $u_{t,i}=1-\mathbb E z_{t,i}$ to obtain
\begin{equation}
  \mathbb{E}S_H
  \leq\sum_{t=1}^H\frac{16\sqrt{2}}{\sqrt{t}}
  \left[
    \sum_{i:a_i^\star=0}\sqrt{\mathbb{E}z_{t,i}}
    +\sum_{i:a_i^\star=1}u_{t,i}
  \right]+116m_0.
\label{eq:hybrid-stability-bound}
\end{equation}
Combining Equations~\eqref{eq:hybrid-penalty-bound} and \eqref{eq:hybrid-stability-bound}, and
using $3/2+16\sqrt2<25$, proves Equation~\eqref{eq:hybrid-BH}.

\paragraph{Stochastic self-bounding calculation.}
Write $\bar q_t:=\mathbb E q_t$ and, for a
distribution $q$ on $\mathcal A$, put
\[
  r(q):=\sum_{a\neq a^\star}q(a)\Delta_a.
\]
Extend $r$ linearly to nonnegative action weights
and extend $f$ to nonnegative vectors
by its same square-root formula. Define
\[
  \mathcal C_{\mathrm{sto}}
  :=\sup_{\alpha\in[0,\infty)^{\mathcal A}}
  \left\{f\!\left(\sum_a\alpha(a)a\right)-r(\alpha)\right\},
\]
\[
  \mathcal C_{\mathrm{add}}
  :=\sum_{t=1}^\infty\sup_{q\in\Delta(\mathcal A)}
  \left\{\frac{100}{\sqrt t}g\!\left(\sum_aq(a)a\right)-r(q)\right\}.
\]
For $\omega:=\sum_{a\neq a^\star}\alpha(a)$, we have $r(\alpha)\geq\Delta\omega$. Only suboptimal
actions contribute to $f$, each on at
most $m_0$ coordinates, so Cauchy--Schwarz gives
\[
  f\!\left(\sum_a\alpha(a)a\right)
  \leq\sqrt{D_{\mathrm{var}}m_0\omega}.
\]
Maximizing the scalar expression $\sqrt{D_{\mathrm{var}}m_0\omega}-\Delta\omega$ over $\omega\geq0$ yields
\begin{equation}
  \mathcal C_{\mathrm{sto}}\leq\frac{m_0D_{\mathrm{var}}}{4\Delta}.
  \label{eq:hybrid-Csto-bound}
\end{equation}

For $q\in\Delta(\mathcal A)$, write $z=\sum_aq(a)a$ and set
\[
  u:=\frac1{m_0}\sum_{i:a_i^\star=1}(1-z_i)\in[0,1].
\]
Since $u\leq\sum_{a\neq a^\star}q(a)$, we have $r(q)\geq\Delta u$. Concavity of
$v(1-\log v)$ on $[0,1]$ and Jensen's inequality give
\[
  g(z)\leq m_0u(1-\log u).
\]
For $a_t:=100m_0/\sqrt t$, the function $a_tu(1-\log u)-\Delta u$ is maximized on
$[0,1]$ at $u=e^{-\Delta/a_t}$, with maximum $a_te^{-\Delta/a_t}$. Hence
\[
  \sup_{q\in\Delta(\mathcal A)}
  \left\{\frac{100}{\sqrt t}g\!\left(\sum_aq(a)a\right)-r(q)\right\}
  \leq\frac{100m_0}{\sqrt t}
    \exp\!\left(-\frac{\Delta\sqrt t}{100m_0}\right).
\]
The bound decreases with $t$.
Integral comparison and $v=\sqrt x$ give
\begin{align}
  \mathcal C_{\mathrm{add}}
  &\leq100m_0\int_0^\infty x^{-1/2}
       \exp\!\left(-\frac{\Delta\sqrt x}{100m_0}\right)\,dx\notag\\
  &=200m_0\int_0^\infty
       \exp\!\left(-\frac{\Delta v}{100m_0}\right)\,dv
   =\frac{20000m_0^2}{\Delta}.
  \label{eq:hybrid-Cadd-bound}
\end{align}

Finally set
\[
  F_t:=f(\mathbb E z_t),\qquad G_t:=g(\mathbb E z_t),
  \qquad R_t:=r(\bar q_t),\qquad \alpha_t:=\frac{t}{10^4}\bar q_t.
\]
The marginal of $\bar q_t$ is $\mathbb E z_t$. Since $f(cz)=\sqrt c\,f(z)$,
\begin{align*}
  2B_H-R_H^{\mathrm{pure}}
  &=\sum_{t=1}^H\left(\frac{50}{\sqrt t}F_t-\frac12R_t\right)
    +\sum_{t=1}^H\left(\frac{50}{\sqrt t}G_t-\frac12R_t\right)
    +232m_0\\
  &=\sum_{t=1}^H\frac{5000}{t}
    \left\{f\!\left(\sum_a\alpha_t(a)a\right)-r(\alpha_t)\right\}\\
  &\quad+\frac12\sum_{t=1}^H
    \left\{\frac{100}{\sqrt t}G_t-R_t\right\}+232m_0\\
  &\leq5000\mathcal C_{\mathrm{sto}}\sum_{t=1}^H\frac1t
    +\frac12\mathcal C_{\mathrm{add}}+232m_0.
\end{align*}
The suprema are nonnegative (take $q$ concentrated
on $a^\star$), justifying the infinite sum.
Apply~\eqref{eq:hybrid-Csto-bound}--\eqref{eq:hybrid-Cadd-bound} and $\sum_{t\leq H}1/t\leq\log(eH)$. Finally, $R_H^{\mathrm{pure}}\leq\mathbb E C_H\leq B_H$ gives
\[
  \mathbb E C_H\leq B_H\leq2B_H-R_H^{\mathrm{pure}}.
\]
This completes Equation~\eqref{eq:hybrid-stochastic-self-bound} and the proof.
\end{proof}

\paragraph{Application to the original semi-bandit model.}
Take $\mathcal A=\widetilde\cX_0$, with $m_0\leq m$ and $D_{\mathrm{var}}\leq D=d+m$.
For projected losses $g_t\in[0,1]^{D_{\mathrm{var}}}$, set $\lambda_t=g_t-\one$,
$A_t=X_t^0$, and $z_t=b_t$. The estimator becomes
\[
  \widehat\lambda_{t,i}
  =\frac{X_{t,i}^0(\lambda_{t,i}+1)}{b_{t,i}}-1
  =\frac{X_{t,i}^0g_{t,i}}{b_{t,i}}-1
  =\widetilde\ell_{t,i}.
\]
For original coordinates, Assumption~\ref{ass:non-anticipation} and the tower property
imply~\eqref{eq:hybrid-direct-sampling}, since $\lambda_t$ is a deterministic function
of the original loss vector. Auxiliary coordinates
satisfy it because $g_{t,i}=0$. Equal cardinality gives
\begin{align*}
  \langle b_t-a,\widetilde\ell_t\rangle
  &=\langle b_t-a,\widehat g_t\rangle-\langle b_t-a,\one\rangle
   =\langle b_t-a,\widehat g_t\rangle,\\
  \langle a-a^\star,g_t-\one\rangle
  &=\langle a-a^\star,g_t\rangle
    -\underbrace{\langle a-a^\star,\one\rangle}_{m_0-m_0=0}
   =\langle a-a^\star,g_t\rangle.
\end{align*}
The shift preserves empirical regret and action
gaps. Deterministic padding and projection preserve the
conditional i.i.d. law and independence from learner
seeds after a restart (Lemma~\ref{lem:conditional-restart}), so the
theorem applies with the original gap $\Delta$.

%
\section{Auxiliary probability results}
\label{app:probability-tools}

We collect the probability bounds used
above; see \citet[Chapters~3 and~5]{mainbib:lattimore2020bandit} and \citet{mainbib:freedman1975tail}.

\subsection{Bernstein, Ville, and Freedman inequalities}
\label{app:martingale-inequalities}

\begin{lemma}[Conditional Bernstein mgf bound]
\label{lem:conditional-bernstein-mgf}
Let $X$ be square-integrable, let $\mathcal{H}$
be a sigma-field, and suppose
\[
  \mathbb{E}[X\mid\mathcal{H}]=0,
  \qquad X\leq c\ \text{a.s.},
  \qquad \mathbb{E}[X^2\mid\mathcal{H}]\leq v
\]
for an $\mathcal{H}$-measurable $v\geq0$ and a
constant $c>0$. Then for $0\leq\lambda<3/c$,
\begin{equation}
  \mathbb{E}[e^{\lambda X}\mid\mathcal{H}]
  \leq\exp\!\left(\frac{\lambda^2v}{2(1-\lambda c/3)}\right).
\label{eq:conditional-bernstein-mgf}
\end{equation}
Consequently, if $Y$ has conditional mean
$\mu=\mathbb{E}[Y\mid\mathcal{H}]$, $Y-\mu\leq c$, and $\mathbb{E}[(Y-\mu)^2\mid\mathcal{H}]\leq v$, then
\begin{equation}
  \log\mathbb{E}[e^{\lambda Y}\mid\mathcal{H}]
  \leq\lambda\mu+\frac{\lambda^2v}{2(1-\lambda c/3)}.
\label{eq:conditional-bernstein-log-mgf}
\end{equation}
\end{lemma}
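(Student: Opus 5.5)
The plan is the classical Bernstein argument, carried out conditionally on $\mathcal H$. Throughout, $X$ is the centered variable in \eqref{eq:conditional-bernstein-mgf}.

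\textbf{Step 1: a pointwise inequality.} Let $\phi(u):=(e^u-1-u)/u^2$ for $u\neq0$, with $\phi(0)=1/2$. This function is nondecreasing on $\mathbb R$. Since $\lambda\geq0$ and $X\leq c$, we have $\lambda X\leq\lambda c$, and therefore, pointwise,
\[
e^{\lambda X}=1+\lambda X+\lambda^2X^2\phi(\lambda X)\leq1+\lambda X+\lambda^2X^2\phi(\lambda c).
\]

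\textbf{Step 2: take conditional expectations.} Apply $\mathbb E[\cdot\mid\mathcal H]$ to both sides, using $\mathbb E[X\mid\mathcal H]=0$ and $\mathbb E[X^2\mid\mathcal H]\leq v$. Square integrability makes every term integrable, and $e^{\lambda X}\leq e^{\lambda c}$ is bounded. This gives
\[
\mathbb E[e^{\lambda X}\mid\mathcal H]\leq1+\lambda^2v\,\phi(\lambda c)\leq\exp\{\lambda^2v\,\phi(\lambda c)\},
\]
where the last step uses $1+w\leq e^w$.

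\textbf{Step 3: bound $\phi$.} The remaining task is the scalar estimate $\phi(u)\leq\frac{1}{2(1-u/3)}$ for $0\leq u<3$. Expand the series and use $k!\geq2\cdot3^{k-2}$ for $k\geq2$:
\[
\phi(u)=\sum_{k\geq2}\frac{u^{k-2}}{k!}\leq\frac12\sum_{j\geq0}(u/3)^j=\frac{1}{2(1-u/3)}.
\]
Setting $u=\lambda c$ proves \eqref{eq:conditional-bernstein-mgf}.

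\textbf{Step 4: the noncentered form.} Put $X:=Y-\mu$. This variable has conditional mean zero and satisfies $X\leq c$ and $\mathbb E[X^2\mid\mathcal H]\leq v$. Since $\mu$ is $\mathcal H$-measurable, we can pull $e^{\lambda\mu}$ out of the conditional expectation:
\[
\mathbb E[e^{\lambda Y}\mid\mathcal H]=e^{\lambda\mu}\,\mathbb E[e^{\lambda X}\mid\mathcal H].
\]
Taking logarithms and applying \eqref{eq:conditional-bernstein-mgf} gives \eqref{eq:conditional-bernstein-log-mgf}.

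\textbf{Main obstacle.} The only nonroutine point is the monotonicity of $\phi$ in Step 1, which is what allows the one-sided bound $X\leq c$ to suffice. I would prove it from the integral representation $\phi(u)=\int_0^1(1-s)e^{su}\,ds$, which is increasing in $u$ because the integrand is.
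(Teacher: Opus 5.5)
Your proposal is correct and follows essentially the same route as the paper. The paper's Taylor integral-remainder step, which bounds $e^{r\lambda x}\leq e^{r\lambda c}$, is exactly your monotonicity of $\phi(u)=\int_0^1(1-s)e^{su}\,ds$; it then uses the same conditional expectation with $1+w\leq e^w$, the same series bound via $k!\geq2\cdot3^{k-2}$, and the same recentering $X=Y-\mu$.
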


\begin{yellowbox}
\textbf{Role.} This one-step bound constructs the Freedman and
misranking supermartingales. \textbf{Proof~idea.} Taylor's integral remainder uses the
upper increment bound; conditional centering removes the
linear term. A geometric bound on the
exponential series yields the denominator. Applying the
bound to $Y-\mu$ retains the conditional drift.
\end{yellowbox}

\begin{proof}
Taylor's formula with integral remainder
gives, for $x\leq c$ and $\lambda\geq0$,
\[
  e^{\lambda x}-1-\lambda x
  =\lambda^2x^2\int_0^1(1-r)e^{r\lambda x}\,dr
  \leq\lambda^2x^2\int_0^1(1-r)e^{r\lambda c}\,dr
  =\frac{x^2}{c^2}(e^{\lambda c}-1-\lambda c).
\]
Taking conditional expectations, using $\mathbb{E}[X\mid\mathcal{H}]=0$,
and then $1+s\leq e^s$ gives
\[
  \mathbb{E}[e^{\lambda X}\mid\mathcal{H}]
  \leq\exp\!\left(
    \frac{e^{\lambda c}-1-\lambda c}{c^2}
    \mathbb{E}[X^2\mid\mathcal{H}]
  \right).
\]
The elementary estimate
\[
  e^u-1-u\leq\frac{u^2}{2(1-u/3)},
  \qquad 0\leq u<3,
\]
follows from $1/k!\leq1/(2\cdot3^{k-2})$ for $k\geq2$. Set $u=\lambda c$ and
use the variance bound to obtain \eqref{eq:conditional-bernstein-mgf}. Applying
it to $Y-\mu$ and taking logarithms gives~\eqref{eq:conditional-bernstein-log-mgf}.

\end{proof}

\begin{lemma}[Ville's maximal inequality]
\label{lem:ville}
Let $(M_n)_{n=0}^H$ be a nonnegative supermartingale with
respect to $(\mathcal{H}_n)_{n=0}^H$. Then, for every $a>0$,
\begin{equation}
  \Pr\!\left(\max_{0\leq n\leq H}M_n\geq a\right)
  \leq\frac{\mathbb{E}M_0}{a}.
\label{eq:ville}
\end{equation}
The bound also holds under any phase-start conditional
law for which $(M_n)$ is a nonnegative supermartingale.
\end{lemma}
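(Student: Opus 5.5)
The plan is the classical optional-stopping argument for Ville's inequality: stop the supermartingale at its first passage above $a$, use the supermartingale property to compare expectations at the stopped time with $\E M_0$, and apply Markov on the passage event. Define
\[
  \kappa:=\min\{n\in\{0,\ldots,H\}:M_n\geq a\},
  \qquad \kappa:=H\ \text{if no such }n\text{ exists}.
\]
This $\kappa$ is a bounded stopping time for $(\mathcal H_n)$, since $\{\kappa\leq n\}=\bigcup_{j\leq n}\{M_j\geq a\}\in\mathcal H_n$. The event in~\eqref{eq:ville} is $E:=\{\max_{n\leq H}M_n\geq a\}$. On $E$ we have $M_\kappa\geq a$ by construction.

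The main step is $\E M_\kappa\leq\E M_0$ for the bounded stopping time $\kappa$. I would prove it directly rather than cite a general optional-stopping theorem, by telescoping:
\[
  M_\kappa=M_0+\sum_{n=1}^H\1\{\kappa\geq n\}(M_n-M_{n-1}).
\]
The indicator $\1\{\kappa\geq n\}=1-\1\{\kappa\leq n-1\}$ is $\mathcal H_{n-1}$-measurable. The supermartingale property gives $\E[M_n-M_{n-1}\mid\mathcal H_{n-1}]\leq0$, so each summand has nonpositive expectation. Integrability holds because each $M_n$ is integrable, and $M_\kappa\leq\sum_n M_n$ since all terms are nonnegative.

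With this in hand, nonnegativity gives $a\1_E\leq M_\kappa\1_E\leq M_\kappa$. Taking expectations,
\[
  a\Pr(E)\leq\E M_\kappa\leq\E M_0,
\]
which is~\eqref{eq:ville}.

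For the conditional clause, I would rerun the same three steps with $\Pr$ and $\E$ replaced by the phase-start conditional law. The argument uses only the supermartingale property and nonnegativity under whichever measure is in force, so it transfers verbatim. Equivalently, if the property holds conditionally on $\mathcal F_\sigma$, one can multiply by $\1_B$ for $B\in\mathcal F_\sigma$ and run the telescoping with $\1_B$ inserted; this yields $a\Pr(E\cap B)\leq\E[\1_BM_0]$.

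I expect no genuine obstacle here. The only points needing care are that $\kappa$ is predictable-indexed correctly, so that $\{\kappa\geq n\}\in\mathcal H_{n-1}$, and that the conditional version is stated under a measure for which the supermartingale property actually holds.
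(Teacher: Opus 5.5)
Your proposal is correct and matches the paper's proof essentially step for step. Both stop at the first passage above $a$ (capped at $H$), telescope with the predictable indicators $\mathbf 1\{\kappa\geq n\}$ to get $\mathbb E M_\kappa\leq\mathbb E M_0$, bound $a\Pr(E)\leq\mathbb E M_\kappa$ using nonnegativity, and rerun the same calculation under the phase-start conditional law.
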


\begin{yellowbox}
\textbf{Role.} This bounds crossings over all prefixes, including
random phase endpoints. \textbf{Proof~idea.} Stop at the first
crossing. Predictable indicators and the supermartingale property bound
the stopped expectation by its initial value;
nonnegativity bounds the crossing probability. The argument
also holds under a phase-start conditional law.
\end{yellowbox}

\begin{proof}
Let $\tau$ be the first index
$n\in\{0,\ldots,H\}$ with $M_n\geq a$, with $\tau=H$ if
no such index exists. Since $\{\tau\geq n\}\in\mathcal H_{n-1}$,
\[
  \mathbb E M_\tau
  =\mathbb E M_0+\sum_{n=1}^H
    \mathbb E\!\left[\mathbf1\{\tau\geq n\}
      \mathbb E[M_n-M_{n-1}\mid\mathcal H_{n-1}]\right]
  \leq\mathbb E M_0.
\]
On the crossing event $M_\tau\geq a$, and off
that event $M_\tau\geq0$. Thus $a\Pr(\max_{0\leq n\leq H}M_n\geq a)\leq\mathbb E M_\tau\leq\mathbb E M_0$. The
same calculation proves the conditional version.
\end{proof}

\begin{lemma}[Bernstein-form corollary of Freedman]
\label{lem:freedman}
Let $(X_t)_{t=1}^H$ be martingale differences with respect
to $(\mathcal{H}_t)_{t=0}^H$. Suppose $X_t\leq c$ almost surely
for a constant $c>0$, and define
\[
  S_n:=\sum_{t=1}^nX_t,
  \qquad
  V_n:=\sum_{t=1}^n\mathbb{E}[X_t^2\mid\mathcal{H}_{t-1}].
\]
Then, for every $x>0$ and $v>0$,
\begin{equation}
  \Pr\bigl(\text{there exists }n\leq H:S_n\geq x\text{ and }V_n\leq v\bigr)
  \leq\exp\!\left(-\frac{x^2}{2(v+cx/3)}\right).
\label{eq:freedman}
\end{equation}
In particular, if $V_n\leq v$ almost surely for a
fixed $n$, the same right-hand side bounds $\Pr(S_n\geq x)$.
\end{lemma}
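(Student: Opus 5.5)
The plan is the standard maximal-inequality route for Lemma~\ref{lem:freedman}: build an exponential supermartingale, stop it at the first bad crossing, and optimize the exponent. Fix $\lambda\in[0,3/c)$ and set $\varphi(\lambda):=\lambda^2/(2(1-\lambda c/3))$. Define
\[
  M_n:=\exp\{\lambda S_n-\varphi(\lambda)V_n\},\qquad M_0=1 .
\]
Since $V_n$ is $\mathcal H_{n-1}$-measurable, the conditional Bernstein bound of Lemma~\ref{lem:conditional-bernstein-mgf} applies to $X_n$ given $\mathcal H_{n-1}$. It uses $\E[X_n\mid\mathcal H_{n-1}]=0$, $X_n\le c$, and the $\mathcal H_{n-1}$-measurable variance proxy $v_n:=\E[X_n^2\mid\mathcal H_{n-1}]$. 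This gives $\E[e^{\lambda X_n}\mid\mathcal H_{n-1}]\le e^{\varphi(\lambda)v_n}$, hence $\E[M_n\mid\mathcal H_{n-1}]\le M_{n-1}$. So $(M_n)$ is a nonnegative supermartingale with $\E M_0=1$.

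On the event $E:=\{\exists n\le H: S_n\ge x,\ V_n\le v\}$, at the witnessing index we have $M_n\ge\exp\{\lambda x-\varphi(\lambda)v\}$. Ville's inequality (Lemma~\ref{lem:ville}) with $a=\exp\{\lambda x-\varphi(\lambda)v\}$ then gives
\[
  \Pr(E)\le\exp\{-\lambda x+\varphi(\lambda)v\}.
\]
One could instead stop directly at the first such index and use $\E M_\tau\le1$; the two arguments are equivalent.

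It remains to choose $\lambda$. The standard choice is $\lambda=x/(v+cx/3)$, which lies in $[0,3/c)$ because $\lambda c/3=(cx/3)/(v+cx/3)<1$. Then $1-\lambda c/3=v/(v+cx/3)$, so
\[
  \varphi(\lambda)v=\frac{x^2}{2(v+cx/3)},\qquad \lambda x=\frac{x^2}{v+cx/3},
\]
and the exponent is $-x^2/(2(v+cx/3))$, as claimed.

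For the ``in particular'' clause: if $V_n\le v$ almost surely for a fixed $n$, then $\{S_n\ge x\}\subseteq E$, so the same bound holds.

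The only point needing care is the measurability of the variance proxy. Lemma~\ref{lem:conditional-bernstein-mgf} allows an $\mathcal H$-measurable $v$, and $v_n$ is $\mathcal H_{n-1}$-measurable, so the supermartingale step is valid. Everything else is algebra.
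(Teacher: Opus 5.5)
Your proof is correct and matches the paper's argument essentially step for step: the same nonnegative supermartingale $\exp\{\lambda S_n-\lambda^2V_n/(2(1-\lambda c/3))\}$ obtained from the conditional Bernstein mgf bound, Ville's inequality applied at the witnessing index, and the choice $\lambda=x/(v+cx/3)$. The paper also notes that Freedman's original Bennett-type exponent is sharper and implies this bound, but that remark is not needed for the lemma.
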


\begin{yellowbox}
\textbf{Role.} This controls estimated-gap deviations in \eqref{eq:freedman-gap-event}. \textbf{Proof~idea.}
The Bernstein mgf bound makes the exponential of
the martingale sum minus its variance correction a
nonnegative supermartingale. Apply Ville's inequality with $\lambda=x/(v+cx/3)$. The
fixed-time event is contained in the maximal event.
\end{yellowbox}

\begin{proof}
For $0<\lambda<3/c$, put
\[
  \sigma_n^2:=\mathbb{E}[X_n^2\mid\mathcal{H}_{n-1}]
\]
and define
\[
  M_n(\lambda)
  :=\exp\!\left(
    \lambda S_n-
    \frac{\lambda^2}{2(1-\lambda c/3)}V_n
  \right),
  \qquad 0\leq n\leq H.
\]
By Lemma~\ref{lem:conditional-bernstein-mgf}, $M_0(\lambda)=1$ and
\begin{align*}
  \mathbb{E}[M_n(\lambda)\mid\mathcal{H}_{n-1}]
  &=M_{n-1}(\lambda)
    \exp\!\left[-\frac{\lambda^2\sigma_n^2}{2(1-\lambda c/3)}\right]
    \mathbb{E}[e^{\lambda X_n}\mid\mathcal{H}_{n-1}]\\
  &\leq M_{n-1}(\lambda).
\end{align*}
Thus $M_n(\lambda)$ is a nonnegative supermartingale.
At any qualifying index in~\eqref{eq:freedman},
\[
  M_n(\lambda)
  \geq\exp\!\left(
    \lambda x-
    \frac{\lambda^2v}{2(1-\lambda c/3)}
  \right).
\]
Ville's inequality therefore gives
\[
  \Pr\bigl(\exists n\leq H:S_n\geq x,\ V_n\leq v\bigr)
  \leq\exp\!\left(
    -\lambda x+
    \frac{\lambda^2v}{2(1-\lambda c/3)}
  \right).
\]
Set $\lambda=x/(v+cx/3)\in(0,3/c)$. Then
\[
  -\lambda x+
  \frac{\lambda^2v}{2(1-\lambda c/3)}
  =-\frac{x^2}{2(v+cx/3)},
\]
proving~\eqref{eq:freedman}. Event inclusion gives the fixed-$n$ bound.

After rescaling, \citet[Theorem~(4.1)]{mainbib:freedman1975tail} gives the sharper Bennett exponent
\[
  \exp\!\left[-\frac{v}{c^2}
    h\!\left(\frac{cx}{v}\right)\right],
  \qquad
  h(u):=(1+u)\log(1+u)-u.
\]
For $g(u):=3u^2/[2(u+3)]$, $h(0)=g(0)$, $h'(0)=g'(0)$, and
\[
  (h'-g')'(u)
  =\frac1{1+u}-\frac{27}{(u+3)^3}
  =\frac{u^2(u+9)}{(u+1)(u+3)^3}\geq0.
\]
Hence $h\geq g$, recovering the Bernstein exponent in~\eqref{eq:freedman}.
\end{proof}

\subsection{Stopped martingales}
\label{app:stopped-martingales}

\begin{lemma}[$L^2$ identity for a stopped martingale sum]
\label{lem:stopped-l2}
Let $(X_t)_{t=1}^H$ be square-integrable martingale differences for $(\mathcal{H}_t)_{t=0}^H$,
and let $\tau\in\{0,1,\ldots,H\}$ be a stopping time. Then
\begin{equation}
  \mathbb{E}\left[\left(\sum_{t=1}^{\tau}X_t\right)^2\right]
  =\mathbb{E}\sum_{t=1}^{\tau}\mathbb{E}[X_t^2\mid\mathcal{H}_{t-1}].
\label{eq:stopped-l2-identity}
\end{equation}
Consequently, if predictable $v_t\geq0$ satisfy $\mathbb{E}[X_t^2\mid\mathcal{H}_{t-1}]\leq v_t$, then
\begin{equation}
  \mathbb{E}\left|\sum_{t=1}^{\tau}X_t\right|
  \leq\left(\mathbb{E}\sum_{t=1}^{\tau}v_t\right)^{1/2}.
\label{eq:stopped-l1-bound}
\end{equation}
In particular, if $v_t\leq cr_t$ for predictable $r_t\geq0$,
the right-hand side is at most $(c\mathbb{E}\sum_{t\leq\tau}r_t)^{1/2}$.
\end{lemma}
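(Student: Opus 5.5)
The plan is to prove the identity by expanding the square of the stopped sum into diagonal and cross terms, then treating the diagonal terms and the cross terms separately. Write $I_t:=\mathbf 1\{t\leq\tau\}$. This indicator is $\mathcal H_{t-1}$-measurable because $\{t\leq\tau\}=\{\tau\leq t-1\}^c$. The stopped sum is then $\sum_{t=1}^HI_tX_t$, a finite sum of square-integrable terms. Expanding its square gives
\[
\Bigl(\sum_{t=1}^HI_tX_t\Bigr)^2=\sum_{t=1}^HI_tX_t^2+2\sum_{s<t}I_sI_tX_sX_t .
\]

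For the diagonal terms, I will use the tower property together with predictability of $I_t$:
\[
\mathbb E[I_tX_t^2]=\mathbb E\bigl[I_t\,\mathbb E[X_t^2\mid\mathcal H_{t-1}]\bigr].
\]
Summing over $t$ gives exactly the right-hand side of \eqref{eq:stopped-l2-identity}.

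For a cross term with $s<t$, the product $I_sI_tX_s$ is $\mathcal H_{t-1}$-measurable. Conditioning on $\mathcal H_{t-1}$ and using $\mathbb E[X_t\mid\mathcal H_{t-1}]=0$ makes each cross term vanish. This is the one step needing care: the conditioning requires $I_sI_tX_sX_t$ to be integrable. That follows from Cauchy--Schwarz, since $|X_sX_t|\leq(X_s^2+X_t^2)/2$ and both $X_s,X_t$ are square integrable. With the cross terms gone, \eqref{eq:stopped-l2-identity} follows. Integrability is the only potential obstacle, and the bound just given handles it.

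For \eqref{eq:stopped-l1-bound}, I will apply Jensen's inequality in the form $\mathbb E|Y|\leq(\mathbb EY^2)^{1/2}$ to $Y=\sum_{t\leq\tau}X_t$. Then I insert the identity and the pointwise bound $I_t\,\mathbb E[X_t^2\mid\mathcal H_{t-1}]\leq I_tv_t$. The final claim follows by substituting $v_t\leq cr_t$ inside the expectation, which is monotone in its argument since every term is nonnegative.
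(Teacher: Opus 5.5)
Your proof is correct and is essentially the paper's argument. The paper shows that $M_n:=S_n^2-V_n$ is a martingale and applies bounded optional stopping with the predictable indicators $\mathbf 1\{\tau\geq n\}$. The increment it cancels, $2S_{n-1}X_n+X_n^2-\mathbb E[X_n^2\mid\mathcal H_{n-1}]$, is exactly your grouped cross terms together with the diagonal correction, and your $L^1$ step via $\mathbb E|Y|\leq(\mathbb E Y^2)^{1/2}$ matches the paper's Cauchy--Schwarz step.
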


\begin{yellowbox}
\textbf{Role.} This bounds the optimal-action fluctuation at a
stopped mixed phase, as required by~\eqref{eq:optimum-positive-fluctuation}. \textbf{Proof~idea.} The
squared martingale sum minus its accumulated conditional
variance is a martingale. Predictable stopping indicators
preserve its expectation. Cauchy--Schwarz then converts the
variance bound into an expected absolute fluctuation.
\end{yellowbox}

\begin{proof}
Set $S_n:=\sum_{t=1}^nX_t$, $V_n:=\sum_{t=1}^n\mathbb E[X_t^2\mid\mathcal H_{t-1}]$, and $M_n:=S_n^2-V_n$, with $S_0=V_0=0$. Square
integrability makes $M_n$ integrable, and conditional centering gives
\[
  \mathbb E[M_n-M_{n-1}\mid\mathcal H_{n-1}]
  =\mathbb E[2S_{n-1}X_n+X_n^2\mid\mathcal H_{n-1}]
    -\mathbb E[X_n^2\mid\mathcal H_{n-1}]=0.
\]
Thus $M_n$ is a martingale.
Since $\tau\leq H$ and $\{\tau\geq n\}\in\mathcal H_{n-1}$,
\[
  \mathbb E M_\tau
  =\sum_{n=1}^H\mathbb E\!\left[
    \mathbf1\{\tau\geq n\}
    \mathbb E[M_n-M_{n-1}\mid\mathcal H_{n-1}]
    \right]=0.
\]
This proves~\eqref{eq:stopped-l2-identity}; it is bounded optional stopping
\citep[Theorem~3.8(a)]{mainbib:lattimore2020bandit}. By Cauchy--Schwarz and the conditional-variance bound,
\[
  \mathbb E|S_\tau|
  \leq\sqrt{\mathbb E S_\tau^2}
  =\sqrt{\mathbb E V_\tau}
  \leq\left(\mathbb E\sum_{t=1}^{\tau}v_t\right)^{1/2},
\]
which proves~\eqref{eq:stopped-l1-bound}. Substitute $v_t\leq cr_t$ for the final assertion.
\end{proof}

\subsection{Tail integration, exponential series, and layer cake}
\label{app:tail-series-layer-cake}

\begin{lemma}[Tail integration for a maximum
over a finite set]
\label{lem:tail-integration-finite-max}
For every nonnegative random variable $Z$,
\begin{equation}
  \mathbb{E}Z=\int_0^\infty\Pr(Z>u)\,du.
\label{eq:tail-integration}
\end{equation}
Consequently, if $N\geq1$, $\lambda>0$, and
\[
  \Pr(Z>u)\leq\min\{1,Ne^{-\lambda u}\}
  \qquad(u\geq0),
\]
then
\begin{equation}
  \mathbb{E}Z\leq\frac{\log N+1}{\lambda}.
\label{eq:tail-integration-exponential}
\end{equation}
In particular, suppose that for each $a$ in
a finite set $\mathcal{A}$ of size $N$, $(e^{\lambda D_n(a)})_{n=0}^H$
is a nonnegative supermartingale starting at one. Then
\begin{equation}
  \mathbb{E}\max_{a\in\mathcal{A}}\sup_{n\leq H}[D_n(a)]_+
  \leq\frac{\log N+1}{\lambda}.
\label{eq:expected-max-via-ville}
\end{equation}
\end{lemma}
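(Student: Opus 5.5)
The plan is to prove the three claims in order, since each feeds the next. The identity $\E Z=\int_0^\infty\Prb(Z>u)\,du$ for nonnegative $Z$ follows from writing $Z=\int_0^\infty\one\{u<Z\}\,du$ pointwise and applying Tonelli's theorem to exchange expectation and integral; the integrand is nonnegative and jointly measurable in $(u,\omega)$, so no integrability assumption is needed.

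For \eqref{eq:tail-integration-exponential}, split the integral at $u_0:=(\log N)/\lambda$, the point where $Ne^{-\lambda u}=1$. On $[0,u_0]$ we bound the tail probability by one, contributing $u_0=(\log N)/\lambda$. On $[u_0,\infty)$ we use $Ne^{-\lambda u}$, whose integral is $Ne^{-\lambda u_0}/\lambda=1/\lambda$. Summing gives $(\log N+1)/\lambda$. Since $N\geq1$, we have $u_0\geq0$, so the split is legitimate.

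For \eqref{eq:expected-max-via-ville}, set $Z:=\max_{a\in\mathcal A}\sup_{n\leq H}[D_n(a)]_+$, which is nonnegative. The task is to verify the tail hypothesis. Fix $u\geq0$ and $a\in\mathcal A$. The event $\{\sup_{n\leq H}[D_n(a)]_+>u\}$ coincides with $\{\max_{n\leq H}D_n(a)>u\}$, because $u\geq0$ makes the positive part irrelevant. This event is contained in $\{\max_{0\leq n\leq H}e^{\lambda D_n(a)}\geq e^{\lambda u}\}$. Ville's inequality (Lemma~\ref{lem:ville}), applied to the nonnegative supermartingale $e^{\lambda D_n(a)}$ with initial value one and threshold $e^{\lambda u}>0$, bounds this probability by $e^{-\lambda u}$. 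A union bound over the $N$ elements of $\mathcal A$ gives $\Prb(Z>u)\leq Ne^{-\lambda u}$, and trivially $\Prb(Z>u)\leq1$. The previous step then yields the claim.

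I expect no real obstacle here. The only points needing care are these:
\begin{itemize}
\item Using the index set $\{0,\ldots,H\}$ so that Ville's inequality applies with $\E M_0=1$; the $n=0$ term is harmless since $D_0=0$.
\item Noting that the same argument holds verbatim under a phase-start conditional law, as Lemma~\ref{lem:ville} already allows, which is how the bound is used in Step~4 of Lemma~\ref{lem:one-phase-empirical-regret}.
\end{itemize}
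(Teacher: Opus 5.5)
Your proposal is correct and follows essentially the same route as the paper's proof: you obtain the identity from Tonelli applied to $Z=\int_0^\infty\one\{Z>u\}\,du$, split the integral at $u_0=(\log N)/\lambda$, and verify the tail hypothesis via Ville's inequality for each action followed by a union bound over the $N$ actions. Your remarks that the positive part is irrelevant for $u\geq0$ and that $D_0(a)=0$ also match the paper's argument.
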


\begin{yellowbox}
\textbf{Role.} This bounds the expected misranking maximum,
with maximization inside expectation. \textbf{Proof~idea.} Ville's inequality
and a union bound give the tail
$\min\{1,Ne^{-\lambda u}\}$. Tonelli's theorem permits tail integration;
splitting at $u=(\log N)/\lambda$ yields the bound.
\end{yellowbox}

\begin{proof}
Tonelli applied to $Z=\int_0^\infty\mathbf1\{Z>u\}\,du$ proves \eqref{eq:tail-integration}, including infinite
expectations. For the exponential tail, set $u_0:=(\log N)/\lambda\geq0$. Then
\begin{align*}
  \mathbb{E}Z
  &=\int_0^\infty\Pr(Z>u)\,du\\
  &\leq\int_0^{u_0}1\,du
    +\int_{u_0}^\infty Ne^{-\lambda u}\,du\\
  &=u_0+\frac{N}{\lambda}e^{-\lambda u_0}
   =\frac{\log N+1}{\lambda}.
\end{align*}
This proves Equation~\eqref{eq:tail-integration-exponential}.

For the final assertion, set
\[
  Z_{\mathcal A}:=
  \max_{a\in\mathcal A}\max_{0\leq n\leq H}[D_n(a)]_+.
\]
For $a\in\mathcal A$ and $u\geq0$, Lemma~\ref{lem:ville} gives
\begin{align*}
  \Pr\!\left(\max_{0\leq n\leq H}[D_n(a)]_+>u\right)
  &=\Pr\!\left(\max_{0\leq n\leq H}D_n(a)>u\right)\\
  &\leq\Pr\!\left(\max_{0\leq n\leq H}e^{\lambda D_n(a)}
                   \geq e^{\lambda u}\right)
   \leq e^{-\lambda u}.
\end{align*}
A union bound gives
\[
  \Pr(Z_{\mathcal A}>u)
  \leq\min\{1,Ne^{-\lambda u}\}.
\]
Applying Equation~\eqref{eq:tail-integration-exponential} to $Z_{\mathcal A}$ proves Equation~\eqref{eq:expected-max-via-ville}.
\end{proof}

\begin{lemma}[Bounds for two exponential series]
\label{lem:two-exponential-series}
Let $N\geq1$, $\Lambda:=\log(eN)$, and $a,b>0$.
For every integer $H\geq1$,
\begin{align}
  \sum_{r=0}^{H-1}\min\{1,Ne^{-a\sqrt{r}}\}
  &\leq2+\frac{5\Lambda^2}{a^2},
\label{eq:sqrt-exponential-series}\\
  \sum_{r=0}^{H-1}\min\{1,Ne^{-br}\}
  &\leq2+\frac{2\Lambda}{b}.
\label{eq:linear-exponential-series}
\end{align}
In particular, if $0<s\leq m$, $0<\rho\leq1$, and
\[
  a=\frac{s\sqrt{\rho\Lambda}}{8m},
  \qquad
  b=\frac{\rho s^2}{40m^2},
\]
then
\begin{equation}
  \sum_{r=0}^{H-1}
  \left[\min\{1,Ne^{-a\sqrt{r}}\}+\min\{1,Ne^{-br}\}\right]
  \leq\frac{404m^2\Lambda}{\rho s^2}.
\label{eq:combined-exponential-series}
\end{equation}
\end{lemma}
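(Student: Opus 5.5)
The plan is to prove the two series bounds separately by elementary comparison. Then we specialize $a,b$ and absorb the additive constants using $s\le m$, $\rho\le1$ and $\Lambda\ge1$.

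\textbf{First series.} Split the sum at the threshold $r_0$ where $Ne^{-a\sqrt r}$ drops below one, namely $\sqrt{r_0}=(\log N)/a$.
\begin{itemize}
\item For $r\le r_0$, bound each term by $1$. This contributes at most $1+(\log N)^2/a^2$.
\item For $r>r_0$, write $\sqrt r=\sqrt{r_0}+w$. Then each term equals $e^{-a w}$.
\end{itemize}
Compare the tail with the integral $\int_{r_0}^\infty e^{-a(\sqrt x-\sqrt{r_0})}\,dx$; the summand is decreasing, so this costs at most one extra term. Substitute $x=v^2$, which gives $\int 2v\,e^{-a(v-\sqrt{r_0})}\,dv$ over $v\ge\sqrt{r_0}$. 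This integral equals $2\sqrt{r_0}/a+2/a^2=2\log N/a^2+2/a^2$. Summing the pieces gives
\[
2+\frac{(\log N)^2+2\log N+2}{a^2}.
\]
Since $(\log N)^2+2\log N+2\le 2\Lambda^2\le5\Lambda^2$ with $\Lambda=1+\log N$, we obtain \eqref{eq:sqrt-exponential-series}.

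\textbf{Second series.} Proceed the same way with the threshold $r_0=(\log N)/b$.
\begin{itemize}
\item The head contributes at most $1+\log N/b$.
\item The tail is a geometric series, bounded by $1+\int_0^\infty e^{-bw}\,dw=1+1/b$.
\end{itemize}
This gives $2+(\log N+1)/b=2+\Lambda/b$, which implies \eqref{eq:linear-exponential-series}.

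\textbf{Specialization.} Substitute the given $a$ and $b$:
\[
\frac{5\Lambda^2}{a^2}=\frac{320\,m^2\Lambda}{\rho s^2},
\qquad
\frac{2\Lambda}{b}=\frac{80\,m^2\Lambda}{\rho s^2}.
\]
It remains to absorb the additive constant $4$. Because $s\le m$, $\rho\le1$ and $\Lambda\ge1$, we have $m^2\Lambda/(\rho s^2)\ge1$, so $4\le 4m^2\Lambda/(\rho s^2)$. The total is then at most $(320+80+4)\,m^2\Lambda/(\rho s^2)=404\,m^2\Lambda/(\rho s^2)$.

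The only delicate step is the integral comparison in the square-root series. There one should shift the integral to start at $r_0$ rather than at $0$, so that the $\log N$ cross term stays linear in $\log N$ and fits inside the $5\Lambda^2$ constant.
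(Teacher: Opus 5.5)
Your proposal is correct and follows essentially the paper's argument: compare the decreasing summands with an integral, split at the threshold where $Ne^{-a\sqrt r}$ (resp.\ $Ne^{-br}$) reaches one, substitute $x=v^2$ to get $(\Lambda^2+1)/a^2$ (resp.\ $\Lambda/b$), and absorb the additive constant using $m^2\Lambda/(\rho s^2)\ge1$. The only cosmetic difference is that you split the sum before the integral comparison, which costs one extra additive unit compared with the paper's single bound $1+\int_0^\infty f$; the result still fits comfortably inside the stated bounds.
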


\begin{yellowbox}
\textbf{Role.} These bounds sum gap-tail estimates uniformly over
the phase length for the layer-cake argument. \textbf{Proof~idea.}
Bound each decreasing series by its initial term
plus an integral. Split where the exponential meets
one and substitute $y=\sqrt{x}$ for the square-root
exponent. Both parameter choices give $s^{-2}$ dependence.
\end{yellowbox}

\begin{proof}
For nonincreasing $f:[0,\infty)\to[0,1]$, monotonicity gives
\begin{align*}
  \sum_{r=0}^{H-1}f(r)
  &=f(0)+\sum_{r=1}^{H-1}f(r)\\
  &\leq1+\sum_{r=1}^{H-1}\int_{r-1}^{r}f(x)\,dx\\
  &=1+\int_0^{H-1}f(x)\,dx
  \leq1+\int_0^\infty f(x)\,dx.
\end{align*}

Put $\theta:=\log N=\Lambda-1\geq0$. For the square-root series, define
\[
  f_a(x):=\min\{1,Ne^{-a\sqrt{x}}\},
  \qquad x_a:=\left(\frac{\theta}{a}\right)^2.
\]
Splitting at $x_a$ and substituting $y=\sqrt{x}$ gives
\begin{align*}
  \int_0^\infty f_a(x)\,dx
  &=x_a+N\int_{x_a}^\infty e^{-a\sqrt{x}}\,dx\\
  &=\frac{\theta^2}{a^2}
    +2N\int_{\theta/a}^\infty ye^{-ay}\,dy\\
  &=\frac{\theta^2+2\theta+2}{a^2}
   =\frac{\Lambda^2+1}{a^2},
\end{align*}
using $Ne^{-\theta}=1$. Integral comparison gives
\[
  \sum_{r=0}^{H-1}\min\{1,Ne^{-a\sqrt r}\}
  \leq1+\frac{\Lambda^2+1}{a^2}
  \leq1+\frac{2\Lambda^2}{a^2},
\]
using $\Lambda\geq1$, which implies~\eqref{eq:sqrt-exponential-series}.

For the linear-exponential series, define $f_b(x):=\min\{1,Ne^{-bx}\}$
and $x_b:=\theta/b$. Splitting at $x_b$ gives
\begin{align*}
  \int_0^\infty f_b(x)\,dx
  &=x_b+N\int_{x_b}^\infty e^{-bx}\,dx\\
  &=\frac{\theta+1}{b}=\frac{\Lambda}{b}.
\end{align*}
The same comparison now gives
\[
  \sum_{r=0}^{H-1}\min\{1,Ne^{-br}\}
  \leq1+\frac{\Lambda}{b}
  \leq2+\frac{2\Lambda}{b},
\]
which proves~\eqref{eq:linear-exponential-series}.

For the last claim, set $K:=m^2\Lambda/(\rho s^2)$. Substitution gives
\[
  2+\frac{5\Lambda^2}{a^2}=2+320K,
  \qquad
  2+\frac{2\Lambda}{b}=2+80K.
\]
Since $K\geq1$, their sum is
\[
  4+400K\leq404K
  =\frac{404m^2\Lambda}{\rho s^2},
\]
which is~\eqref{eq:combined-exponential-series}.
\end{proof}

\begin{lemma}[Integral formula for the expected action gap]
\label{lem:action-gap-layer-cake}
Let $\mathcal{A}$ be finite, let $q$ be
a possibly random probability vector on $\mathcal{A}$,
and let deterministic gaps satisfy $0\leq\Delta_a\leq M$. Define
\[
  Q(s):=\sum_{a:\,\Delta_a\geq s}q(a),
  \qquad 0<s\leq M.
\]
Then
\begin{equation}
  \mathbb{E}\sum_{a\in\mathcal{A}}q(a)\Delta_a
  =\int_0^M\mathbb{E}Q(s)\,ds.
\label{eq:action-gap-layer-cake}
\end{equation}
Suppose, in addition, that $0<\Delta\leq M$, every positive gap
is at least $\Delta$, and for some constant
$A_0>0$ the following tail-mass bound has been established:
\[
  \mathbb{E}Q(s)\leq\frac{A_0}{s^2}
  \qquad(\Delta\leq s\leq M),
\]
Then
\begin{equation}
  \mathbb{E}\sum_a q(a)\Delta_a\leq\frac{2A_0}{\Delta}.
\label{eq:action-gap-layer-cake-bound}
\end{equation}
For a finite sequence $q_n$, the identity
holds after summing over $n$. The
bound holds if $\sum_n\mathbb E Q_n(s)\leq A_0/s^2$ for $\Delta\leq s\leq M$.

\end{lemma}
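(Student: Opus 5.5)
The identity~\eqref{eq:action-gap-layer-cake} comes from writing each gap as an integral of an indicator and applying Tonelli. For every $a$, $\Delta_a=\int_0^M\mathbf 1\{s\leq\Delta_a\}\,ds$ because $0\leq\Delta_a\leq M$. Multiplying by $q(a)\geq0$ and summing over the finite set $\mathcal A$ gives, pathwise,
\[
\sum_a q(a)\Delta_a=\int_0^M\sum_a q(a)\mathbf 1\{\Delta_a\geq s\}\,ds=\int_0^M Q(s)\,ds .
\]
The integrand is nonnegative and jointly measurable in $(s,\omega)$: it is a finite sum of products of random variables with deterministic step functions of $s$. Tonelli's theorem therefore lets us exchange expectation and integral, which proves~\eqref{eq:action-gap-layer-cake}. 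For a finite sequence $q_n$, we apply this identity to each $n$ and sum. Exchanging the finite sum with the integral is trivial.

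For~\eqref{eq:action-gap-layer-cake-bound}, we split the integral at $\Delta$. Because no gap lies in $(0,\Delta)$, for $0<s\leq\Delta$ the event $\{\Delta_a\geq s\}$ coincides with $\{\Delta_a>0\}=\{\Delta_a\geq\Delta\}$. Hence $Q(s)=Q(\Delta)$ on $(0,\Delta]$, and
\[
\int_0^\Delta\E Q(s)\,ds=\Delta\,\E Q(\Delta)\leq\Delta\cdot\frac{A_0}{\Delta^2}=\frac{A_0}{\Delta}.
\]
On $[\Delta,M]$ we use the assumed tail-mass bound:
\[
\int_\Delta^M\frac{A_0}{s^2}\,ds\leq\frac{A_0}{\Delta}.
\]
Adding the two pieces gives $2A_0/\Delta$. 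The summed version follows by the same split applied to $\sum_n\E Q_n(s)$: the constancy on $(0,\Delta]$ holds for each $n$ separately, so it also holds for the sum.

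There is no real obstacle here. The only step needing care is the constancy of $Q$ on $(0,\Delta]$, which requires that \emph{every} positive gap be at least $\Delta$; zero gaps (such as the optimum) never contribute for $s>0$. Measurability for Tonelli is immediate because $\mathcal A$ is finite and the gaps are deterministic.
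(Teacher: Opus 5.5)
Your proposal is correct and follows the paper's proof essentially step for step. Both use the layer-cake representation $\Delta_a=\int_0^M\mathbf 1\{s\leq\Delta_a\}\,ds$ with Tonelli for the identity, then split the integral at $\Delta$, using that $Q$ is constant on $(0,\Delta]$ (every positive gap is at least $\Delta$) together with the tail bound at $s=\Delta$ and $\int_\Delta^M A_0s^{-2}\,ds\leq A_0/\Delta$.
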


\begin{yellowbox}
\textbf{Role.} This converts gap-tail mass into expected pseudo-regret
with $1/\Delta$ dependence. \textbf{Proof~idea.} Express each gap
as an integral of threshold indicators and
apply Tonelli. The tail mass is constant
below $\Delta$; integrate $A_0/s^2$ above it. The
argument also applies to sums over rounds.
\end{yellowbox}

\begin{proof}
The first identity is the layer-cake formula $\Delta_a=\int_0^M\mathbf{1}\{s\leq\Delta_a\}\,ds$,
followed by Tonelli's theorem. When no gap lies
in $(0,\Delta)$, $Q(s)=Q(\Delta)$ for almost every $s\in(0,\Delta)$. Hence
\[
  \int_0^M\mathbb{E}Q(s)\,ds
  \leq\Delta\frac{A_0}{\Delta^2}
  +A_0\int_\Delta^M\frac{ds}{s^2}
  \leq\frac{2A_0}{\Delta}.
\]
The time-summed form follows from
the same finite-sum/Tonelli interchange.
\end{proof}

In Step~3 of Lemma~\ref{lem:one-phase-empirical-regret}, Equation~\eqref{eq:gap-mass-series} verifies the
time-summed hypothesis with $M=m$ and $A_0=404m^2\Lambda/\rho$.

\subsection{Max-increment and importance-weighted overshoot}
\label{app:max-increment-overshoot}

\begin{lemma}[One-step bound for a maximum of cumulative sums]
\label{lem:running-max-increment}
Let $\mathcal{A}$ be a nonempty finite
set, let $v_t(a)\in\mathbb{R}$, and set
\[
  V_n:=\max_{a\in\mathcal{A}}\sum_{t=1}^n v_t(a),
  \qquad U_n:=[V_n]_+,
  \qquad V_0=U_0=0.
\]
Then, for every $n\geq1$,
\begin{align}
  V_n-V_{n-1}
  &\leq\max_{a\in\mathcal{A}}v_n(a),
\label{eq:running-max-increment}\\
  U_n-U_{n-1}
  &\leq\left[\max_{a\in\mathcal{A}}v_n(a)\right]_+.
\label{eq:positive-running-max-increment}
\end{align}
The bounds also hold for affine $v_t$ on
a compact polytope, by maximizing over its vertices.

In addition, let $z$ and every comparator $x$
be coordinatewise nonnegative, and let $h\geq0$ coordinatewise. Then
\begin{equation}
  \max_x\langle z-x,h\rangle\leq\langle z,h\rangle.
\label{eq:nonnegative-comparator-drop}
\end{equation}
If a binary random vector $X$ has arm-selection
marginal $p$, if $p\geq\beta z$ coordinatewise for $\beta\in[0,1]$, if
$\ell\geq0$ coordinatewise, and if $\widehat\ell_i=X_i\ell_i/p_i$ on coordinates
with $p_i>0$ (and $\widehat\ell_i:=0$ when $p_i=0$), then
\begin{equation}
  \beta\max_x\langle z-x,\widehat\ell\rangle
  \leq\langle p,\widehat\ell\rangle
  =\langle X,\ell\rangle.
\label{eq:importance-weighted-overshoot}
\end{equation}
Coordinates with $p_i=0$ are unselected almost surely
and contribute zero to both sides.
\end{lemma}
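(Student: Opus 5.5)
The proof is entirely elementary and deterministic, so I would argue pathwise for each index $n$. For \eqref{eq:running-max-increment}, let $a_n$ attain the maximum defining $V_n$; such a maximizer exists because $\mathcal A$ is finite. Then
\[
V_n=\sum_{t<n}v_t(a_n)+v_n(a_n)\le V_{n-1}+\max_a v_n(a).
\]
For $n=1$ the convention $V_0=0$ makes this read $V_1\le\max_a v_1(a)$, which holds with equality. For \eqref{eq:positive-running-max-increment}, I would use that $r\mapsto[r]_+$ is nondecreasing and subadditive, with $[r+s]_+\le[r]_++[s]_+$. Applying this gives
\[
U_n\le\bigl[V_{n-1}+\max_a v_n(a)\bigr]_+\le U_{n-1}+\bigl[\max_a v_n(a)\bigr]_+.
\]
For the polytope extension, an affine function attains its maximum over a compact polytope at a vertex. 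Hence each $V_n$ equals the corresponding maximum over the finite vertex set, and the finite case applies verbatim.

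For \eqref{eq:nonnegative-comparator-drop}, the key observation is that $\langle x,h\rangle\ge0$ for every admissible comparator $x$, since both $x$ and $h$ are coordinatewise nonnegative. Dropping this term gives $\langle z-x,h\rangle\le\langle z,h\rangle$ for each $x$, and taking the supremum yields the claim.

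For \eqref{eq:importance-weighted-overshoot}, I would first apply \eqref{eq:nonnegative-comparator-drop} with $h=\widehat\ell\ge0$. This reduces the claim to showing $\beta\langle z,\widehat\ell\rangle\le\langle p,\widehat\ell\rangle$. That inequality holds coordinatewise, because $\beta z_i\le p_i$ and $\widehat\ell_i\ge0$. On a coordinate with $p_i>0$ we have $p_i\widehat\ell_i=X_i\ell_i$. On a coordinate with $p_i=0$ we have $\widehat\ell_i=0$ by convention, and $X_i=0$ almost surely because the marginal is zero; so both sides vanish there. Summing over coordinates gives $\langle p,\widehat\ell\rangle=\langle X,\ell\rangle$ almost surely.

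No step is a real obstacle. The only point needing care is the boundary bookkeeping: keeping the convention $V_0=0$ consistent in the $n=1$ case, and noting that the final identity holds almost surely rather than surely because of the $p_i=0$ coordinates.
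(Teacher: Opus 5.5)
Your proof is correct and follows essentially the same route as the paper's: a maximizer of $V_n$ gives the first increment bound, an elementary positive-part inequality gives the second (your subadditivity step is equivalent to the paper's $[r]_+-[s]_+\le[r-s]_+$), and the last two claims follow by dropping the nonnegative comparator term and comparing $\beta z_i\le p_i$ coordinatewise. Your extra care with the $n=1$ convention and with the almost-sure identity on coordinates where $p_i=0$ is accurate.
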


\begin{yellowbox}
\textbf{Role.} This controls threshold overshoots when the empirical
comparator changes. \textbf{Proof~idea.} Compare the previous cumulative sum
at the current maximizer and use subadditivity of
the positive part. Nonnegative losses let us drop
the comparator term; $p\geq\beta z$ then bounds the
weighted estimate by the observed action loss.
\end{yellowbox}

\begin{proof}
Choose a maximizer $a_n$ for $V_n$. Then
\[
  V_n=\sum_{t=1}^{n-1}v_t(a_n)+v_n(a_n)
  \leq V_{n-1}+\max_{a\in\mathcal{A}}v_n(a),
\]
which is~\eqref{eq:running-max-increment}. Using $U_j=[V_j]_+$, we now have
\begin{align*}
  U_n-U_{n-1}
  &=[V_n]_+-[V_{n-1}]_+\\
  &\leq[V_n-V_{n-1}]_+\\
  &\leq\left[\max_{a\in\mathcal A}v_n(a)\right]_+,
\end{align*}
which is~\eqref{eq:positive-running-max-increment}, using $[r]_+-[s]_+\leq[r-s]_+$ and
monotonicity of the positive part.

Coordinatewise nonnegativity gives $\langle z-x,h\rangle\leq\langle z,h\rangle$. Since $p_i\geq\beta z_i$ and $\widehat\ell_i\geq0$,
\[
  \beta\langle z,\widehat\ell\rangle
  \leq\langle p,\widehat\ell\rangle
  =\sum_iX_i\ell_i.
\]
This proves Equations~\eqref{eq:nonnegative-comparator-drop} and~\eqref{eq:importance-weighted-overshoot}.
\end{proof}

\clearpage
\section{Well-definedness of the pure-phase update}
\label{app:supplementary-note}
We justify the pure-phase optimizer, potential
gradient, and positive importance-weighting denominators.

\begin{lemma}[Well-defined pure update]\label{lem:pure-interior}
For $\eta>0$ and $y\in\mathbb R^{D_{\mathrm{var}}}$, define
\[
  \Phi_\eta(y)
  :=\max_{z\in\conv(\widetilde\cX_0)}
  \{\langle z,y\rangle-\eta^{-1}\Psi(z)\}.
\]
Its maximizer is unique and lies strictly between
zero and one on every nonconstant coordinate. Moreover,
$\nabla\Phi_\eta(y)$ equals this maximizer, and the restored original
marginal is strictly positive on every original arm.
\end{lemma}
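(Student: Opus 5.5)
The plan is to treat $\Phi_\eta$ as the convex conjugate of $\eta^{-1}\Psi+\iota_{\mathcal Q}$, with $\mathcal Q=\conv(\widetilde\cX_0)$, and to derive all four claims from properties of $\Psi$ on this polytope.

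\emph{Existence and uniqueness.} The objective $z\mapsto\langle z,y\rangle-\eta^{-1}\Psi(z)$ is continuous on the compact set $\mathcal Q\subseteq[0,1]^{D_{\mathrm{var}}}$, using the conventions $0\log0=0$ and $\sqrt0=0$. A maximizer therefore exists. Each scalar map $v\mapsto-\sqrt v+(1-v)\log(1-v)$ has second derivative $\tfrac14v^{-3/2}+(1-v)^{-1}>0$ on $(0,1)$ and is continuous on $[0,1]$, so it is strictly convex there. Hence $\Psi$ is strictly convex, the objective is strictly concave, and the maximizer $z^\ast$ is unique.

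\emph{Interiority.} Fix a coordinate $i$ that is nonconstant on $\widetilde\cX_0$ (all retained coordinates are nonconstant). There are actions $a,a'\in\widetilde\cX_0$ with $a_i=1$ and $a'_i=0$, so $\mathcal Q$ contains points with $z_i$ at both $0$ and $1$. Suppose $z^\ast_i=0$. Take $w\in\mathcal Q$ with $w_i>0$, for instance the barycenter of $\widetilde\cX_0$, which is interior in every coordinate, and move along $z^\ast+\epsilon(w-z^\ast)$. The directional derivative of $-\eta^{-1}\sqrt{z_i}$ at $z_i=0$ in this direction is $+\infty$. The other coordinates contribute a finite or favorable amount: the entropy part has derivative $-\log(1-z_j)-1$, which tends to $+\infty$ as $z_j\to1$, and that sign also favors moving away from $1$. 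Square-root terms with $z^\ast_j=0$ contribute only further $+\infty$ gains. So the objective strictly increases for small $\epsilon$, contradicting optimality. The case $z^\ast_i=1$ is symmetric: $-\eta^{-1}(1-z_i)\log(1-z_i)$ has infinite slope toward the interior at $z_i=1$. Using the barycenter $w$, which has $0<w_j<1$ for every $j$, handles all boundary coordinates at once.

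I expect the main care to lie in this step. The difficulty is to check that, along the direction $w-z^\ast$, every coordinate at the boundary contributes $+\infty$ with the correct sign and every interior coordinate contributes a finite amount, so that the one-sided derivative is $+\infty$.

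\emph{Gradient.} Since the maximizer is unique and the objective is affine in $y$, Danskin's theorem gives that $\Phi_\eta$ is differentiable with $\nabla\Phi_\eta(y)=z^\ast(y)$. Equivalently, the subdifferential of the conjugate is the argmax set, which here is a singleton.

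\emph{Positive original marginals.} Undo the projection and padding. Retained coordinates have marginal in $(0,1)$. A deleted coordinate that was constant equal to one on $\widetilde\cX$ has marginal one, and an original arm cannot be constant equal to zero, because coverage gives $\bigcup_X\supp(X)=[d]$. Every original arm therefore has strictly positive restored marginal $b_{t,i}$.
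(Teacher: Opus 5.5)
Your proof is correct and follows essentially the same route as the paper: compactness for existence, strict concavity of the objective for uniqueness, an inward perturbation that exploits the infinite boundary slopes of $\psi$ for interiority, uniqueness of the maximizer for the gradient identity, and coverage to rule out constant-zero original arms. The differences are only technical variants: you obtain uniqueness from strict convexity on the closed cube before interiority is known, you use the barycenter to treat all boundary coordinates in one perturbation rather than a vertex, and you invoke Danskin's theorem in place of the paper's sandwich $\langle b,h\rangle\leq\Phi_\eta(y+h)-\Phi_\eta(y)\leq\langle b_h,h\rangle$. One small sign slip: the relevant objective term is $+\eta^{-1}\sqrt{z_i}$ (coming from $-\eta^{-1}\Psi$), not $-\eta^{-1}\sqrt{z_i}$, and it is the former whose right derivative at $z_i=0$ is the $+\infty$ you use.
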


\begin{yellowbox}
\textbf{Role.} This justifies pure-phase importance weighting and
the potential-gradient update. \textbf{Proof~idea.} Moving a boundary
optimizer inward gives a regularizer gain that
dominates linear costs. Strict concavity gives uniqueness;
objective comparisons give the gradient. Coverage ensures
restored constant original coordinates have marginal one.
\end{yellowbox}
\begin{proof}
The maximum exists by continuity on
the compact projected polytope. Write
\[
  \psi(s):=-\sqrt s+(1-s)\log(1-s),
\]
with the continuous convention $0\log0=0$, so that
$\Psi(z)=\sum_j\psi(z_j)$. Fix a maximizer $b$ of
\[
  F(z):=\langle z,y\rangle-\eta^{-1}\Psi(z)
\]
over the projected polytope, and fix
a nonconstant coordinate $i$. On $(0,1)$,
\[
  \psi''(s)=\frac1{4s^{3/2}}+\frac1{1-s}>0.
\]
Thus $F$ is strictly concave in the open
cube. The endpoint argument below places every maximizer
there, proving uniqueness. Let $b_h$ maximize the objective
at $y+h$. Compactness and uniqueness give $b_h\to b$: every
convergent subsequence maximizes the limiting objective. Testing $b$
at $y+h$ and $b_h$ at $y$ yields
\[
  \langle b,h\rangle\leq\Phi_\eta(y+h)-\Phi_\eta(y)\leq\langle b_h,h\rangle.
\]
Thus the difference from $\langle b,h\rangle$ is
$o(\lVert h\rVert)$, proving the gradient identity.

If $b_i=0$, choose a vertex $a$ with $a_i=1$
and set $z(u)=(1-u)b+ua$. For small $u$, bounded derivatives
give total regularizer change over interior coordinates at
most $C_bu$ in absolute value. Boundary coordinates
move inward: if $b_j=0$ and $a_j=1$, then
\begin{equation}
  -\psi(u)+\psi(0)
  =\sqrt u-(1-u)\log(1-u)
  \ge\sqrt u,
  \label{eq:psi-zero-endpoint}
\end{equation}
whereas if $b_j=1$ and $a_j=0$, then, for $0<u\le e^{-1}$,
\begin{equation}
  -\psi(1-u)+\psi(1)
  =\sqrt{1-u}-1-u\log u
  \ge u(-1-\log u)\ge0;
  \label{eq:psi-one-endpoint}
\end{equation}
Here we used $\sqrt{1-u}\ge1-u$. Consequently, with $C_y:=|\langle a-b,y\rangle|$,
\[
  F(z(u))-F(b)
  \ge\eta^{-1}\sqrt u-(C_y+\eta^{-1}C_b)u>0
\]
for all sufficiently small $u>0$, contradicting maximality.

If $b_i=1$, choose $a$ with
$a_i=0$ and use~\eqref{eq:psi-one-endpoint} instead:

\[
  F(z(u))-F(b)
  \ge\eta^{-1}u(-1-\log u)-(C_y+\eta^{-1}C_b)u>0
\]
for small $u$, again contradicting maximality. Thus
every retained coordinate lies in $(0,1)$. Coverage
excludes constant-zero original arms, so restored constant
coordinates have marginal one. Hence $b_{t,i}>0$ for
all original arms, as required by Definition~\ref{def:base-interface}.

\end{proof}

\clearpage
\section{Extended related work}
\label{app:related-work}

We relate Theorem~\ref{thm:main-result} to semi-bandit learning,
baseline constraints, and stochastic--adversarial adaptation, distinguishing
their feedback, comparators, and guarantees.

\subsection{Combinatorial decisions and feedback models}

Combinatorial action sets can be exponentially large in
the number of components. \citet{mainbib:cesabianchi2012combinatorial} study online linear
optimization with aggregate bandit feedback, which reveals only
the total action loss. Semi-bandit feedback reveals each
selected component's loss. \citet{mainbib:audibert2014regret} analyze full-information, semi-bandit, and
aggregate-bandit feedback, obtaining minimax-optimal adversarial semi-bandit guarantees via
online stochastic mirror descent for an equal-cardinality action
model. Neither work imposes our baseline constraint.

Our mixed learner uses exponential weights, whose
full-information and adversarial-bandit foundations are given by
\citet{mainbib:freund1997decision} and \citet{mainbib:auer2002nonstochastic}, respectively. Baseline coverage bounds
every arm's observation probability below during mixed
phases, enabling entropy-FTRL analysis. We separately prove
the estimates needed at random phase endpoints.

\subsection{Stochastic combinatorial learning and instance dependence}

\citet{mainbib:gai2012combinatorial} obtain logarithmic stochastic regret for linear combinatorial
rewards by sharing observations across actions. \citet{mainbib:chen2013combinatorial}
allow nonlinear rewards under monotonicity and bounded
smoothness, with approximation-oracle guarantees measured against an
approximation benchmark rather than an exact optimum.

For linear semi-bandits, \citet{mainbib:kveton2015tight} analyze CombUCB1 and give
gap-dependent and gap-independent bounds matching lower bounds up
to constants and logarithmic factors, respectively. Their reward
components may be dependent. \citet{mainbib:combes2015revisited} derive instance-specific asymptotic
lower bounds and ESCB under independent Bernoulli rewards;
their separate adversarial aggregate-bandit analysis does not
give one semi-bandit algorithm for both environments.

\citet{mainbib:degenne2016covariance} use a known sub-Gaussian covariance upper bound
in OLS-UCB, obtaining correlation-dependent regret and lower bounds.
\citet{mainbib:wang2018thompson} prove gap-dependent logarithmic Thompson-sampling regret under
independent outcomes and exact optimization, with a
separate matroid result relaxing independence. \citet{mainbib:perrault2020statistical} improve
action-size dependence for independent bounded outcomes and
analyze Gaussian sampling for multivariate sub-Gaussian outcomes.

We allow dependence within loss vectors, but require
i.i.d. vectors independent of learner randomness and a
unique optimal action in the full action set.
Our simultaneous baseline and adversarial guarantees introduce
coverage and restart factors; they do not
improve these unconstrained rates or oracle guarantees.

\subsection{Conservative exploration relative to a baseline}

\citet{mainbib:evendar2008regret} obtain constant regret to a designated expert
distribution and near-square-root best-expert regret under full information.
Their PhasedAggression algorithm mixes with the baseline, increases
the learning weight, and restarts based on observed
performance. \citet{mainbib:muller2025best} obtain constant expected baseline regret and
near-square-root adversarial regret under bandit feedback, assuming positive
baseline probability for every action. Their known-horizon algorithm
uses a fixed restart threshold and directly motivates
\Duet{}. Its empirical tests adapt both the mixing
weight and the regret budget. With arm-level feedback
and coverage, \Duet{} also obtains gap-dependent polylogarithmic stochastic
regret without knowing the horizon.
Our baseline bound is logarithmic rather than
constant; their separate stochastic bound is gap-independent.

\citet{mainbib:wu2016conservative} study cumulative baseline-performance constraints in stochastic and
adversarial bandits, including expected and high-probability stochastic guarantees.
\citet{mainbib:kazerouni2017conservative} extend this approach to contextual linear bandits:
CLUCB preserves a prescribed fraction of baseline
performance uniformly over time, with separate learning
and conservative exploration costs. \citet{mainbib:garcelon2020improved} develop CLUCB2
and constraints checked only at predetermined checkpoints.

\citet{mainbib:du2021one} give a framework for conservative multi-armed, linear,
and contextual combinatorial bandits with realized cumulative reward
constraints and a default arm of prescribed deterministic
reward. \citet{mainbib:yang2022reduction} derive upper and lower bounds by
reducing unconstrained bandits and reinforcement learning to conservative
learning, using the baseline's exploration budget. Their
budget perspective relates to our accounting argument.

Our constraint is an additive, logarithmic bound
on expected cumulative loss relative to a
fixed baseline distribution. Arm coverage also makes
baseline observations useful for learning. These differences
in constraint and feedback require checking a
conservative wrapper's assumptions before applying its guarantee.

\subsection{Structured conservative and safe semi-bandits}

\citet{mainbib:katariya2019conservative} interleave exploration with a baseline action in
stochastic combinatorial semi-bandits. Exchangeable action spaces and
itemwise comparisons give an action-level constraint and
gap-dependent logarithmic regret. Our action set is
arbitrary and our baseline is a distribution.

\citet{mainbib:hou2023probably} require every selected action's sum of
component variances to stay below a budget,
simultaneously with high probability, and provide regret
and information-theoretic lower bounds. Our expected cumulative
baseline-regret bound implies neither this variance constraint
nor the itemwise safety guarantee above.

\subsection{Cumulative constraints and resource budgets}

\citet{mainbib:mahdavi2012trading} replace per-round convex feasibility with aggregate constraints
and analyze regret--violation tradeoffs via online convex-concave optimization.
Their extension uses two-point constraint feedback while observing
the loss functions. Both the constraints and
feedback differ from our loss-dependent baseline comparison.

\citet{mainbib:badanidiyuru2018bandits} introduce bandits with knapsacks, where limited resources
constrain achievable reward. \citet{mainbib:sankararaman2018combinatorial} add combinatorial actions and
semi-bandit feedback. Our budget concerns a signed loss
difference: outperforming the baseline contributes negatively. This quantity
and its comparator differ from standard knapsack accounting.

\subsection{Stochastic and adversarial adaptation}

Best-of-both-worlds algorithms combine adversarial robustness with faster stochastic
regret. \citet{mainbib:bubeck2012best} test consistency with a stochastic model.
\citet{mainbib:seldin2014practical} obtain stochastic and oblivious-adversarial guarantees without knowing
the horizon, using EXP3 with arm-dependent exploration.

\citet{mainbib:auer2016nearly} sharpen simultaneous pseudo-regret bounds and separate
pseudo-regret from expected hindsight regret under adaptive
adversaries. For random comparator-dependent regrets $D_T(X)$,
\[
 \max_{X\in\cX}\E D_T(X)
 \leq \E\max_{X\in\cX}D_T(X),
\]
so controlling the left side does not control
the right. Our final comparator maximum is outside
expectation; the estimated positive maximum used in
the proof is bounded separately. Applying the
separation of \citet{mainbib:auer2016nearly} to this certificate would
require matching the comparator, adversary, and rates.

\citet{mainbib:xu2021simple} obtain corruption-dependent stochastic semi-bandit regret and
approximation-oracle guarantees with an approximation-regret benchmark. Our
i.i.d. assumption does not cover adversarial corruptions,
and our benchmark uses exact feasible actions.

\subsection{Adaptive regularization and hybrid FTRL}

\citet{mainbib:wei2018adaptive} use optimistic mirror descent and log-barrier regularization
for data-dependent bounds, including stochastic fast rates and
combinatorial semi-bandit results; different guarantees may require different
instantiations. \citet{mainbib:zimmert2021tsallis} develop Tsallis-INF and self-bounding analysis for
stochastic, adversarial, and certain corrupted stochastic multi-armed bandits.

The pure-phase learner in \Duet{} follows the hybrid-FTRL method of
\citet{mainbib:zimmert2019beating}, which combines logarithmic stochastic and square-root
adversarial semi-bandit pseudo-regret without knowing the regime
or horizon. We apply its regularizer after
the reduction in Section~\ref{sec:base-learner} and prove a
monotone empirical-regret majorant for the resulting implementation.

\citet{mainbib:ito2021hybrid} combine stochastic, corrupted-stochastic, and adversarial guarantees with
loss-sequence measures. \citet{mainbib:tsuchiya2023further} refine regularization and learning rates
for semi-bandits, including variance-dependent stochastic bounds. \citet{mainbib:ito2024adaptive} study
stability--penalty matching for FTRL learning rates; \citet{mainbib:nguyen2025data} use
it for further data-dependent and best-of-both-worlds multi-armed bandit
guarantees. Replacing the base learners in \Duet{} with
these unconstrained refinements would require verifying their stopped
empirical-maximum bounds.

\subsection{Perturbation methods and contextual extensions}

\citet{mainbib:zhan2025nearly} use follow-the-perturbed-leader with shape-two Fr\'echet perturbations
for $m$-set semi-bandits. They obtain logarithmic stochastic
regret with a unique optimum and near-best-of-both-worlds
guarantees when $m\leq d/2$. Our arbitrary finite action
sets and baseline constraint require further arguments.

\citet{mainbib:li2026efficient} study contextual combinatorial semi-bandits with Shannon-entropy FTRL
and efficient projection. Their linear model uses
independent bounded contexts with positive-definite covariance, additional
independent context samples for estimation, and adversarial
loss coefficients independent of the action sequence.
We use no contexts and impose expected
regret constraints relative to a covering baseline.

\subsection{Computation, alternative adaptation, and optimal actions}

\citet{mainbib:neu2016importance} combine geometric resampling and follow-the-perturbed-leader for oracle-efficient
adversarial semi-bandits without computing all sampling probabilities. Our
implementation enumerates actions. \citet{mainbib:neu2015firstorder} obtain first-order regret bounds
adapting to the best action's cumulative loss.
This differs from gap adaptation: that loss
may grow linearly despite a positive gap.
\citet{mainbib:lykouris2018small} give high-probability small-loss bounds for actual
bandit and semi-bandit regret, a stronger probability
guarantee than ours, without our baseline constraint.

\citet{mainbib:jin2023improved} establish best-of-both-worlds FTRL guarantees with general regularizers
and multiple optimal arms in multi-armed bandits. Our
certificate and restart analysis retains a unique optimal
combinatorial action and a positive minimum gap;
those results do not remove this assumption.

\subsection{The certificate required by baseline-constrained phases}

Mixed sampling changes observation probabilities, empirical tests select
phase endpoints, and past observations determine restarts. A
fixed-length expected-regret bound alone does not control the
positive empirical comparator maximum at those endpoints.

Lemma~\ref{lem:one-phase-empirical-regret} proves this control using coverage and entropy
FTRL in mixed phases, and a nonnegative, nondecreasing
hybrid-FTRL majorant in pure phases. Lemma~\ref{lem:conditional-restart} justifies
reuse after data-dependent restarts. Our martingale estimates
use the concentration framework of \citet{mainbib:freedman1975tail}; \citet{mainbib:lattimore2020bandit}
provide background on regret, importance weighting, and
concentration. Appendix~\ref{app:probability-tools} states the versions used here.

\Duet{} satisfies the baseline, adversarial, and stochastic objectives
at every fixed horizon without taking that horizon
as input. The guarantees
are neither simultaneous high-probability baseline bounds nor
regret bounds at arbitrary data-dependent stopping times.


%
\end{document}